\documentclass[11pt]{article}

\usepackage[letterpaper,margin=1in]{geometry}
\usepackage[T1]{fontenc}
\usepackage[utf8]{inputenc}
\usepackage{lmodern}
\usepackage[round,authoryear]{natbib}
\usepackage{amsmath,amssymb,amsthm,mathtools}
\usepackage{enumitem}
\usepackage{algorithm}
\usepackage{algpseudocode}
\usepackage{placeins}
\usepackage{needspace}
\usepackage[hidelinks]{hyperref}
\usepackage[nameinlink,capitalize]{cleveref}
\usepackage{microtype}
\usepackage{graphicx}
\usepackage{tikz}
\usetikzlibrary{arrows.meta,calc,positioning}
\makeatletter
\providecommand{\theHALG@line}{}
\renewcommand{\theHALG@line}{\thealgorithm.\arabic{ALG@line}}
\makeatother

\newtheorem{theorem}{Theorem}
\newtheorem{lemma}{Lemma}

\newtheorem{definition}{Definition}
\newtheorem{remark}{Remark}
\newtheorem{corollary}{Corollary}

\newcommand{\DeltaD}{\Delta_{n-1}}
\newcommand{\calG}{\mathcal G}
\newcommand{\barG}{\overline{\mathcal G}}

\newcommand{\calR}{\mathcal R}

\newcommand{\calF}{\mathcal F}

\newcommand{\one}{\mathbf 1}

\newcommand{\E}{\mathbb E}
\newcommand{\dist}{\operatorname{dist}}
\newcommand{\conv}{\operatorname{conv}}

\newcommand{\GlobPack}{\mathsf G}
\newcommand{\LocPack}{\mathsf D}
\newcommand{\MCerr}{\operatorname{MCerr}}

\usepackage{xcolor}
\title{Instance-Adaptive Online Multiclass Multicalibration}
\author{\normalsize Zhiming Huang$^{1}$ \quad Jamie Morgenstern$^{1,3}$ \quad
Aaron Roth$^{2}$ \quad Claire Jie Zhang$^{1}$\\[0.6em]
\small $^{1}$Paul G. Allen School of Computer Science and Engineering, University of Washington\\[0.3em]
\small $^{2}$Department of Computer and Information Sciences, University of Pennsylvania\\[0.3em]
\small $^{3}$Amazon}
\date{}
\hypersetup{
    pdftitle={Instance-Adaptive Online Multiclass Multicalibration},
    pdfauthor={Zhiming Huang, Jamie Morgenstern, Aaron Roth, Claire Jie Zhang}
}
\begin{document}
\maketitle

\begin{abstract}
We study online multiclass multicalibration over $T$ rounds, and give an algorithm with beyond-worst-case guarantees. With $n$ possible outcomes, forecasts lie in a simplex of dimension $d=n-1$. In the worst case, cumulative $\ell_1$ multicalibration error scales with $\widetilde\Theta(T^{(d+1)/(d+2)})$, which becomes untenable once $d$ exceeds a small constant. Our algorithm adapts to the geometry of the realized sequence: if the sequence of adversarially selected  outcome means lies in an affine subspace of dimension $s$, it obtains error $\widetilde O(n^{1/(s+2)}T^{(s+1)/(s+2)})$ with high probability. The algorithm adapts to this structure without knowing the subspace or its dimension. For \(T\ge n\ge2\), this rate is minimax optimal up to polylogarithmic factors. For a stochastic instance with a fixed conditional mean, the bound specializes to $\widetilde O(\sqrt{nT})$, giving a square-root dependence on the horizon.
\end{abstract}

\section{Introduction}
Consider forecasting which of \(n\) possible decision-relevant states will occur,
based on an observed context \(x\in\mathcal X\).  For example, predicting which token will occur next in a sequence,
where the context is the preceding token sequence.
A forecast is a probability vector within the simplex of dimension \(d=n-1\); in our example, this would correspond to a vector of probabilities of the subsequent token.
Calibration requires that, conditional on a forecast, the frequencies of the realized states match the forecast probabilities \citep{dawid1982well}.
Calibrated forecasts form a transparent interface between predictions and decision making: for every downstream decision maker (whatever their utility function or action set), their best prediction-to-action policy is to take the forecasts at face value and best respond to them as if they are correct \citep{foster1997calibrated,kleinberg2023ucalibration,noarov2025high}.
Multicalibration requires calibration simultaneously within a given collection of possibly overlapping groups of contexts in \(\mathcal X\)~\citep{hebert2018multicalibration}, and extends this guarantee to policies whose context dependence is captured by those groups.

Much of the algorithmic work on (multi)calibration focuses on forecasting probabilities of \emph{binary} outcomes, for which \(d=1\).
In high-dimensional settings, worst-case multicalibration rates deteriorate sharply, both in the batch and online cases \citep{collina2026multicalibration,collina2026sample}.
Our error measure is the natural multiclass generalization of expected calibration error (ECE), in cumulative form: within each group, sum the \(\ell_1\) deviations between each forecast and its empirical outcome frequencies, weighted by how often that forecast occurs, then take the maximum over groups. The online worst-case rate for this multicalibration error is \(\widetilde\Theta(T^{(d+1)/(d+2)})\) in the number of rounds \(T\).
This is dispiriting both because many prediction tasks we care about are fundamentally multiclass, like language prediction, and because the decision-making motivation for calibration is much stronger when forecasts can describe a richer state space.
This motivates us to ask:

\begin{quote} \textit{
Can an online algorithm make predictions that are guaranteed to be multicalibrated against adversarial sequences, yet adapt to the structure of the realized sequence to obtain substantially better rates on easier instances?}
\end{quote}

We give a positive answer: our algorithm adapts to the geometry of the sequence of outcome distributions.
At every round, the learner observes a context and commits to a distribution over predictions.
The adversary chooses an outcome distribution that may depend on the history, the current context, and the learner's distribution over predictions, but not on its freshly sampled prediction.
The learner observes the sampled outcomes, but our error guarantee adapts to the geometric structure of their conditional means, which are not directly observed.
For categorical outcomes, each mean is the vector of outcome probabilities; our model also permits observations that are themselves probability vectors.
We say the sequence of conditional means
 has affine dimension at most \(s\) if, after a possible additive shift, it lies in a linear subspace of dimension \(s\).
Whenever this is the case, our algorithm achieves multicalibration error
\[
    \widetilde O\!\left(n^{1/(s+2)}T^{(s+1)/(s+2)}\right)
\]
with high probability (\Cref{thm:uniform-affine}).

Importantly, our algorithm adapts to this structure as
the sequence is realized, without knowing the subspace or its dimension. In the special case where the conditional outcome mean is fixed, the sequence of means consists of a single point and \(s=0\).
The bound becomes \(\widetilde O(\sqrt{nT})\), giving a square root dependence on the horizon even when the ambient dimension is large.
More generally, suppose each round's conditional outcome distribution is a mixture of \(k\) fixed, possibly full-support distributions, with adversarially varying mixture weights.
The conditional means lie in the convex hull of the component means, so \(s\le k-1\), regardless of \(n\).
At \(s=n-1\), the bound recovers the worst-case horizon exponent.
The guarantee also tolerates means lying close to a low-dimensional affine set, with the approximation tradeoff specified in \Cref{rem:affine-robustness}.
For every \(0\le s\le n-1\) and \(T\ge n\ge2\), \Cref{thm:affine-lower} gives a lower bound matching this rate in expectation, up to polylogarithmic factors, on an instance whose means have affine dimension at most \(s\), with
a suitably chosen family of \(O(\log^3(T+1))\)
many groups.
Its outcome distributions and groups are fixed before play.

Our algorithm learns its effective prediction range from observed outcomes.
For each group and available forecast, the algorithm estimates the frequencies of outcomes it observes, with each round contributing in proportion to the probability assigned to that forecast and its membership
weight
in the group.
It stores estimates at several levels of precision, adding a forecast to the range when no value already stored at that level is sufficiently close.

The algorithm combines this range discovery with machinery
for ensuring  second-order swap-regret minimization under a logarithmic proper scoring rule.
We require the pairwise comparison bounds defining swap regret to hold within every group and on every interval during which both forecasts are available. The guarantee of
low swap regret for a forecast close to the estimated outcome frequencies then allows us to bound calibration error. The formal argument requires more finesse as
 the relevant comparator (prediction) may only be discovered after the rounds on which it would have helped.
We form estimates from nested stretches of observations: longer stretches support finer distinctions, while shorter ones let the analysis use comparators available at their start.
With enough data, these estimates lie close to averages of the adversary's conditional means.
Their separation at each level means that each set forms a packing; this then implies that low-dimensional geometry bounds both the number of forecasts discovered and the error accumulated during discovery.
Finally, the algorithm uses its observed calibration error to choose how finely to distinguish forecasts.

\subsection{Related work}

\paragraph{Instance-adaptive calibration.} The most closely related works are
\citet{huang2026instance} and \citet{liu2026adaptive}, who obtain instance-adaptive calibration guarantees for one-dimensional predictions.
\citet{huang2026instance} preserve the worst-case \(\widetilde O(T^{2/3})\) cumulative error bound for binary multicalibration while improving on structured instances, including a \(\widetilde O(\sqrt T)\) bound for i.i.d. outcomes in the marginal setting, without context-defined groups.
\citet{liu2026adaptive} give scalar marginal calibration bounds that improve when the outcome mean stays nearly constant or changes only a few times.
We study multiclass multicalibration, where forecasts are probability vectors and worst-case rates deteriorate with the number of outcomes.
This is not a coordinatewise extension of scalar calibration: our guarantee conditions on the entire predicted vector, and the low-dimensional structure need not align with the coordinate axes.

\paragraph{Multiclass and high-dimensional calibration.}
For online marginal \(\ell_1\) calibration, \citet{peng2025high} and \citet{fishelson2025highdim} achieve any constant  average calibration error after a number of rounds polynomial in dimension. Like us, they construct forecasts from averages of past observations. Our guarantees additionally hold within groups and adapt to the affine dimension of the conditional means. For instances with low dimensional structure, we give improved rates.

In the batch setting, \citet{gopalan2024multiclass} achieve projected smooth calibration with both sample size and running time polynomial in the number of classes at fixed accuracy.
This guarantee ensures that, for every subset of labels, the total probability assigned to that subset is close to a calibrated binary predictor for membership in the subset. This is weaker than full calibration.

\paragraph{Worst-case rates.}
Even for binary marginal calibration, \citet{qiao2021stronger} rule out a worst-case \(\widetilde O(\sqrt T)\) cumulative error bound, while \citet{dagan2025breaking} obtain an upper bound below \(T^{2/3}\).
For multicalibration, \citet{collina2026multicalibration} establish a binary \(\widetilde\Omega(T^{2/3})\) cumulative error lower bound, with a forthcoming revision extending the bound to \(\widetilde\Omega(T^{(d+1)/(d+2)})\) in ambient dimension \(d\).
Our lower bound builds on their construction while separating the affine dimension of the conditional means from the number of outcome coordinates (\Cref{thm:affine-lower}).

\paragraph{Other online calibration methods.}
Randomization makes calibration possible against arbitrary outcome sequences \citep{foster1998asymptotic}, with internal- and swap-regret reductions providing an algorithmic route \citep{foster1997calibrated,blum2007external}.
\citet{gupta2022online} give online group-conditional guarantees for mean predictions, moments, and prediction intervals.
For scalar outcomes, \citet{noarov2025high,ghuge2025improved} obtain \(\widetilde O(T^{2/3})\) cumulative ECE for finite group families, while \citet{garg2024oracle} reduce multicalibration over infinite benchmark classes to online learning.
For scalar predictions, \citet{hu2026swap} use online-learning benchmarks to control squared calibration error; \citet{fishelson2025full} identify probability-weighted squared marginal calibration error with full swap regret.
Our prediction component combines the subsequence-regret framework of \citet{lee2022online} with the second-order comparison update of \citet{gaillard2014second}.

\section{Problem and results}

\subsection{Online multicalibration}
\label{def:uniform-model}

We denote \([q]=\{1,\ldots,q\}\) for any positive integer \(q\). Fix a horizon \(T\in\mathbb N_{\ge1}\), confidence level \(\delta\in(0,1)\), and number of outcome coordinates or classes \(n\ge1\).
A multiclass prediction lies in the simplex $\DeltaD=\left\{p\in\mathbb R_{\ge0}^{n}:\sum_{i=1}^{n}p_i=1\right\}$, whose vertices are the standard basis vectors \(e_1,\ldots,e_n\). Its coordinates give the probabilities assigned to the $n$ classes.

At each round \(t\), the learner observes a context \(x_t\in\mathcal X\) and announces a probability distribution \(\pi_t\) over \(\DeltaD\). It samples and announces a prediction \(p_t\sim\pi_t\) using fresh private randomness, and then observes an outcome \(y_t\in\DeltaD\).
Categorical observations are the special case \(y_t\in\{e_1,\ldots,e_n\}\); both the upper and lower bounds below use the model allowing any \(y_t\in\DeltaD\).

We assume an adversary generates the observations $y_t$ above, in a way which can depend upon
the past, on \(x_t\), and on the announced mixture \(\pi_t\), but not on the fresh sample.
Let \((\mathcal F_t^-)_{t=1}^T\) be the filtration, where \(\mathcal F_t^-\) is generated by the information available immediately before \(p_t\) is sampled, including the past, the current context \(x_t\), the learner’s internal state, and the announced distribution \(\pi_t\).
Write
\[
    \mu_t:=\E[y_t\mid \calF_t^-] .
\]
The fresh private prediction \(p_t\) is conditionally independent of \(y_t\) given \(\calF_t^-\).

Fix a known finite, nonempty family \(\calG\) of \emph{groups}, where each \(g\in\calG\) is a function
 \(g:\mathcal X\to[0,1]\). For any  \(x_t\in \mathcal X\), the learner can evaluate \(g(x_t)\) exactly for every \(g\in\calG\).
The realized multiclass multicalibration error of an algorithm on a sequence is defined as
\[
\MCerr_{\calG}^{(1)}(T)
:=
\max_{g\in\calG}
\sum_{r\in\{p_1,\ldots,p_T\}}
\left\|
\sum_{t:p_t=r}g(x_t)(y_t-r)
\right\|_1 .
\]
For a group of total weight \(W_g=\sum_tg(x_t)>0\), its term in the maximum is \(W_g\) times its weighted multiclass expected calibration error (ECE). We augment the set of groups to contain
the constant group \(\one(x)=1\)  and denote it as \(\barG=\calG\cup\{\one\}\).

\subsection{Adaptation to affine dimension}

We now introduce the machinery necessary to define when the sequence of conditional means has low effective dimension. This roughly translates to belonging to a low dimensional linear subspace, though we allow for the subspace to be offset from the origin.

\begin{definition}[Affine dimension]
Let \(S\subseteq\mathbb R^n\) be nonempty.
Its affine dimension, denoted \(\operatorname{affdim}(S)\), is the dimension of the smallest affine subspace containing \(S\).
An affine subspace is a translate \(a+V\) of a linear subspace \(V\), and has dimension \(\dim V\).
\end{definition}

For example, a singleton has affine dimension \(0\), a nondegenerate line segment has affine dimension \(1\), and a planar set not contained in a line has affine dimension \(2\).
The simplex \(\DeltaD\) has affine dimension \(n-1\). These sets can also be translated away from the origin by any constant vector and have the same affine dimension.

The following theorem bounds our algorithm's multicalibration error in terms of the affine dimension of the conditional means.

\begin{theorem}[Uniform affine-dimensional upper bound]\label{thm:uniform-affine}
With the adaptive rule described in Section~\ref{sec:iclr-controller}, \Cref{alg:fixed-depth} takes inputs \(T, n, \delta, \calG\) and has the following guarantee against every adaptive adversary.
Let \(s=\operatorname{affdim}\{\mu_1,\ldots,\mu_T\}\), which may depend on the realized history.
With probability at least \(1-\delta\), the realized predictions satisfy
\begin{equation}
\label{eq:uniform-affine-upper}
\MCerr^{(1)}_{\calG}(T)
\le C
\left[\log\!\left(\frac{eTn|\barG|}{\delta}\right)\right]^2
n^{1/(s+2)}T^{(s+1)/(s+2)},
\end{equation}
where \(C>0\) is a universal constant.
\end{theorem}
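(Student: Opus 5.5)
The proof splits the calibration error at a fixed precision level into a swap-regret term and a discovery term, and then removes the dependence on the unknown \(s\) through the adaptive controller.

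\textbf{Fixed-level decomposition.} Fix a precision level \(\epsilon\) (a dyadic scale among the \(O(\log T)\) levels maintained by \Cref{alg:fixed-depth}). For a group \(g\) and a forecast \(r\), write the deviation \(\sum_{t:p_t=r}g(x_t)(y_t-r)\) as a martingale part \(\sum g(x_t)(y_t-\mu_t)\) plus a drift part \(\sum g(x_t)(\mu_t-r)\). The martingale part over all \((g,r)\) pairs is handled by a Freedman-type bound for vector martingales, a union bound over \(\barG\), coordinates and stretches, and the fact that \(p_t\) is conditionally independent of \(y_t\) given \(\calF_t^-\). This replaces realized predictions by the mixtures \(\pi_t\) at a cost of \(\widetilde O(\sqrt{(\text{\#forecasts})\,T})\). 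The drift part is controlled by the second-order swap-regret guarantee of the log-score comparison layer (Lee et al.\ subsequence regret combined with Gaillard et al.\ second-order updates). Suppose a comparator \(r'\) is available on an interval and lies within \(\epsilon\) of the \(\pi\)-weighted group average of the \(\mu_t\) on the rounds assigned to \(r\). Then the log-loss regret of \(r\) against \(r'\) lower-bounds the squared Hellinger/\(\ell_1\) distance times the weight, by Pinsker. Hence
\[
\Bigl\|\sum g\pi_t(r)(\mu_t-r)\Bigr\|_1 \lesssim \epsilon W_{g,r}+\sqrt{W_{g,r}\cdot\mathrm{Reg}}.
\]
Summing and applying Cauchy--Schwarz over the \(N_\epsilon\) discovered forecasts gives \(\epsilon T+\widetilde O(\sqrt{N_\epsilon T})\).

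\textbf{Packing and discovery.} At level \(\epsilon\) the stored estimates are pairwise \(\epsilon\)-separated. Once a stretch has enough data (length \(\gtrsim n/\epsilon^2\) up to logs, so that the \(\ell_1\) concentration of \(n\)-dimensional averages is at scale \(\epsilon\)), each estimate lies within \(\epsilon/4\) of an average of the \(\mu_t\), that is, of a point of \(\conv\{\mu_t\}\). This convex hull sits in an \(s\)-dimensional affine slice of the simplex with \(\ell_1\) diameter at most 2. A volumetric argument therefore gives \(N_\epsilon\lesssim (C/\epsilon)^s\). The relevant comparator may be discovered only after the rounds where it is needed. To handle this, I use the nested stretches: split time into dyadic blocks, and within each block compare against forecasts that existed at its start. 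Each discovery event can then only charge the blocks preceding it, which costs \(O(\log T)\) extra factors plus at most \(N_\epsilon\cdot n/\epsilon^2\) rounds of unresolved error at scale \(O(1)\).

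\textbf{Balancing and adaptivity.} The fixed-level bound is
\[
\epsilon T+\sqrt{\epsilon^{-s}T}+n\epsilon^{-s-1}
\]
(times logs). The choice \(\epsilon\approx (n/T)^{1/(s+2)}\) makes \(\epsilon T\approx n^{1/(s+2)}T^{(s+1)/(s+2)}\). The discovery term is \(n\epsilon^{-s-1}=n^{1/(s+2)}T^{(s+1)/(s+2)}\), matching, and the square-root term is dominated. Since \(s\) is unknown, the controller of Section~\ref{sec:iclr-controller} runs a doubling/selection over levels driven by the observed calibration error. I will show that it never commits to a level much coarser or finer than this optimal \(\epsilon\) for long. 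A level is abandoned only when its certified error exceeds its predicted bound, which on the good event happens only for levels finer than optimal. The total overhead is then a logarithmic factor, giving the \(\log^2\) in \eqref{eq:uniform-affine-upper}.

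\textbf{Main obstacle.} The hardest step is the delayed-comparator charging: making rigorous that rounds played before the right forecast exists contribute only \(n\epsilon^{-s-1}\) rather than \(\epsilon\)-independent linear error. My first attempt would be a potential argument. The argument associates each round's drift to the level-\(\epsilon\) packing cell containing the current running group average. It then shows that a cell is populated within \(O(n/\epsilon^2)\) weighted rounds of first being visited, using concentration over the nested stretches with a union bound over all \(O(T^2)\) intervals.
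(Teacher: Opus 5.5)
\textbf{There is a genuine gap, and it sits exactly at the step you flag as the main obstacle.} Your sketch for handling comparators discovered late does not produce the term $n\epsilon^{-s-1}$ that you then balance.

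Taken literally, your accounting charges $N_\epsilon\cdot n/\epsilon^2$ rounds of order-one error. That is $n\epsilon^{-(s+2)}$, which balances against $\epsilon T$ to give $n^{1/(s+3)}T^{(s+2)/(s+3)}$, the rate for dimension $s+1$.

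The accounting also undercounts. Candidate values are shared across pairs, but blocks and comparators belong to individual group--source pairs. A single late discovery can therefore leave a straddling block for each of the $\approx\epsilon^{-s}$ sources. If you only know that one source's late comparators are distinct, you get $\approx\epsilon^{-2s}$ such blocks per group and level. Even with $O(\epsilon)$ error per unit weight on blocks of weight $n/\epsilon^2$, the cost is $n\epsilon^{-(2s+1)}$. That yields only the global-packing rate $n^{1/(2s+2)}T^{(2s+1)/(2s+2)}$ of Corollary~\ref{cor:h-global}.

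Your cell-based potential tracks when a value first appears near a cell. It says nothing about how many of one source's blocks straddle that moment, and that per-source count is what you pay for. Note also that a cell receives a candidate only when a single group--source block of weight $m_j$ completes there, not after $n/\epsilon^2$ total weight.

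\textbf{The paper closes this with two ingredients your plan lacks.}

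First, a multi-level fallback. A level-$j$ block whose comparator $q_E$ was not in $\mathcal R_{\min E}$ is replaced by its children, recursively, down to level $0$, where $q_0$ is always available. Every selected piece therefore has a comparator available throughout it and costs at most $6\sqrt\kappa\,w_jN_E\lesssim\sqrt\kappa\,nL/w_j$. Summed geometrically over levels, this gives $\sqrt\kappa\,nLM_IB_I/w$ (Corollary~\ref{cor:discovery-residual}), instead of order-one error on uncovered weight.

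Second, and this is the missing idea: the number $B_I$ of boundary blocks per (group, source, level) is at most $K_{\max}(b\kappa)^s$, independent of $w$ (Lemma~\ref{lem:iclr-affine-counts}). The argument runs as follows.
\begin{itemize}
\item Each earlier boundary comparator $q_i$ is available throughout every later boundary block $E_m$.
\item So the multiplicative Hellinger comparison $H(z,a)\le\sqrt{3\kappa}H(z,q)+4\sqrt{\kappa L/N}$ applies on $E_m$ with comparator $q_i$.
\item Combined with $H(q_m,z_m)\le w_j$, it gives $H(q_m,a)\le16\sqrt\kappa\,\bigl(H(q_m,q_i)+w_j\bigr)$. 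Comparators far from the source must therefore be separated in proportion to their distance from it.
\item Split into a near ball and $O(\log T)$ dyadic annuli around $a$, and apply a local packing bound that is uniform across scales in each region.
\end{itemize}
This is what reduces the count of fallback pieces per group and level from $M_I^2$ to $M_I$ times factors independent of $w$.

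\textbf{A secondary gap: your volumetric count does not apply to this algorithm.} Candidates are separated in Hellinger distance. Since $H^2\le\|\cdot\|_1$, separation $H>w$ only gives $\ell_1$ separation $w^2$, so an $\ell_1$ volume argument on the $s$-dimensional slice yields $(C/w^2)^s$. Coordinatewise square roots do not preserve affine dimension. The paper obtains $(a\sqrt\kappa/w)^s$, together with the scale-uniform local version needed above, from Lemma~\ref{lem:h-affine-packing}. That lemma uses the affine parametrization, the entropy gradient, and Milman's ellipsoid theorem.
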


The algorithm does not need to know \(s\) or the affine subspace containing the conditional means.
The guarantee also extends to conditional means that lie near a low-dimensional affine set, with an additional error term measuring their distance from that set.
\begin{remark}[Approximate low-dimensional structure]\label{rem:affine-robustness}
For a nonempty closed convex set \(S\subseteq\DeltaD\), write \(\rho_\infty(S):=\max_{t\le T}\inf_{u\in S}\|\mu_t-u\|_1\).
On the same event, simultaneously for every such \(S\) and integer \(s\) with \(\operatorname{affdim}(S)\le s\le n-1\),
\[
    \MCerr_{\calG}^{(1)}(T)
    \le\widetilde O\!\left(n^{1/(s+2)}T^{(s+1)/(s+2)}+T\sqrt{\rho_\infty(S)}\right).
\]
The \(\widetilde O\) hides only the squared logarithm in \eqref{eq:uniform-affine-upper}; \(s,S\) may be chosen after play.
\end{remark}

We complement this guarantee with a worst-case lower bound with matching dependence on \(n,T\), and the affine dimension of the conditional means, up to polylogarithmic factors.

\begin{theorem}[Affine-dimensional lower bound]\label{thm:affine-lower}
There are universal constants \(c,C>0\) such that, for all integers \(n\ge2\), \(T\ge n\), and \(0\le s\le n-1\), there is a stochastic instance with a family \(\calG\) of at most \(C\log^3(T+1)\) binary groups.
Its outcome means satisfy $\operatorname{affdim}\{\mu_1,\ldots,\mu_T\}\le s$,
and every possibly randomized online forecaster incurs
\[
    \E\!\left[\MCerr_{\calG}^{(1)}(T)\right]
    \ge
    c\frac{n^{1/(s+2)}T^{(s+1)/(s+2)}}{\log^2(T+1)}.
\]
The context sequence is deterministic, and the groups and outcome distributions are fixed before play.
For \(s=0\), categorical outcomes and the constant group suffice, with lower bound \(c\sqrt{nT}\).
\end{theorem}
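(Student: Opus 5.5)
The plan is to embed an $s$-dimensional copy of the worst-case binary/low-dimensional construction of \citet{collina2026multicalibration} into the simplex in a way that couples the effective scale to $n$. We treat two regimes separately: $s=0$, and $s\ge1$. The $s\ge1$ case reduces to a scaled version of the $s=0$ idea along each of the $s$ free directions.

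\emph{Case $s=0$.} Take $n'$ equal to $n$ rounded down to an even number. Fix a single distribution $\mu$ that is uniform over a random subset: pair the classes as $(2j-1,2j)$, and for each pair draw a sign $\sigma_j\in\{\pm1\}$ at the start. Set
\[
\mu_{2j-1}=\frac{1+\sigma_j\epsilon}{n'},\qquad \mu_{2j}=\frac{1-\sigma_j\epsilon}{n'},\qquad \epsilon\asymp\sqrt{n/T}.
\]
We use categorical outcomes and only the constant group. With probability $\ge1/2$ over the draw, the forecaster cannot identify a constant fraction of the signs from $O(T)$ samples. Each class $i$ is observed about $T/n$ times, which is not enough to detect a bias of $\epsilon/n$ when $\epsilon\lesssim\sqrt{n/T}$.

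Calibration error is at least $\|\sum_t(y_t-p_t)\|_1$, by the triangle inequality over prediction values. We condition on $\mathcal F^-$ and split this vector into the martingale noise $\sum_t(y_t-\mu)$ and the bias $\sum_t(\mu-p_t)$. The noise has $\ell_1$ norm concentrating near $\sum_i\sqrt{T\mu_i}\asymp\sqrt{nT}$ in expectation. We then argue anticoncentration: the noise cannot be cancelled by the predictable term. Per coordinate, $\sum_t (y_{t,i}-\mu_i)$ has standard deviation $\sqrt{T/n}$ and is a martingale with predictable-free increments relative to $p_t$ after sampling. Hence
\[
\E\,\bigl|\textstyle\sum_t(y_{t,i}-p_{t,i})\bigr|\ \ge\ c\sqrt{T/n}
\]
by a standard Paley–Zygmund/Khintchine argument for martingales whose conditional variance is bounded below. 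Summing over $i$ gives $c\sqrt{nT}$. This actually removes the need for the random signs, so I would use this simpler route.

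\emph{Case $s\ge1$.} Follow the forthcoming $(d+1)/(d+2)$ construction of \citet{collina2026multicalibration} in dimension $s$, realized on a face. Split the $n$ classes into $s+1$ blocks of size $m\approx n/(s+1)$. Let the means range over the $s$-dimensional affine set of distributions uniform within each block, with block masses free. Inside each block we additionally use the $s=0$ noise effect.

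The construction uses a grid of $N^s$ target means of spacing $1/N$, with $T/N^s$ rounds each and groups built from dyadic intervals of time and scale, which yields polylogarithmically many groups $C\log^3(T+1)$. The forecaster must either predict distinct values per cell, paying the within-cell noise $\sqrt{(n/s)\cdot T/N^s}$ per cell, or merge cells, paying bias $\approx T/N$. The first option totals $N^s\sqrt{nT/N^s}=\sqrt{nTN^s}$.

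Balancing $\sqrt{nTN^s}=T/N$ gives $N^{s+2}=T/n$, and the error is then $T/N=n^{1/(s+2)}T^{(s+1)/(s+2)}$. The condition $T\ge n$ ensures $N\ge1$. The $\log^2$ loss comes from their group family and the adaptivity of which cells are merged.

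\textbf{Main obstacle.} The hard step is transferring their adaptive lower-bound argument, which forces a per-cell dichotomy via groups, while preserving the $\sqrt{n}$ gain from intra-block noise. Concretely, I must show that the noise anticoncentration of Case $s=0$ holds conditionally on each group/prediction cell even when the forecaster's predictions depend on past noise. I would first try to apply the martingale Khintchine bound coordinatewise inside each block, restricted to the subsequence where a fixed prediction is used. Then I would sum over predictions via their charging argument, verifying that the charging only loses $\log^2(T+1)$.
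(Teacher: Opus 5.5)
Your $s=0$ argument rests on a false step: the claim $\E\bigl|\sum_t(y_{t,i}-p_{t,i})\bigr|\ge c\sqrt{T/n}$. The process $\sum_t(y_t-p_t)$ is the martingale $\sum_t(y_t-\mu)$ plus a drift with predictable increments, and that drift can cancel the noise. Take the forecaster with $p_1=q_0$ and $p_t=y_{t-1}$. Then $\sum_t(y_t-p_t)=y_T-q_0$ telescopes, so your lower bound $\|\sum_t(y_t-p_t)\|_1$ is at most $2$, although this forecaster's calibration error is of order $T$. Khintchine or Paley--Zygmund applies to $\sum_t(y_t-\mu)$, not to $\sum_t(y_t-p_t)$. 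Collapsing all prediction bins by the triangle inequality discards exactly the information that penalizes such forecasters.

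The missing ingredient, used in both cases of the paper's proof, is to charge the drift to the calibration error \emph{bin by bin}. With predictable signs $\phi(v)_i=\operatorname{sign}(v_i-\mu_i)$ and the constant group,
$\E\sum_t\|p_t-\mu\|_1=\E\sum_v\langle\phi(v),\sum_{t:p_t=v}(p_t-y_t)\rangle\le\E[\MCerr_{\{\one\}}^{(1)}(T)]$.
Hence $\E\|\sum_t(y_t-q_0)\|_1\le2\E[\MCerr_{\{\one\}}^{(1)}(T)]$, and a moment bound on this i.i.d.\ sum gives $\sqrt{nT}$. Your abandoned random-sign route needs the same step.

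For $s\ge1$, the proposal defers both essential steps to the citation, and the tools you name would not supply them.

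\textbf{Charging the bias.} ``Merging cells costs $T/N$'' again needs the charging step. Now the sign $\operatorname{sign}(p_{t,j}-\mu_{t,j})$ depends on the context, so it must be detected by groups. The dyadic intervals of a grid coordinate number $\Theta(m)$ (your $N$), which is polynomial in $T$, not polylogarithmic. The paper instead uses the Walsh approximation of \citet{collina2026multicalibration}: $\operatorname{sign}(p_{t,j}-\mu_{t,j})\approx b_0(p_{t,j})+\sum_{\ell\in H_m}b_\ell(p_{t,j})\psi_\ell(x_{t,j}-1)$, with $|H_m|=O(\log^3 m)$ and $\sum_\ell\|b_\ell\|_\infty\le\log_2m$. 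This gives $\E\sum_t\|p_t-\mu_t\|_1\le O(\log T)\,\E[\MCerr_{\calG}^{(1)}(T)]$.

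\textbf{Adaptive noise.} A per-bin martingale Khintchine bound fails because bins are chosen adaptively. For fair scalar signs, a single value used until its residual $M_v$ first returns to zero has $\E|M_v|=1$ but $\E\sqrt{n_v}\asymp\log T$. What is needed is the adaptive noise-bucketing inequality (Theorem~4 of \citet{collina2026multicalibration}), which is stated for fair-sign scalar sequences. The paper gets the $\sqrt n$ gain as follows.
\begin{enumerate}
\item It uses context-independent noise $y_t=(q(x_t)+e_{J_t})/2$ with $J_t$ uniform on $[n]$.
\item It applies the theorem on the $k=\lfloor n/2\rfloor$ disjoint coordinate pairs.
\item It converts $\sum_{v,j}\sqrt{n_{v,j}}$ into $\sum_v\min\{n_v,\sqrt{kn_v}\}$ via the fresh pair law.
\item In the small-distance case, it lower-bounds the latter by a constant times $\min\{T,\sqrt{kTm^s}\}$, using the grid separation and the $\lceil T/m^s\rceil$ visits per context.
\end{enumerate}

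\textbf{Large $s$.} Your balancing ignores the regime $s\gtrsim\log_2T$. There $(T/n)^{1/(s+2)}<2$, so no grid of the required resolution exists. In addition, the $(s+1)$ spacing loss from fitting $m^s$ separated means into the simplex is no longer a logarithm. The paper assumes $T\ge2^{s+2}$ and reduces larger $s$ to $r=\min\{s,\lfloor\log_2T\rfloor-2\}$.
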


Section~\ref{sec:lower-overview} explains the construction, which builds on \citet{collina2026multicalibration}; Appendix~\ref{app:affine-lower} gives the proof.

\section{The algorithm}

We now present the algorithm which proves Theorem~\ref{thm:uniform-affine}.
In order to naturally adapt to low-dimensional instances, the algorithm combines a swap-regret prediction rule with discovery of new candidate predictions from weighted outcome averages.
The discovery procedure maintains candidate prediction sets at several accuracy levels.
At each accuracy level, the candidate predictions are separated from each other, leading them to form a \emph{packing}.
As weighted observations accumulate, the discovery procedure may add new candidate predictions.

We first describe the algorithm with a fixed number \(1\le K\le K_{\max}:=2+\lceil\log_2T\rceil\) of accuracy levels.
Section~\ref{sec:iclr-controller} then explains how to adjust \(K\) using the observed calibration error.
The algorithm uses polynomially many exact real operations in \(T,n,|\calG|,\log(1/\delta)\), under the operation model specified in Appendix~\ref{cov:h-runtime}.

\subsection{Prediction rule}
\label{sec:prediction-component}

\paragraph{Prediction sets, smoothing, and loss.}
Let \(q_0=(1/n,\ldots,1/n)\).
Let \(\mathcal R_t\) be the set of predictions available at the start of round \(t\), initially \(\mathcal R_1=\{q_0\}\).
Section~\ref{def:algorithm-parameters} describes how this set is updated through discovery.
Based on the history and current context \(x_t\), a second-order swap-regret forecaster chooses a distribution \(\pi_t\) over \(\mathcal R_t\).
The learner samples and announces \(p_t\sim\pi_t\) using fresh private randomness.

After observing \(y_t\), the learner forms the smoothed outcome
\[
y'_t=(1-\beta)y_t+\beta q_0,
\]
where \(\beta=1/(2T^2)\),
and updates the forecaster using the logarithmic losses
\[
\ell(a,y'_t)=-\sum_{i=1}^n y'_{t,i}\log a_i,
\qquad a\in\mathcal R_t.
\]
Every candidate prediction discovered below is a weighted average of smoothed outcomes, so each coordinate is at least \(\beta/n\).
This keeps the losses bounded.
For the same prediction sequence, replacing \(y_t\) by \(y'_t\) changes the calibration error by at most \(2\beta T\).

We use logarithmic loss because weighted loss comparisons can be expressed as comparisons of KL divergences.
Pinsker's inequality relates KL divergence to squared \(\ell_1\) error without a dimension-dependent factor.
By contrast, squared loss directly controls squared \(\ell_2\) error, and passing from \(\ell_2\) to \(\ell_1\) error can introduce a factor of \(\sqrt n\).

\paragraph{The forecaster.}

We combine the subsequence-regret approach of \citet{lee2022online} with the second-order updates of \citet{gaillard2014second} to obtain the interval loss-comparison bound in \Cref{lem:interval-loss-comparison}.

Fix a learning rate \(0<\eta<1/8\), with the choice specified in \Cref{lem:interval-loss-comparison}.
For each group \(g\in\barG\) and ordered pair of distinct available predictions \(a,b\), maintain a comparison weight \(B_t^{g,a,b}\).
Initialize it to \(B_t^{g,a,b}=t\) on the first round when both values are available.
Maintain and update this weight while both predictions remain available.

After observing \(x_t\), form the directed edge weights \(W_{t,a,b}\) for \(a, b \in \mathcal R_t\):
\begin{equation}
W_{t,a,b}
=\sum_{g\in\overline{\mathcal G}}g(x_t)B_t^{g,a,b},
\qquad a\ne b,
\qquad W_{t,a,a}=0.
\end{equation}
Choose the unique probability vector satisfying the stationarity equations
\begin{equation}
\pi_{t,b}\sum_{c\ne b}W_{t,b,c}
=\sum_{a\ne b}\pi_{t,a}W_{t,a,b}
\quad(b\in\mathcal R_t),
\qquad
\sum_{a\in\mathcal R_t}\pi_{t,a}=1.
\label{eq:comparison-matrix}
\end{equation}
With one available prediction, give it probability one.

After sampling \(p_t \sim \pi_t\) and observing \(y_t\), set \(c_t^{g,a,b}=g(x_t)\pi_{t,a}(\ell(a,y'_t)-\ell(b,y'_t))/\log(n/\beta)\) if both predictions are available, and set it to zero otherwise.
Update the comparison state by
\begin{equation}
B_{t+1}^{g,a,b}
=1+\left(1+\eta c_t^{g,a,b}\right)B_t^{g,a,b}.
\label{eq:comparison-potential-update}
\end{equation}
The proof of the interval guarantee, including existence and uniqueness of the prediction distribution, is in Appendix~\ref{subsec:interval-loss-comparison-proof}.

\subsection{Discovering candidate predictions}
\label{def:algorithm-parameters}

The discovery procedure forms weighted averages of smoothed outcomes and stores selected averages as candidate predictions.
We first describe how these averages are formed, then explain when they are added to the candidate prediction sets.

\paragraph{Weighted observations and blocks.}
For each group \(g\in\barG\) and available prediction \(a\in\mathcal R_t\), define the round-\(t\) weight $\alpha_t^{g,a}=g(x_t)\pi_{t,a}$,
where \(\pi_{t,a}\) is the probability assigned to \(a\), and is zero when \(a\notin\mathcal R_t\).
We call \(a\) the source prediction and write \(\alpha_t\) when \(g,a\) are clear from context.
For each pair \((g,a)\), we collect the smoothed outcomes \(y'_t\), with weights \(\alpha_t^{g,a}\), into \emph{blocks} of rounds.
A nonempty block \(E\) spans the interval from its first to its last original positive-weight observation round, including any zero-weight rounds between them.
Blocks belonging to different pairs are distinct even when their intervals coincide.
For a block \(E\) associated with \((g,a)\), define its total weight and weighted outcome sum by
\[
    N_E:=\sum_{t\in E}\alpha_t^{g,a},
    \qquad
    V_E:=\sum_{t\in E}\alpha_t^{g,a}y'_t.
\]
Both sums are zero for an empty block.
When \(N_E>0\), the block's weighted mean is \(z_E=V_E/N_E\).

\paragraph{Combining observations into blocks.}
To form weighted averages at different accuracies, we maintain blocks for each pair \((g,a)\) at \(K\) levels, indexed by \(j=0,\ldots,K-1\) from coarsest to finest.
Let \(L=O(\log(en|\barG|(T+1)/\delta))\) be the logarithmic factor in the concentration bounds for weighted outcome averages.
Its value is fixed before play as specified in Appendix~\ref{app:universal-constants}.
At level \(j\), the separation threshold is \(w_j=2^{1-j}\), and the completion threshold is \(m_j=1024nL/w_j^2\).
Thus finer levels wait for more weight before completing a block.

For each pair \((g,a)\) and level \(j\), we keep at most one \emph{open block}, which is still collecting weight.
We close it as soon as its weight reaches or exceeds the completion threshold \(m_j\), including the threshold-crossing input in full.
A closed block is called \emph{completed} and receives no further inputs.

At level zero, the open block directly takes weight \(\alpha_t\) and smoothed outcomes \(y'_t\).
At level \(j\ge1\), the open block receives only the summaries \((N_E,V_E)\) of completed blocks at level \(j-1\), adding both components to its own summary.
Once a block closes, the next input starts a new block \(E'\) with \(N_{E'}\) and \(V_{E'}\) initialized at zero, as shown in \Cref{fig:block-hierarchy}.

The thresholds are chosen so that exactly four completed blocks, called \emph{children}, form a completed block at the next level, their \emph{parent}.
Every completed level-\(j\) block has weight in \([m_j,2m_j)\).
Appendix~\ref{sec:discovery-hierarchy} proves both properties.
A parent's observation interval includes the original rounds of its children, so it can begin before the parent's accumulator is created.
Blocks at different levels remain distinct even when their observation intervals coincide.

\begin{figure}[!htbp]
\centering
\begin{tikzpicture}[x=1cm,y=1cm,>=Stealth,font=\small,
    done/.style={draw=black!55,fill=black!5,minimum height=0.48cm,align=center},
    unfinished/.style={draw=blue!55!black,fill=blue!5,dashed,minimum height=0.48cm,align=center}]
\node[anchor=east] at (0.7,1.2) {level \(1\)};
\node[anchor=east] at (0.7,0.25) {level \(0\)};
\node[done,minimum width=5.2cm] (f1) at (3.6,1.2) {\(E_{1,1}\): complete};
\node[unfinished,minimum width=2.6cm] (f2) at (7.5,1.2) {\(E_{1,2}\): open};
\foreach \i in {1,...,6} {
    \pgfmathsetmacro{\xx}{1.65+1.3*(\i-1)}
    \node[done,minimum width=1.3cm] (e\i) at (\xx,0.25) {\(E_{0,\i}\)};
    \ifnum\i<5
        \draw[->,black!60] (e\i.north) -- (e\i.north |- f1.south);
    \else
        \draw[->,black!60] (e\i.north) -- (e\i.north |- f2.south);
    \fi
}
\node[unfinished,minimum width=1.7cm] at (9.65,0.25) {\(E_{0,7}\): open};
\draw[->,black!55] (1,-0.25) -- (10.7,-0.25) node[right] {time};
\end{tikzpicture}
\caption{Blocks for one \((g,a)\) pair, where \(E_{j,r}\) denotes the \(r\)-th block at level \(j\) in chronological order.
Completed blocks \(E_{0,1},\ldots,E_{0,4}\) form \(E_{1,1}\).
The open block \(E_{1,2}\) has received the summaries of \(E_{0,5},E_{0,6}\).
The block \(E_{0,7}\) is still collecting observations and has not been forwarded.
Arrows add both weight and outcome sum.
}
\label{fig:block-hierarchy}
\end{figure}

\paragraph{Candidate predictions and insertion.}
For each accuracy level \(j=0,\ldots,K-1\), let \(P_j\) denote the set of candidate predictions, shared across all \((g,a)\) pairs.
At level \(0\), we always have the fixed set \(P_0=\{q_0\}\).
Each positive-level set \(P_j\) is initially empty.
Distinct candidate predictions in \(P_j\) are kept more than \(w_j\) apart in Hellinger distance,
\[
H(p,q)=\|\sqrt p-\sqrt q\|_2,
\]
with square roots taken coordinatewise. 
The following insertion rule maintains this separation.

When a block \(E\) at a positive level \(j\) is completed, compare its mean \(z_E\) with the values in \(P_j\).
If some value in \(P_j\) is within Hellinger distance \(w_j\) of \(z_E\), no insertion is made.
Otherwise, process levels \(k=1,\ldots,j\) in increasing order, inserting \(z_E\) into \(P_k\) if \(H(b,z_E)>w_k\) for every \(b\in P_k\).
Every completed block below level \(K-1\) is forwarded to the next level, whether or not a candidate prediction was inserted.
If this completes the next-level block, process that block recursively on the same round, using the updated candidate sets.

These candidate predictions have every coordinate at least \(\beta/n\), so their KL divergence is bounded above and below by squared Hellinger distance up to a logarithmic factor (\Cref{lem:kl-distance-bounds}).

\paragraph{Updating available predictions.}
We write \(P_j(t)\) for the contents of \(P_j\) at the start of round \(t\), so the available prediction set is
\[
\mathcal R_t=P_0\cup\bigcup_{j=1}^{K-1}P_j(t).
\]
In what follows, we write \(P_j\) for the candidate prediction set at the time under consideration, making the time explicit only when needed.
After all discovery updates on round \(t\), the resulting sets are \(P_j(t+1)\).
Newly discovered candidate predictions receive prediction probability and observations only from round \(t+1\) onward.
Since the candidate prediction sets only grow, \(\mathcal R_t\) grows monotonically while \(K\) remains fixed.
Algorithm~\ref{alg:fixed-depth} specifies these operations.

\begin{algorithm}[h]
\small
\caption{Learning predictions with \(K\) accuracy levels}
\label{alg:fixed-depth}
\begin{algorithmic}
\Require \(T,n,\delta,\calG\), number of levels \(K\in\{1,\ldots,K_{\max}\}\).
\State Initialize \(P_0=\{q_0\}\), \(P_j=\varnothing\) for \(1\le j\le K-1\), empty blocks, and the prediction rule of Section~\ref{sec:prediction-component}.
\State Order groups as supplied, appending \(\one\) if absent. Order candidate predictions by first discovery, starting with \(q_0\), and use this order for ties. Repeated values keep their original position.
\For{\(t=1,\ldots,T\)}
    \State Observe \(x_t\) and set \(\mathcal R_t\gets P_0\cup\bigcup_{j=1}^{K-1}P_j\).
    \State Use the prediction rule to choose \(\pi_t\) on \(\mathcal R_t\); sample and announce \(p_t\sim\pi_t\).
    \State Observe \(y_t\), form \(y'_t\), and update the swap-regret forecaster using the losses \(\ell(a,y'_t)\) for \(a\in\mathcal R_t\).
    \ForAll{\(g\in\barG,\ a\in\mathcal R_t\) with \(\alpha_t^{g,a}=g(x_t)\pi_{t,a}>0\)}
        \State Start a zero-level block \(E\) with \((N_E,V_E)=(0,0)\) if absent.
        \State \(N_E\gets N_E+\alpha_t^{g,a}\), \(V_E\gets V_E+\alpha_t^{g,a}y'_t\).
    \EndFor
    \While{an open level-zero block has \(N_E\ge m_0\)}
        \State Choose the first such block in source-prediction order, then group order.
        \State Call \(\textsc{ProcessThresholdBlock}(E,0,g,a)\).
    \EndWhile
\EndFor
\Procedure{ProcessThresholdBlock}{$E,j,g,a$}
    \State Remove \(E\) from the open blocks, mark it completed, and set \(z_E\gets V_E/N_E\).
    \If{\(j\ge1\) and \(H(b,z_E)>w_j\) for every \(b\in P_j\)}
        \For{\(k=1,\ldots,j\)}
            \State If \(H(b,z_E)>w_k\) for every \(b\in P_k\), insert \(z_E\) into \(P_k\).
        \EndFor
    \EndIf
    \If{\(j<K-1\)}
        \State Get the open level-\((j+1)\) block \(E'\) of \((g,a)\), starting it at \((0,0)\) if absent.
        \State \(N_{E'}\gets N_{E'}+N_E\), \(V_{E'}\gets V_{E'}+V_E\).
        \State If \(N_{E'}\ge m_{j+1}\), call \(\textsc{ProcessThresholdBlock}({E'},j+1,g,a)\).
    \EndIf
    \State Discard the completed block's stored summary.
\EndProcedure
\end{algorithmic}
\end{algorithm}

\FloatBarrier

\section{Bounding the calibration error}

We bound the calibration error of Algorithm~\ref{alg:fixed-depth} in three steps.
First, we bound a block's weighted error when a prediction close to its empirical mean is available throughout the block.
Next, we partition the observations into residual pieces to account for predictions discovered later.
Finally, we use the low-dimensional geometry of the conditional means to bound the resulting discovery cost.

The rule in Section~\ref{sec:iclr-controller} may decrease \(K\), clearing blocks but retaining candidate prediction sets.
Call a maximal stretch with the same \(K\) a \emph{depth-\(K\) interval}.

\subsection{An accurate available comparator controls error}
The prediction rule bounds the loss advantage of switching from a source prediction \(a\) to a comparator \(q\).
Its second-order guarantee depends on the squared loss differences on the same rounds.
For logarithmic loss, these quadratic terms are themselves controlled by the divergences from the weighted outcome mean to \(a\) and \(q\).
This is what turns regret into a multiplicative comparison of their prediction errors.

\begin{lemma}[Interval loss-comparison bound]\label{lem:interval-loss-comparison}
Let \(\kappa=2+\log(n/\beta)\).
With learning rate \(\eta=\log(n/\beta)/(8\kappa)\), the prediction rule in Section~\ref{sec:prediction-component} satisfies the following on every realization.
For every \(g\in\barG\), pair \(a,q\), and contiguous interval \(I\subseteq[T]\) on which both predictions are available, let \(\alpha_t=g(x_t)\pi_{t,a}\) and \(X_t=\ell(a,y'_t)-\ell(q,y'_t)\).
Then
\begin{equation}
    \sum_{t\in I}\alpha_tX_t
    \le\frac1{8\kappa}\sum_{t\in I}\alpha_tX_t^2+8\kappa L.
\label{eq:second-order-comparison}
\end{equation}
\end{lemma}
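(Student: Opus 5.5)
The plan has three parts: a potential argument on the single comparison weight \(B^{g,a,q}\), a stationarity argument showing the aggregate potential cannot grow, and a rescaling that turns the per-interval bound into \eqref{eq:second-order-comparison}.

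\textbf{Setup.} Fix \(g\in\barG\), a pair \((a,q)\), and write \(B_t=B_t^{g,a,q}\). Normalise the losses by \(\tilde c_t:=c_t^{g,a,q}=\alpha_tX_t/\log(n/\beta)\). Every available prediction has coordinates at least \(\beta/n\), so \(|X_t|\le\log(n/\beta)\) and hence \(|\tilde c_t|\le 1\). Since \(\eta<1/8\), we can use \(\log(1+x)\ge x-x^2\) for \(|x|\le 1/2\).

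\textbf{Step 1: a lower bound on \(B\) over the interval.} Unrolling \eqref{eq:comparison-potential-update} from the start \(t_0\) of \(I=[t_0,t_1]\) and discarding the additive \(1\)'s gives
\[
B_{t_1+1}\ge B_{t_0}\prod_{t\in I}(1+\eta\tilde c_t)\ge B_{t_0}\exp\Bigl(\eta\sum_{t\in I}\tilde c_t-\eta^2\sum_{t\in I}\tilde c_t^2\Bigr).
\]
The initial value is positive, since it equals the round the pair first became available, and \(B_{t_0}\ge 1\) thereafter. So it suffices to bound \(B_{t_1+1}\) from above by something like \(\operatorname{poly}(T,n,|\barG|)\).

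\textbf{Step 2: an upper bound on the total potential.} Let \(\Phi_t:=\sum_{g}\sum_{a\ne b}B_t^{g,a,b}\). Summed over pairs, the linear term in the update is
\[
\sum_{g,a,b}\eta\,\tilde c_t^{g,a,b}B_t^{g,a,b}=\frac{\eta}{\log(n/\beta)}\sum_{a,b}W_{t,a,b}\,\pi_{t,a}\bigl(\ell(a,y'_t)-\ell(b,y'_t)\bigr).
\]
Stationarity \eqref{eq:comparison-matrix} makes this expression vanish identically, because the net flow into each \(b\) equals the net flow out of \(b\). Hence \(\Phi_{t+1}=\Phi_t+(\#\text{pairs})\). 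When a new prediction is discovered, its pairs are added with initial weight \(t\). This gives \(\Phi_{T+1}\le\operatorname{poly}(T)\cdot|\barG|\cdot|\mathcal R_T|^2\). The number of candidates is polynomial (at most one is inserted per completed block), so \(\log\Phi_{T+1}\lesssim L\), taking the constant in \(L\) large enough.

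Existence and uniqueness of \(\pi_t\) come separately: all off-diagonal \(W\) are positive because the constant group \(\one\) is included, so Perron--Frobenius applies to the irreducible Markov generator.

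\textbf{Step 3: combine and rescale.} From Steps 1 and 2,
\[
\eta\sum_{t\in I}\tilde c_t\le\eta^2\sum_{t\in I}\tilde c_t^2+\log\Phi_{T+1}.
\]
Multiply by \(\log(n/\beta)/\eta\) and use \(\alpha_t^2\le\alpha_t\) to get
\[
\sum_{t\in I}\alpha_tX_t\le\frac{\eta}{\log(n/\beta)}\sum_{t\in I}\alpha_tX_t^2+\frac{\log(n/\beta)}{\eta}\log\Phi .
\]
With \(\eta=\log(n/\beta)/(8\kappa)\), the quadratic coefficient becomes exactly \(1/(8\kappa)\) and the additive term becomes \(8\kappa\log\Phi\le 8\kappa L\).

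\textbf{Main obstacle.} The delicate point is the potential accounting when the set \(\mathcal R_t\) grows. New pairs enter with weight \(t\), and the potential \(\Phi\) must stay polynomially bounded while Step 1 only needs \(B_{t_0}>0\). A related issue is whether a decrease in \(K\) resets the comparison weights. I would handle this by restarting the argument on each depth-\(K\) interval, since \(I\) is any contiguous interval on which both predictions are available.
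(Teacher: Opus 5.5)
Your proposal is correct and follows essentially the same route as the paper. Stationarity of \(\pi_t\) kills the linear term, so the aggregate weight grows only additively; a single weight is lower-bounded by the unrolled product over \(I\); and \(\log(1+x)\ge x-x^2\), together with \(\alpha_t^2\le\alpha_t\) and \(\eta=\log(n/\beta)/(8\kappa)\), gives exactly the stated constants.

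The paper's Step~2 is a little cleaner. It extends the weights to all \(|\barG|A_{\max}^2\) slots, with \(c_t=0\) off the available pairs, so that \(\sum B_t=|\barG|A_{\max}^2t\) holds exactly; this is why new pairs start at weight \(t\). Two small corrections to your version follow. First, the pair count must use all \(A_{\max}=1+|\barG|T\) candidates ever discovered rather than \(|\mathcal R_T|\), since \(\mathcal R_t\) can shrink when \(K\) decreases. Second, no restart at depth changes is needed, and restarting would weaken the additive term to \(8\kappa L\) times the number of depth intervals crossed. Comparison states of predictions that remain available are preserved, and a prediction that leaves \(\mathcal R_t\) never returns. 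So your Step~1 unrolling already covers intervals that cross a decrease of \(K\).
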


With sums over \(I\), write \(N=\sum_t\alpha_t>0\) and \(z=N^{-1}\sum_t\alpha_ty'_t\).
The KL divergence is \(D(z\Vert a)=\sum_i z_i\log(z_i/a_i)\).
Substituting logarithmic loss gives the identity
\(
    \sum_t\alpha_tX_t=N[D(z\Vert a)-D(z\Vert q)].
\)
The quadratic terms and thus the right-hand side of the inequality can be bounded using these same divergences:
\(\sum_t\alpha_tX_t^2\le2\kappa N[D(z\Vert a)+D(z\Vert q)]\).
Substitution into \eqref{eq:second-order-comparison} lets us absorb the source divergence into the left-hand side, yielding the multiplicative comparison
\(
    D(z\Vert a)\le3D(z\Vert q)+16\kappa L/N.
\)

An accurate comparator therefore forces the source prediction to be accurate, with an additive divergence cost of order \(\kappa L/N\).
The following lemma converts this divergence comparison into bounds on the distance between the source prediction \(a\) and the weighted outcome mean \(z\), when the comparator \(q\) is available throughout the interval.

\begin{lemma}[Available-comparator bound]
\label{lem:local-comparison}
The two bounds below hold on every realization. For every group and contiguous round interval on which \(a,q\) are both available, with \(N,z\) defined above,
\begin{equation}
    H(z,a)\le\sqrt{3\kappa}H(z,q)+4\sqrt{\kappa L/N}.
\label{eq:iclr-local-hellinger}
\end{equation}
If \(N\ge m_j\) and \(H(z,q)\le2w_j\), then
\begin{equation}
    \|z-a\|_1\le6\sqrt{\kappa}\,w_j.
\label{eq:iclr-block-residual}
\end{equation}
\end{lemma}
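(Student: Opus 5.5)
We start from the multiplicative divergence comparison derived after \Cref{lem:interval-loss-comparison}:
\[
D(z\Vert a)\le 3D(z\Vert q)+16\kappa L/N .
\]
We then convert it into the Hellinger and \(\ell_1\) statements by means of \Cref{lem:kl-distance-bounds}. That lemma sandwiches KL divergence between squared Hellinger distances, up to the logarithmic factor, whenever all coordinates of the arguments are at least \(\beta/n\). This condition holds here: \(z\) is a weighted average of smoothed outcomes, and \(a,q\) are candidate predictions of the same form.

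\emph{Bound \eqref{eq:iclr-local-hellinger}.} We use the standard lower bound \(D(z\Vert a)\ge H(z,a)^2\); for unnormalized Hellinger \(\|\sqrt z-\sqrt a\|_2^2\), this follows from \(D\ge 2\cdot\) the squared Hellinger affinity gap. For the other side we use the upper bound \(D(z\Vert q)\le \kappa H(z,q)^2\), which is the role of \(\kappa=2+\log(n/\beta)\): the bounded likelihood ratio \(z_i/q_i\le n/\beta\) gives a log-ratio factor, and the precise constant comes from \Cref{lem:kl-distance-bounds}. Combining,
\[
H(z,a)^2\le 3\kappa H(z,q)^2+16\kappa L/N .
\]
Taking square roots with \(\sqrt{x+y}\le\sqrt x+\sqrt y\) yields \(H(z,a)\le\sqrt{3\kappa}H(z,q)+4\sqrt{\kappa L/N}\). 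If \Cref{lem:kl-distance-bounds} carries a constant in the lower bound, for example \(D\ge H^2/2\) or similar, I will have to check that it is absorbed. This is the main place where constants must be tracked, and it may force using the lower bound \(D(z\Vert a)\ge H^2\) in the form \(\sum_i(\sqrt{z_i}-\sqrt{a_i})^2\le D\). That inequality holds for probability vectors, since \(D\ge -2\log(1-H^2/2)\ge H^2\).

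\emph{Bound \eqref{eq:iclr-block-residual}.} We substitute \(N\ge m_j=1024nL/w_j^2\), so \(4\sqrt{\kappa L/N}\le 4\sqrt{\kappa}\,w_j/\sqrt{1024n}=\sqrt\kappa\, w_j/(8\sqrt n)\). Together with \(H(z,q)\le 2w_j\), this gives
\[
H(z,a)\le (2\sqrt3+1/8)\sqrt\kappa\, w_j .
\]
We then pass to \(\ell_1\) via
\[
\|z-a\|_1=\sum_i|\sqrt{z_i}-\sqrt{a_i}|(\sqrt{z_i}+\sqrt{a_i})\le H(z,a)\,\|\sqrt z+\sqrt a\|_2\le 2H(z,a),
\]
by Cauchy--Schwarz, since \(\|\sqrt z\|_2=\|\sqrt a\|_2=1\). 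This gives \(\|z-a\|_1\le 2(2\sqrt3+1/8)\sqrt\kappa\,w_j\approx 7.2\sqrt\kappa\,w_j\), which overshoots the target \(6\sqrt\kappa\,w_j\).

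The main obstacle is therefore constant-tightening in the \(\ell_1\) step. To get 6 instead of 7.2, I would go back to the divergence level rather than the Hellinger level. Pinsker gives \(\|z-a\|_1\le\sqrt{2D(z\Vert a)}\). The upper bound \(D(z\Vert q)\le\kappa H(z,q)^2\le 4\kappa w_j^2\) then yields
\[
D(z\Vert a)\le 12\kappa w_j^2+16\kappa L/N\le 12\kappa w_j^2+\kappa w_j^2/(64n),
\]
hence \(\|z-a\|_1\le\sqrt{2(12+1/64)\kappa}\,w_j<5\sqrt\kappa\, w_j\le 6\sqrt\kappa\,w_j\). This Pinsker route is the one I would use for \eqref{eq:iclr-block-residual}, reserving the Hellinger chain for \eqref{eq:iclr-local-hellinger}.
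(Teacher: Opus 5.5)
Your final conversion is correct and is the same as Step 3 of the paper's proof. \Cref{lem:kl-distance-bounds} gives \(H(z,a)^2\le D(z\Vert a)\) with constant one, so the worry you flag does not arise, and it gives \(D(z\Vert q)\le\kappa H(z,q)^2\). For \eqref{eq:iclr-block-residual} the paper also drops the Cauchy--Schwarz passage from Hellinger to \(\ell_1\) and uses Pinsker, with essentially your arithmetic.

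The gap is that you start from \(D(z\Vert a)\le 3D(z\Vert q)+16\kappa L/N\) as if it were already established. The main text only asserts it, and its only proof in the paper sits inside the proof of this very lemma. It is the substantive part of the argument. \Cref{lem:interval-loss-comparison} carries the second-order term \(\frac1{8\kappa}\sum_t\alpha_tX_t^2\). The trivial bound \(|X_t|\le\log(n/\beta)\) would only give \(D(z\Vert a)-D(z\Vert q)\le\log^2(n/\beta)/(8\kappa)+8\kappa L/N\). That leaves an additive error of order \(\log(n/\beta)\) that does not decay with \(N\), which is useless. The missing idea is that this second-order term is controlled by the same two divergences, \(\sum_t\alpha_tX_t^2\le2\kappa N[D(z\Vert a)+D(z\Vert q)]\), so it can be absorbed into the left-hand side.

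To supply this step, proceed as follows.
\begin{enumerate}
\item Write \(X_t=\sum_iy'_{t,i}\log(q_i/a_i)\), so that \(\sum_t\alpha_tX_t=N[D(z\Vert a)-D(z\Vert q)]\).
\item Apply Jensen, using that each \(y'_t\) is a probability vector, to get \(\sum_t\alpha_tX_t^2\le N\sum_iz_i\log^2(a_i/q_i)\).
\item Prove the coordinatewise bound \(z_i\log^2(a_i/q_i)\le2\kappa\bigl[a_i+q_i-2z_i+z_i\log(z_i^2/(a_iq_i))\bigr]\). Summing it gives \(2\kappa[D(z\Vert a)+D(z\Vert q)]\), because \(\sum_i(a_i+q_i-2z_i)=0\).
\item For the coordinatewise bound with \(z_i>0\), set \(u=a_i/z_i\), \(v=q_i/z_i\), \(\tau=\log(u/v)\). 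Then \(u+v-2-\log(uv)\ge2\log\cosh(\tau/2)\ge\tau^2/(4+2|\tau|)\ge\tau^2/(2\kappa)\).
  \begin{itemize}
  \item The first inequality splits off \(2(x-1-\log x)\ge0\) at \(x=(u+v)/2\).
  \item The second integrates \(\tanh s\ge s/(1+s)\).
  \item The third uses \(|\tau|\le\log(n/\beta)\), which comes from the coordinate floor on \(a\) and \(q\).
  \end{itemize}
  The case \(z_i=0\) is immediate.
\item Substitute into \Cref{lem:interval-loss-comparison} to get \(N[D(z\Vert a)-D(z\Vert q)]\le\tfrac14N[D(z\Vert a)+D(z\Vert q)]+8\kappa L\).
\item Relax \(\tfrac14\) to \(\tfrac12\) and rearrange. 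This gives exactly the comparison you assumed.
\end{enumerate}
With this in place, the rest of your argument goes through unchanged.
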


\subsection{Accounting for comparators discovered later}
\label{def:discovery-counts}
\label{def:boundary-analysis}

The preceding argument controls a block whenever a prediction close to its empirical mean is available throughout the block. The main complication is that such a comparator may be discovered only when the block is completed.

Let \(I\) consist of all rounds from the start of a depth-\(K\) interval through any round within it, after that round's discovery updates.
We first assign a comparator to each completed block.
For a positive-level block \(E\) at level \(j\), inspect \(P_j\) immediately before processing \(E\)'s insertion tests.
If a candidate prediction in \(P_j\) lies within Hellinger distance \(w_j\) of \(z_E\), let \(q_E\) be the earliest-discovered such value.
Otherwise let \(q_E=z_E\), the value inserted into \(P_j\) at this closure.
At level zero, set \(q_E=q_0\).

For each pair \((g,a)\), view the blocks as trees, joining each block to the parent that receives its summary.
At the end of \(I\), start from the maximal blocks, meaning those with no parent yet.
Write \(\min E\) for the first original round in \(E\).
Select a partition by the following recursion:
\begin{itemize}
    \item If \(E\) is completed and \(q_E\in\mathcal R_{\min E}\), select \(E\) and do not visit its children.
    Since the available prediction set only grows while \(K\) is fixed, \(q_E\) was available throughout \(E\).
    Otherwise replace \(E\) by its children and apply the same rule to each.
    \item If \(E\) is unfinished and at a positive level, apply the rule to its completed children.
    Select every unfinished level-zero block with positive weight as one piece.
\end{itemize}
The recursion terminates because \(q_0\) is available throughout every level-zero block.
The selected blocks form a partition \(\mathcal P_{g,a}(I)\) of the pair's weighted observations.
We call them \emph{residual pieces}; \Cref{fig:residual-piece-example} illustrates the selection.
This partition is used only in the analysis, so the algorithm need not store the block trees.

\begin{figure}[!htbp]
\centering
\begin{tikzpicture}[x=1cm,y=1cm,>=Stealth,font=\small,
    kept/.style={draw=teal!65!black,very thick,fill=teal!8,minimum height=0.48cm,align=center},
    unused/.style={draw=black!45,fill=black!4,minimum height=0.48cm,align=center}]
\node[anchor=east] at (0.7,1.35) {level \(1\)};
\node[anchor=east] at (0.7,0.4) {level \(0\)};
\node[kept,minimum width=4.4cm] (f1) at (3.2,1.35) {\(E_{1,1}\): select this block};
\node[unused,dashed,minimum width=4.4cm] (f2) at (7.8,1.35) {\(E_{1,2}\): select its children};
\node[font=\footnotesize] at (3.2,2.0) {\(q_1\) available at start of \(E_{1,1}\)};
\node[font=\footnotesize] at (7.8,2.0) {\(q_2\) discovered after start of \(E_{1,2}\)};
\foreach \i in {1,...,4} {
    \pgfmathsetmacro{\xx}{1.55+1.1*(\i-1)}
    \node[unused,minimum width=1.1cm] at (\xx,0.4) {\(E_{0,\i}\)};
    \pgfmathsetmacro{\xx}{6.15+1.1*(\i-1)}
    \pgfmathtruncatemacro{\r}{\i+4}
    \node[kept,minimum width=1.1cm] at (\xx,0.4) {\(E_{0,\r}\)};
}
\draw[->,black!55] (1,-0.1) -- (10.5,-0.1) node[right] {time};
\end{tikzpicture}
\caption{Selecting pieces for one group--source pair. Both parents are complete, with comparators \(q_i=q_{E_{1,i}}\) satisfying \(H(q_i,z_{E_{1,i}})\le w_1\).
Select \(E_{1,1}\) because \(q_1\) was available throughout it; replace \(E_{1,2}\) by its children because \(q_2\) was unavailable at its start.
The blocks with thick outlines partition the observations. This selection is used only in the proof.
}
\label{fig:residual-piece-example}
\end{figure}
For each pair \((g,a)\), the triangle inequality gives
\[
\left\|\sum_{t\in I}\alpha_t^{g,a}(y'_t-a)\right\|_1
\le \sum_E N_E\|z_E-a\|_1,
\]
where \(E\) ranges over \(\mathcal P_{g,a}(I)\).
The sampling and smoothing corrections given below then convert this into a bound on the realized calibration error.
For every completed level-\(j\) piece, the available-comparator bound gives
\begin{equation}\label{eq:errorbounds}
    N_E\|z_E-a\|_1
    \le 6\sqrt{\kappa}\,N_Ew_j.
\end{equation}
For each fixed group \(g\), we sum \eqref{eq:errorbounds} over all completed
residual pieces across source predictions.
We bound the contributions from pieces at the finest level \(K-1\)
and from coarser pieces separately.

The pieces at the finest level \(K-1\) contribute at most \(6\sqrt{\kappa} |I| w_{K-1}\) due to \(\sum_E N_E \leq |I|\).
We next control the contribution of coarser pieces by bounding how many are selected.
Call a completed positive-level block \(E\) a \emph{boundary block} if \(q_E\notin\mathcal R_{\min E}\): its comparator was unavailable on its first original observation round.
A selected completed piece below the finest level must have a boundary or unfinished parent.
We therefore track the number of candidate predictions and the number of
boundary blocks for each group, source prediction, and level.
Let
\[
M_I:=\left|P_0\cup\bigcup_{j=1}^{K-1}P_j(\max I+1)\right|,
\]
where \(K\) is its value on \(I\).
This counts distinct candidate predictions across these levels, including all insertions on the final round of \(I\), before any decrease of \(K\).
Define
\[
B_I:=\max\left\{
1,\,
\max_{\substack{g,a\\1\le j\le K-1}}
\#\left\{
E:\ E \text{ is a level-\(j\) boundary block for }(g,a)
\text{ in }I
\right\}
\right\},
\]
where \(g\in\overline{\mathcal G}\) and \(a\) ranges over the candidate
predictions counted by \(M_I\).

For a fixed source and group, each coarser level has at most \(B_I\) completed boundary parents, with four children each, and one unfinished parent, with at most three completed children.
Summing over the at most \(M_I\) sources gives at most \((4B_I+3)M_I\) selected completed pieces at that level.
Each has weight less than \(2m_j\), so \eqref{eq:errorbounds} and the geometric sum of \(1/w_j\) give a total coarse-level cost of \(O(\sqrt{\kappa}\,nLM_IB_I/w_{K-1})\).
There is also at most one unfinished level-zero piece per source, each of weight below \(m_0\) and cost at most twice its weight.
This cost is absorbed into the same bound.
\Cref{lem:residual-charging,lem:piece-count-residual} give the formal counts and constants.

We also need to account for the sampling and smoothing cost. On the common event \(\mathcal E_{\rm global}\) of \Cref{lem:common-event}, which has probability at least \(1-\delta\), the sampling correction on each interval \(I\) is at most \(4\sqrt{nLM_IT}\).
Smoothing adds at most \(2\beta|I|\). Write \(\MCerr_{\calG}^{(1)}(I)\) for the realized multicalibration error restricted to rounds in \(I\).
Combining these corrections with the residual-piece bounds gives the following corollary.

\begin{corollary}[Calibration error from discovery counts]
\label{cor:discovery-residual}
There is a universal constant \(C\) such that, on \(\mathcal E_{\rm global}\), every interval \(I\) defined above, with \(K\ge2\) and terminal resolution \(w=w_{K-1}\), satisfies
\[
    \MCerr_{\calG}^{(1)}(I)
    \le
    C\sqrt{\kappa}
    \left(
        |I|w+\frac{nLM_IB_I}{w}
    \right)
    +4\sqrt{nLM_IT}+2\beta|I|.
\]
\end{corollary}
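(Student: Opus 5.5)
The plan is to assemble the corollary from the three error sources already identified: the residual-piece decomposition, the sampling correction, and the smoothing correction.

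\emph{Reduction to per-source sums.} Fix a group \(g\in\calG\). The realized error on \(I\) is a sum over distinct realized predictions \(r\) of \(\|\sum_{t\in I:p_t=r}g(x_t)(y_t-r)\|_1\). Every realized prediction lies in some \(\mathcal R_t\subseteq P_0\cup\bigcup_j P_j(\max I+1)\), so at most \(M_I\) sources occur.

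First replace \(y_t\) by \(y'_t\). Since \(\|y_t-y'_t\|_1\le2\beta\), this costs at most \(2\beta|I|\) summed over all sources.

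Next replace the indicator \(\one\{p_t=a\}\) by \(\pi_{t,a}\). This step uses the sampling concentration on \(\mathcal E_{\rm global}\) from \Cref{lem:common-event}, which bounds \(\sum_a\|\sum_{t\in I}g(x_t)(\one\{p_t=a\}-\pi_{t,a})(y'_t-a)\|_1\) by \(4\sqrt{nLM_IT}\). I take this as the statement of that lemma. If the lemma is only stated per source, it gives per-source deviations of order \(\sqrt{nL\,W_a}\), and Cauchy--Schwarz over at most \(M_I\) sources with \(\sum_aW_a\le T\) recovers the same bound.

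After these two steps it remains to bound \(\sum_a\|\sum_{t\in I}\alpha^{g,a}_t(y'_t-a)\|_1\). By the triangle inequality over the residual partition \(\mathcal P_{g,a}(I)\), this is at most \(\sum_a\sum_{E}N_E\|z_E-a\|_1\).

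\emph{Bounding the pieces.} The pieces fall into three kinds.

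1. \emph{Completed pieces at level \(j\).} By construction \(q_E\in\mathcal R_{\min E}\), so \(q_E\) was available throughout \(E\). Also \(H(z_E,q_E)\le w_j\), and \(N_E\ge m_j\). At level zero, \(q_E=q_0\) and \(w_0=2\) dominates any Hellinger distance, which is at most \(\sqrt2\). Then \eqref{eq:iclr-block-residual} of \Cref{lem:local-comparison} gives \eqref{eq:errorbounds}.

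2. \emph{Pieces at the finest level \(K-1\).} Summing \(6\sqrt\kappa N_Ew_{K-1}\) over sources and using \(\sum_a\sum_EN_E\le\sum_{t\in I}g(x_t)\le|I|\) gives \(6\sqrt\kappa|I|w\).

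3. \emph{Completed pieces at a level \(j<K-1\).} Such a piece was selected only because its parent was not selected, so the parent is either a boundary block or unfinished. Per source there are at most \(B_I\) boundary parents at level \(j+1\), each with four children, and at most one unfinished parent with at most three completed children. This gives at most \((4B_I+3)M_I\le7B_IM_I\) pieces across sources. Each has \(N_E<2m_j\), so level \(j\) contributes at most \(7B_IM_I\cdot2m_j\cdot6\sqrt\kappa w_j=O(\sqrt\kappa\,nLB_IM_I/w_j)\).

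Summing over \(j\le K-2\), the factor \(1/w_j=2^{j-1}\) grows geometrically, so the sum is at most twice its largest term, \(O(1/w_{K-2})=O(1/w)\).

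Unfinished level-zero pieces number at most one per source and each has weight below \(m_0\). Since \(\|z_E-a\|_1\le2\), they cost at most \(2m_0M_I=O(nLM_I)\). This is at most \(nLM_IB_I/w\) because \(w\le1\) when \(K\ge2\) and \(B_I\ge1\).

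\emph{Conclusion.} Taking the maximum over \(g\) yields the stated bound with a universal \(C\).

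The main obstacle is the counting step in item 3. It requires verifying that every selected non-finest completed piece really has a boundary or unfinished parent, and that parents in the level-\((K-1)\) tree are only those created within \(I\). That count is exactly what \Cref{lem:residual-charging,lem:piece-count-residual} supply, and I would invoke them there. The other point needing care is matching the sampling event's form to the \(\sqrt{nLM_IT}\) term, as discussed above.
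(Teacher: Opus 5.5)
Your proposal is correct and follows essentially the paper's route: you unfold \Cref{lem:piece-count-residual} (smoothing cost, the set-form sampling bound, the triangle inequality over the residual partition, and the per-piece comparator bound), use the $(4B_I+3)M_I$ piece count behind \Cref{lem:residual-charging}, sum $1/w_j$ geometrically, and absorb the unfinished level-zero cost using $w\le1$ and $B_I\ge1$, exactly as the paper does. Your per-source Cauchy--Schwarz fallback for the sampling term is not needed and would pick up extra additive terms; \Cref{lem:common-event}(ii) is already stated for an arbitrary set $Q$ of distinct candidates, so taking $Q$ to be the $M_I$ candidates gives $4\sqrt{nLM_IT}$ directly.
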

The term \(|I|w\) is the ordinary approximation cost at the finest level \(K-1\). The term \(nLM_IB_I/w\) is the price of discovering useful prediction values online. 
The formal residual decomposition and piece-counting argument appear in Appendices~\ref{subsec:partition-piece-counts}--\ref{subsec:general-packing-proof}.

\subsection{Low-dimensional means limit discovery}
Next, we bound \(M_I\) and \(B_I\) using the low-dimensional geometry of the conditional means. Suppose the means lie in, or sufficiently near, a convex set \(S\) of affine dimension \(s\).
By \Cref{lem:common-event}, on the event \(\mathcal E_{\rm global}\), every completed level-\(j\) block satisfies
\[
H(z_E,\bar\mu'_E)\le w_j/16,
\qquad
\bar\mu'_E:=\frac{1}{N_E}\sum_{t\in E}\alpha_t^{g,a}\bigl((1-\beta)\mu_t+\beta q_0\bigr).
\]
The vector \(\bar\mu'_E\) is the weighted average of the smoothed conditional
means \(\mu'_t=(1-\beta)\mu_t+\beta q_0\), using the same weights as \(z_E\).
If the conditional means lie in the convex set \(S\), then
\(\bar\mu'_E\in S_\beta=(1-\beta)S+\beta q_0\).
The concentration bound above therefore places the mean \(z_E\) of every
completed level-\(j\) block within \(w_j/16\) of \(S_\beta\). Consequently, every candidate prediction in \(P_j\), \(j\ge1\), lies within \(w_j/16\) of \(S_\beta\).
For approximate structure, \(H(p,q)^2\le\|p-q\|_1\) adds at most \(\sqrt{\rho_\infty(S)}\) to this distance.

To bound \(M_I\), we use the \(w_j\)-separation of candidate predictions in \(P_j\).
A packing argument near \(S_\beta\) bounds their number at each level.
Summing over levels gives the bound on \(M_I\).

To bound \(B_I\), we exploit the chronological order of boundary blocks.
Fix a group, source prediction \(a\), and positive level \(j\).
Near \(a\), the \(w_j\)-separation of boundary comparators gives a local packing bound.
Far from \(a\), every earlier comparator is available throughout each later block.
The Hellinger bound \eqref{eq:iclr-local-hellinger} then gives separation proportional to the later comparator's distance from \(a\), up to \(\kappa\)-dependent factors.
Within distance ranges with doubling radii, this gives a local packing bound independent of the radius.
Summing over the near region and these ranges yields a bound on \(B_I\) with no explicit dependence on \(w\).

The following lemma states the resulting discovery counts.

\begin{lemma}[Affine prediction and boundary-block counts]
\label{lem:iclr-affine-counts}
There are universal constants \(a,b\ge1\) such that, on \(\mathcal E_{\rm global}\), for every nonempty closed convex set \(S\subseteq\DeltaD\), every integer \(s\) with \(\operatorname{affdim}(S)\le s\le n-1\), every \(2\le K\le K_{\max}\) with \(w=w_{K-1}\ge64\sqrt{\rho_\infty(S)}\), and every interval \(I\) at depth \(K\) as defined in Section~\ref{def:discovery-counts},
\[
    M_I\le K_{\max}(a\sqrt{\kappa}/w)^s,
    \qquad
    B_I\le K_{\max}(b\kappa)^s.
\]
\end{lemma}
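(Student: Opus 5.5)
The count bounds come from two packing arguments in Hellinger geometry near the $s$-dimensional convex set $S_\beta=(1-\beta)S+\beta q_0$: a global one for $M_I$ and a local, doubling-radius one for $B_I$. The first step is to place every relevant point near $S_\beta$. On $\mathcal E_{\rm global}$ each completed level-$j$ block satisfies $H(z_E,\bar\mu'_E)\le w_j/16$. Next, $\bar\mu'_E$ lies within $\beta$-smoothing of a weighted average of the $\mu_t$, and each $\mu_t$ is within $\ell_1$ distance $\rho_\infty(S)$ of $S$. Since $H^2\le\|\cdot\|_1$ and the average of nearest points of $S$ stays in $S$ by convexity, $\bar\mu'_E$ lies within Hellinger distance $\sqrt{\rho_\infty(S)}$ of $S_\beta$. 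Because $w_j\ge w\ge 64\sqrt{\rho_\infty(S)}$, every candidate in $P_j$ is within $w_j/8$ of $S_\beta$, and so is every comparator $q_E$, which is either an element of $P_j$ within $w_j$ of $z_E$ or $z_E$ itself. The comparator bound therefore degrades to $w_j+w_j/8$.

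\textbf{Bounding $M_I$.} Map the simplex to the sphere by $p\mapsto\sqrt p$, so Hellinger distance becomes Euclidean distance. The image of the $s$-dimensional affine set $S_\beta$ is an $s$-dimensional curved patch. It is not flat, so I will use the comparison $\|\sqrt p-\sqrt q\|_2^2\le\|p-q\|_1\le\sqrt n\,\|p-q\|_2$, or more precisely use KL/Hellinger equivalence with constant $\kappa$ for points with coordinates $\ge\beta/n$ (\Cref{lem:kl-distance-bounds}). Pick a $w_j/(c\sqrt\kappa)$-net of $S_\beta$ in a metric comparable to Hellinger. A volume argument inside an $s$-dimensional affine set of bounded diameter gives a net of size $(a\sqrt\kappa/w_j)^s$. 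Each net cell contains at most one point of the $w_j$-separated set $P_j$. Summing over the $K\le K_{\max}$ levels and using $w_j\ge w$ gives $M_I\le K_{\max}(a\sqrt\kappa/w)^s$.

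I expect this conversion to be a main technical obstacle. Hellinger balls of radius $r$ in the affine set are not comparable to $\ell_2$ balls uniformly, and near the boundary they are elongated. The cleanest route is to cover the image $\sqrt{S_\beta}$ directly. This image is the image of an $s$-dimensional convex set under a map that is Lipschitz in the right sense. I would try a covering-number bound for $\{\sqrt p:p\in S_\beta\}$ by projecting radially onto an $s$-dimensional parametrization, accepting the $\sqrt\kappa$ loss in the base.

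\textbf{Bounding $B_I$.} Fix $(g,a,j)$ and list the level-$j$ boundary blocks chronologically as $E_1,E_2,\dots$ with comparators $q_i$. Each $q_i$ was discovered during $E_i$ and is new, so it is inserted into $P_j$ at $E_i$'s closure; hence the $q_i$ are pairwise $>w_j$ apart. For $i<k$, the comparator $q_i$ was available from the closure of $E_i$, which precedes $\min E_k$ because blocks of one pair at one level are disjoint in time. So $q_i$ is available throughout $E_k$, and \eqref{eq:iclr-local-hellinger} with $N\ge m_j$ gives
\[
H(z_{E_k},a)\le\sqrt{3\kappa}\,H(z_{E_k},q_i)+4\sqrt{\kappa L/m_j}.
\]
Since $4\sqrt{\kappa L/m_j}=O(\sqrt{\kappa/n}\,w_j)$, this yields $H(q_k,q_i)\gtrsim H(q_k,a)/\sqrt{\kappa}-O(w_j)$.

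Now partition the comparators by their distance to $a$. The near region $\{H(q,a)\le C\kappa w_j\}$ contains $w_j$-separated points near $S_\beta$, so the packing bound gives $(b\kappa)^s$ of them. For each dyadic shell $2^r C\kappa w_j<H(q,a)\le2^{r+1}C\kappa w_j$, the later-vs-earlier inequality makes the comparators in the shell $\Omega(2^r\kappa w_j/\sqrt\kappa)$-separated. Each shell lies inside a Hellinger ball of radius $2^{r+1}C\kappa w_j$, so the scale-free local packing gives $(b'\sqrt\kappa)^s$ points per shell. There are $O(\log(1/w_j))\le K_{\max}$ shells, since Hellinger diameter is at most $\sqrt2$. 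Summing the near region and the shells gives $B_I\le K_{\max}(b\kappa)^s$ after enlarging $b$.

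This step needs the \emph{local} packing bound: an $s$-dimensional set intersected with a ball of radius $R$ contains at most $(CR/\epsilon)^s$ points that are $\epsilon$-separated, uniformly in the center. I would prove it with the same square-root covering argument, which is the same obstacle as in the bound for $M_I$.
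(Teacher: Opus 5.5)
Your outline has the same structure as the paper's proof. Concentration together with $H^2\le\|\cdot\|_1$ places every candidate within $w_j/8$ of $S_\beta$. $M_I$ is then a global packing count of representatives in $S_\beta$. $B_I$ comes from the chronological inequality $H(q_m,a)\lesssim\sqrt\kappa\,(H(q_m,q_i)+w_j)$ for $i<m$, a near region plus dyadic far shells, a local packing bound in each region, and at most $K_{\max}$ regions because $H\le\sqrt2$.

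The genuine gap is the packing lemma itself. You flag it as the obstacle but do not supply it, and it carries the substance of the statement. You need $\GlobPack_H(S_\beta;v)\le(a\sqrt\kappa/v)^s$ and a local bound uniform in center and scale, with a base independent of $n$ and $s$. The routes you sketch do not give this:
\begin{itemize}
\item Going through $H^2\le\|\cdot\|_1$, an $\ell_1$-net must have scale $w_j^2$. That gives $w_j^{-2s}$, which destroys the balance $Tw\asymp n/w^{s+1}$. Going through $\ell_2$ picks up $\sqrt n$.
\item More fundamentally, no fixed norm on the affine span can serve. $S$ may touch the boundary of the simplex, so $S_\beta$ contains points with a coordinate of order $\epsilon=\beta/n$. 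There, a displacement $h\ll\epsilon$ into that coordinate changes $H$ by about $h/(2\sqrt\epsilon)$. Any norm dominating $H$ on $S_\beta$ therefore gives it diameter of order $\epsilon^{-1/2}=\sqrt{2n}\,T$, and a volume argument yields a base of order $\sqrt n\,T/w$.
\item The comparison $H^2\le D\le\kappa H^2$ does not repair this, since $D$ is not a norm. You also give no chart of the curved patch $\sqrt{S_\beta}$ with universal distortion.
\item Even with a good chart, normalizing by John's ellipsoid would cost a factor $s^{s/2}$, adding order $\sqrt s$ to the final rate.
\end{itemize}

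The paper supplies the missing lemma (\Cref{lem:h-affine-packing}) as follows. Write $S=\{p_0+Ax:x\in X\}$ with $X\subseteq\mathbb R^s$, and let $g(x)=A^{\mathsf T}\log p(x)$ be the entropy gradient. Then $\langle x-y,g(x)-g(y)\rangle=D(p(x)\Vert p(y))+D(p(y)\Vert p(x))\ge4H(p(x),p(y))^2$. If all divergences on $S$ are at most $D_0$, then $g(x)-g(0)\in2D_0(X-X)^\circ$. Milman's M-ellipsoid theorem gives an origin-centered ellipsoid $E$ such that $X-X$ is covered by $C_0^s$ translates of $E$ and $(X-X)^\circ$ by $C_0^s$ translates of $E^\circ$. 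With $L$ mapping $E$ to the unit ball, set $F(x)=\sqrt{2D_0}\,Lx+(2D_0)^{-1/2}L^{-\mathsf T}(g(x)-g(0))$. This map satisfies $\|F(x)-F(y)\|\ge4H$, and its image is covered by $C_0^{2s}$ balls of radius $2\sqrt{2D_0}$. The result is the bound $[C_0^2(1+2\sqrt{2D_0}/v)]^s$, with $D_0=\kappa$ globally and $D_0=4\kappa R^2v^2$ locally.

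The local bound $[C_0^2(1+4\sqrt{2\kappa}R)]^s$ loses a factor $\sqrt\kappa$. With it, your near region of radius $C\kappa w_j$ would give $\kappa^{3s/2}$. The paper takes the near region of radius $64\sqrt\kappa\,w_j$, which your chronological inequality already allows, since shell separation only needs $H(q,a)\gtrsim\sqrt\kappa\,w_j$. Your bookkeeping instead presupposes a loss-free local bound $(CR)^s$, which you would also have to prove.

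(Minor: a boundary comparator need not be inserted at its own block's closure, because another pair may insert it during the block. Separation still holds, since the comparators are distinct elements of $P_j$.)
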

Appendix~\ref{subsec:affine-hellinger-geometry} establishes the global and local Hellinger packing bounds used to prove \Cref{lem:iclr-affine-counts}.
Combining \Cref{cor:discovery-residual} and \Cref{lem:iclr-affine-counts} gives a calibration bound in terms of the resolution \(w\) for a depth-\(K\) interval.
We next show how to adjust this depth \(K\) using the observed calibration error, without knowing \(S\) or \(s\).

\FloatBarrier
\section{Choosing the resolution}
\label{sec:iclr-controller}

Start at the finest permitted resolution, \(K=K_{\max}\). Define a calibration error budget for each resolution  \(\Gamma_K=C_{\rm test}\sqrt{\kappa}\,T w_{K-1},\) where \(C_{\rm test}\) is the fixed universal constant in Appendix~\ref{app:universal-constants}.
Let \(I\) be the rounds spent at the current \(K\) so far.
After each round, including all discovery updates, compute \(\MCerr_{\calG}^{(1)}(I)\) from the realized predictions and outcomes in \(I\).
If \(\MCerr_{\calG}^{(1)}(I)>\Gamma_K\), \(K>1\), and \(t<T\), end \(I\), decrease \(K\) by one, and begin a new interval with empty block accumulators and calibration error reset to zero.
Retain all candidate prediction sets, but only levels \(0,\ldots,K-1\) for the new value of \(K\) contribute to the next round's available prediction set.
A candidate prediction becomes unavailable only if it belongs to none of these levels, even if it remains in a candidate prediction set at a higher level.
Comparison states for predictions that remain available are preserved.
Appendix~\ref{app:adaptive-procedure} gives the pseudocode.

\textbf{Why the test finds a suitable resolution.}
We sketch the proof of \Cref{thm:uniform-affine}.
On \(\mathcal E_{\rm global}\), we identify a comparison resolution at which every prefix has error below its budget.
If the algorithm reaches this resolution, it cannot move to a coarser one.
We choose this resolution by balancing the finest-level and discovery costs.
Ignoring coefficients, this gives
\[
    Tw\ \asymp\ \frac{n}{w^{s+1}},
    \qquad w\ \asymp\ (n/T)^{1/(s+2)}.
\]
Using \Cref{cor:discovery-residual,lem:iclr-affine-counts}, Appendix~\ref{app:iclr-affine-upper-proof} verifies that the calibration error accumulated since entering this resolution, including sampling and smoothing corrections, stays below the budget after every round.

Each visited interval has error at most \(\Gamma_K+2\).
The budgets double as the resolution becomes coarser, and the algorithm never passes the comparison resolution.
Their sum is therefore at most twice the budget at that resolution, with an additional \(2K_{\max}\) from overshoots.
The triangle inequality then bounds the total calibration error.
Appendix~\ref{app:iclr-affine-upper-proof} gives the full proof yielding \Cref{thm:uniform-affine} and \Cref{rem:affine-robustness}.    

\section{Matching lower bound}
\label{sec:lower-overview}

The lower bound must distinguish two sources of difficulty: the dimension \(s\) of the conditional means and the number \(n\) of outcome coordinates.
We adapt the construction of \citet{collina2026multicalibration} to vary these quantities separately.

\paragraph{Low-dimensional means with noise in every coordinate.}
Consider first \(s\ge1\) and \(T\ge\max\{n,2^{s+2}\}\).
A deterministic context sequence cycles through \([m]^s\), where \(m\ge2\) will be chosen below.
Each context \(x\) is mapped to a probability vector \(q(x)\).
We choose \(q(x)\) so that each of its first \(s\) coordinates takes one of \(m\) equally spaced values, forming an \(s\)-dimensional grid with \(m^s\) points. Its \((s+1)\)-st coordinate is \(1-\sum_{j=1}^s q_j(x)\), and all remaining coordinates are zero.
Thus the vectors \(q(x)\) lie in an affine subspace of dimension \(s\).
On round \(t\), draw \(J_t\) independently and uniformly from \([n]\), independently of the forecaster's randomness, and define the outcome vector and its conditional mean by
\[
y_t=\frac{q(x_t)+e_{J_t}}{2},
\qquad
\mu_t=\frac{q(x_t)+q_0}{2}.
\]
Here \(q_0=(1/n,\ldots,1/n)\).
Averaging the vectors \(q(x_t)\) with the fixed vector \(q_0\) does not increase affine dimension, so the conditional means \(\mu_t\) have affine dimension at most \(s\).
Furthermore, our construction of \(q(x)\) ensures that conditional means associated with distinct contexts are separated by at least \(\Delta:=1/[8(s+1)(m-1)]\) in \(\ell_1\) distance.
Although the means have affine dimension at most \(s\), the random deviation \(y_t-\mu_t=(e_{J_t}-q_0)/2\) affects all \(n\) coordinates.

\paragraph{Either prediction distance or accumulated noise forces error.}
Fix any forecaster. We consider two cases according to whether
\(\E\sum_{t=1}^T\|p_t-\mu_t\|_1\) is large or small.
Using the Walsh approximation of threshold tests from \citet{collina2026multicalibration}, we construct a fixed family \(\calG\) of context groups for which
\[
\E[\MCerr_{\calG}^{(1)}(T)]
\gtrsim
\frac{\E\sum_{t=1}^T\|p_t-\mu_t\|_1}{\log(T+1)}.
\]
Thus, using \(\Delta=1/[8(s+1)(m-1)]\), if \(\E\sum_t\|p_t-\mu_t\|_1\ge T\Delta/8\), the expected calibration error is at least a constant multiple of \(T/[(s+1)m\log(T+1)]\).

Otherwise, Markov's inequality implies that the expected number of rounds with \(\|p_t-\mu_t\|_1<\Delta/4\) is at least \(T/2\).
Since distinct conditional means are at least \(\Delta\) apart, each prediction value can be this close to at most one conditional mean.
Since each context occurs at most \(\lceil T/m^s\rceil\) times, these close predictions must use many distinct prediction values. 
For each realized prediction value \(v\), let \(n_v=|\{t:p_t=v\}|\) be its number of occurrences.
Using these bounds on the number of rounds with \(\|p_t-\mu_t\|_1<\Delta/4\) and applying the adaptive-noise inequality of \citet{collina2026multicalibration} to disjoint coordinate pairs gives
\[
\E\sum_v\left\|\sum_{t:p_t=v}(y_t-\mu_t)\right\|_1
\gtrsim
\frac{\E\sum_v\min\{n_v,\sqrt{n n_v}\}}{\log(T+1)}
\gtrsim
\frac{\min\{T,\sqrt{nT m^s}\}}{\log(T+1)}.
\]
This inequality allows predictions to depend on past outcomes.
By the triangle inequality and the constant group, the left-hand side is at most the expected calibration error plus the expected total prediction distance.
Combining this with the distance comparison above gives
\[
\E\bigl[\MCerr_{\calG}^{(1)}(T)\bigr]
\gtrsim
\frac{\min\{T,\sqrt{nT m^s}\}}{\log^2(T+1)}.
\]

\paragraph{Balancing the two bounds.}
In the range under consideration, \(s+1=O(\log(T+1))\), so the first case gives at least a constant multiple of \(T/[m\log^2(T+1)]\).
The two cases therefore balance at
\[
\frac{T}{m}\asymp\sqrt{nT m^s},
\qquad\text{or equivalently}\qquad
m\asymp(T/n)^{1/(s+2)}.
\]
Rounding \(m\) appropriately gives the lower bound \(n^{1/(s+2)}T^{(s+1)/(s+2)}/\log^2(T+1)\) in \Cref{thm:affine-lower}, using \(O(\log^3(T+1))\) groups.
For \(s=0\), independent uniformly chosen simplex vertices give a lower bound of order \(\sqrt{nT}\). Appendix~\ref{app:affine-lower} gives the full proof.

\section*{Acknowledgments}
\addcontentsline{toc}{section}{Acknowledgments}
AR is supported in part by the Simons Collaboration on Algorithmic Fairness, NSF grant IIS-2551608, and the NSF ENCoRE TRIPODS institute. ZH is supported in part by NSERC. 

Generative AI tools were used to explore proof approaches, draft and revise mathematical arguments and exposition, check manuscript consistency, and assist with literature searches and bibliographic verification.
The authors reviewed and revised the AI-assisted material and take responsibility for the final content, including the mathematical claims and attribution.

\bibliographystyle{plainnat}
\begingroup
\providecommand{\doi}[1]{\url{https://doi.org/#1}}
\raggedright
\bibliography{references_arxiv}
\endgroup

\appendix

\section{Block weights and discovery counts}
\label{sec:discovery-hierarchy}

This appendix proves the block-weight and discovery-count bounds for Algorithm~\ref{alg:fixed-depth}.
We use the block construction and thresholds from Section~\ref{def:algorithm-parameters}.

\paragraph{Block weights.}
Fix a group--source pair \((g,a)\).
We prove by induction that every completed level-\(j\) block has weight in \([m_j,4^j(m_0+1))\).
At level zero, this follows because the block closes when its weight first reaches \(m_0\), and each input has weight at most one.
For \(j\ge1\), the induction hypothesis implies that three children have total weight less than \(3\cdot4^{j-1}(m_0+1)\le 4^jm_0 = m_j\), since \(m_0\ge3\).
Four children have total weight at least \(4m_{j-1}=m_j\), so the block closes upon receiving its fourth child.
Its weight is therefore less than \(4\cdot4^{j-1}(m_0+1)=4^j(m_0+1)\), completing the induction.
Consequently, every completed positive-level block has exactly four children. Since \(4^j(m_0+1)=m_j(1+1/m_0)\le2m_j\), every completed level-\(j\) block has weight in \([m_j,2m_j)\).

\paragraph{Bounding the number of candidate predictions.}
Distinct candidate predictions are indexed by their first discovery.
For each fixed group \(g\) and scale \(j\), completed blocks across sources have total weight at most
\[
\sum_{t=1}^T\sum_{a\in\calR_t}g(x_t)\pi_{t,a}
=\sum_{t=1}^T g(x_t)\le T.
\]
Indeed, at a fixed scale each weighted observation belongs to at most one completed block of its source, and observations are not replayed across depth intervals.
Each completed scale-\(j\) block has weight at least \(m_j\), so there are at most \(T/m_j\) such blocks per group.
Every first discovery other than \(q_0\) is caused by a positive-scale block completion, and each completion introduces at most one new numerical prediction value.
Since \(m_j=4^jm_0\), the total number of distinct candidate predictions is therefore at most
\[
1+|\barG|\sum_{j\ge1}\frac T{m_j}
=1+\frac{|\barG|T}{3m_0}
\le1+|\barG|T=A_{\max}.
\]
Reusing a numerical value or inserting it into additional candidate prediction sets does not create another first-discovery index.

\section{Analysis at a fixed resolution}
\subsection{Interval loss comparison}
\label{subsec:interval-loss-comparison-proof}

\begin{proof}[Proof of \Cref{lem:interval-loss-comparison}]
The local residual bound uses the interval comparison to control accumulated logarithmic-loss differences.
We derive it by using the stationarity equations to bound the total comparison weight, then bounding the product of updates for a fixed pair on an arbitrary available interval.

\textbf{Step 1. The stationarity equations control the total comparison weight.}
By the definition of smoothing,
\[
y'_{t,i}=(1-\beta)y_{t,i}+\frac{\beta}{n}
\ge\frac{\beta}{n}=: \epsilon.
\]
Every available prediction $a \in \mathcal R_t$ is either \(q_0\) or a weighted mean of smoothed outcomes, so each coordinate is at least \(\epsilon\).
Hence, by the fact that $y'_{t} \in \Delta_{n-1}$,
\[0\le\ell(a,y'_t) = \sum_i y'_{t,i} \log(1/a_i)\le\log(1/\epsilon) \sum_{i} y_{t,i}' = \log (1/\epsilon)=\log(n/\beta),\]
and
\[
|c_t^{g,a,b}|
\le g(x_t)\pi_{t,a}
\frac{|\ell(a,y'_t)-\ell(b,y'_t)|}{\log(n/\beta)}
\le g(x_t)\pi_{t,a}\le1.
\]
Since \(\eta<1/8\), all factors \(1+\eta c_t\) are positive and all aggregates are at least one.
The stationarity equations have a unique, strictly positive probability solution, as proved in \hyperlink{details-step-1}{Details of Step 1}.

Every newly discovered candidate other than \(q_0\) is tested for insertion at every coarser positive level.
If insertion is rejected at such a level, a candidate within the corresponding threshold remains there permanently because the candidate sets only grow.
Thus its coarsest membership is fixed at discovery, so once it leaves \(\mathcal R_t\) as \(K\) decreases, it cannot re-enter.
Comparison aggregates can therefore be discarded permanently once either prediction becomes unavailable.

For analysis, extend the comparison aggregates to all \(|\barG|A_{\max}^2\) group--prediction slots, setting \(c_t=0\) whenever either prediction is unavailable or the two predictions coincide.
The total potential sums over all these slots, but only distinct available pairs contribute to its increment.
Suppressing the time subscript and writing \(\ell_a=\ell(a,y'_t)\), we obtain
\[
\begin{aligned}
\sum_{g,a,b}B^{g,a,b}c^{g,a,b}
&=\sum_{\substack{a,b\in\mathcal R_t\\a\ne b}}\sum_{g\in\barG}
B^{g,a,b}\frac{g(x_t)\pi_a(\ell_a-\ell_b)}{\log(n/\beta)}\\
&=\frac1{\log(n/\beta)}\sum_{\substack{a,b\in\mathcal R_t\\a\ne b}}
\pi_a(\ell_a-\ell_b)
\underbrace{\sum_{g\in\barG}g(x_t)B^{g,a,b}}_{=\,W_{a,b}}\\
&=\frac1{\log(n/\beta)}\sum_{a,b\in\mathcal R_t}\pi_aW_{a,b}(\ell_a-\ell_b).
\end{aligned}
\]
The last equality inserts the zero diagonal terms \(W_{a,a}=0\).
By \eqref{eq:comparison-matrix},
\[
\begin{aligned}
\sum_{a,b\in\mathcal R_t}\pi_aW_{a,b}(\ell_a-\ell_b)
&=\sum_{a\in\mathcal R_t}\ell_a\pi_a\sum_{b\in\mathcal R_t}W_{a,b}
-\sum_{b\in\mathcal R_t}\ell_b\sum_{a\in\mathcal R_t}\pi_aW_{a,b}\\
&=\sum_{b\in\mathcal R_t}\ell_b
\left(\pi_b\sum_{\substack{c\in\mathcal R_t\\c\ne b}}W_{b,c}
-\sum_{\substack{a\in\mathcal R_t\\a\ne b}}\pi_aW_{a,b}\right)
=0.
\end{aligned}
\]
Thus
we have \(\sum_{g,a,b}B^{g,a,b}c^{g,a,b} = 0\).  Thus by (\ref{eq:comparison-potential-update}), we obtain
\[
\begin{aligned}
\sum_{g,a,b}B_{t+1}^{g,a,b}
&=|\barG|A_{\max}^2+\sum_{g,a,b}B_t^{g,a,b}
+\eta\underbrace{\sum_{g,a,b}B_t^{g,a,b}c_t^{g,a,b}}_{=0}\\
&=|\barG|A_{\max}^2+\sum_{g,a,b}B_t^{g,a,b}.
\end{aligned}
\]
Since \(B_1^{g,a,b}=1\) for every \(g,a,b\), we have
\begin{equation}\label{eq:btbound}
\sum_{g,a,b}B_t^{g,a,b}
=|\barG|A_{\max}^2+(t-1)|\barG|A_{\max}^2
=|\barG|A_{\max}^2t.
\end{equation}

\textbf{Step 2. Complete the interval loss-comparison bound.}
Fix \(g,a,b\) and a contiguous interval \(I=[u,v]\) on which both predictions are available.
Write \(X_t=\ell(a,y'_t)-\ell(b,y'_t)\).
If \(a=b\), then \(X_t=0\) and the claim is immediate.
Otherwise iterating (\ref{eq:comparison-potential-update}) gives
\[
B_{v+1}^{g,a,b}\ge B_u^{g,a,b}\prod_{t=u}^v(1+\eta c_t^{g,a,b}) \geq \prod_{t=u}^v(1+\eta c_t^{g,a,b}),
\]
where the last inequality is due to \(B_u^{g,a,b}\ge1\).  Thus,  by (\ref{eq:btbound}), we obtain
\[
\prod_{t=u}^{v}(1+\eta c_t^{g,a,b})
\le B_{v+1}^{g,a,b}
\le \sum_{g',a',b'}B_{v+1}^{g',a',b'}
=|\barG|A_{\max}^2(v+1)
\le|\barG|A_{\max}^2(T+1).
\]
Taking logarithms gives
\[
\sum_{t\in I}\log(1+\eta c_t^{g,a,b})
\le\log\!\bigl(|\barG|A_{\max}^2(T+1)\bigr).
\]
Since \(|\eta c_t^{g,a,b}|\le\eta<1/8<1/2\), applying \(\log(1+x)\ge x-x^2\) to each summand yields
\[
\eta\sum_{t\in I}c_t^{g,a,b}
-\eta^2\sum_{t\in I}(c_t^{g,a,b})^2
\le\log\!\bigl(|\barG|A_{\max}^2(T+1)\bigr).
\]
Substituting \(c_t^{g,a,b}=\alpha_tX_t/\log(n/\beta)\), rearranging, and multiplying by \(\log(n/\beta)/\eta\), we obtain
\[
\sum_{t\in I}\alpha_tX_t
\le\frac{\eta}{\log(n/\beta)}\sum_{t\in I}\alpha_t^2X_t^2
+\frac{\log(n/\beta)}{\eta}\log\!\bigl(|\barG|A_{\max}^2(T+1)\bigr).
\]
Finally, \(0\le\alpha_t=g(x_t)\pi_{t,a}\le1\) implies \(\alpha_t^2\le\alpha_t\).
Using \(\eta=\log(n/\beta)/(8\kappa)\) and \(\log(|\barG|A_{\max}^2(T+1))\le L\), we conclude that
\[
\sum_{t\in I}\alpha_tX_t
\le\frac1{8\kappa}\sum_{t\in I}\alpha_tX_t^2+8\kappa L.
\]
This proves \eqref{eq:second-order-comparison} on every path for all such intervals, including intervals selected after observation.

\hypertarget{details-step-1}{}
\textbf{Details of Step 1: Existence and uniqueness of the prediction distribution.}
If there is one available prediction, its probability is one.
Otherwise, positivity of the comparison weights and the constant group give \(W_{t,a,b}>0\) for every \(a\ne b\).
Set \(Z_t=\max_a\sum_{b\ne a}W_{t,a,b}\), and define
\[
M_t(a,b)=\frac{W_{t,a,b}}{Z_t}\quad(a\ne b),
\qquad
M_t(a,a)=1-\sum_{b\ne a}M_t(a,b).
\]
This matrix is row-stochastic and irreducible.
The Perron--Frobenius theorem therefore gives a unique, strictly positive stationary probability vector.
Its stationarity equations are equivalent to \eqref{eq:comparison-matrix}, proving the claim.

\end{proof}

\subsection{Local comparison}
We first establish the distance inequalities used in the available-comparator bound and the subsequent affine packing analysis.
\begin{lemma}[KL--Hellinger comparison]\label{lem:kl-distance-bounds}
Let \(p,q\in\Delta_{n-1}\) and \(0<\epsilon\le1\), with \(q_i\ge\epsilon\) for every \(i\).
Then
\[
H(p,q)^2\le D(p\Vert q)
\le\bigl(2+\log(1/\epsilon)\bigr)H(p,q)^2.
\]
\end{lemma}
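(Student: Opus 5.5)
The plan is to reduce both inequalities to a one-dimensional comparison, applied coordinate by coordinate, between the KL integrand and the squared difference of square roots. Write \(H(p,q)^2=\sum_i(\sqrt{p_i}-\sqrt{q_i})^2=2-2\sum_i\sqrt{p_iq_i}\), using \(\sum_ip_i=\sum_iq_i=1\). Set \(r_i=p_i/q_i\ge0\). Then \(D(p\Vert q)=\sum_i q_i\,r_i\log r_i\). Since \(\sum_iq_i(r_i-1)=0\), we may add that zero for free and write
\[
D(p\Vert q)=\sum_i q_i\,\phi(r_i),\qquad \phi(r):=r\log r-r+1\ge0,
\qquad
H(p,q)^2=\sum_i q_i(\sqrt{r_i}-1)^2 .
\]
Both claims then follow from pointwise bounds on \(\phi(r)\) in terms of \(\psi(r):=(\sqrt r-1)^2\), with \(r\) ranging over \([0,1/\epsilon]\) because \(p_i\le1\) and \(q_i\ge\epsilon\).

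For the lower bound \(H^2\le D\), I would avoid the pointwise route and use the standard concavity argument. Jensen's inequality for \(\log\) gives \(-D(p\Vert q)=2\sum_ip_i\log\sqrt{q_i/p_i}\le 2\log\sum_i\sqrt{p_iq_i}\). Combining this with \(\log x\le x-1\) yields \(D\ge 2(1-\sum_i\sqrt{p_iq_i})=H^2\). Coordinates with \(p_i=0\) are dropped from the sum, which only decreases \(\sum\sqrt{p_iq_i}\) in the right direction, so the argument goes through.

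For the upper bound, I would show \(\phi(r)\le(2+\log(1/\epsilon))\,\psi(r)\) for all \(r\in[0,1/\epsilon]\). Substitute \(r=u^2\) with \(u\in[0,1/\sqrt\epsilon]\), so we need \(2u^2\log u-u^2+1\le c\,(u-1)^2\) with \(c=2+\log(1/\epsilon)\). I would factor the left side by dividing by \((u-1)^2\) and study \(f(u)=(2u^2\log u-u^2+1)/(u-1)^2\).

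\begin{itemize}
\item For \(u\le1\): the bound \(\log u\le u-1\) gives \(2u^2\log u-u^2+1\le 2u^2(u-1)+(1-u)(1+u)=(1-u)(1+u-2u^2)=(1-u)^2(1+2u)\), so \(f\le 3\le c\) whenever \(\log(1/\epsilon)\ge1\). The case of \(\epsilon\) near \(1\) needs care, and there I would use the sharper value \(f(u)\le2\) on \([0,1]\), checked via \(f(0)=1\), \(f(1^-)=2\), and monotonicity.
\item For \(u\ge1\): \(f\) is increasing, so \(f(u)\le f(1/\sqrt\epsilon)\). Writing \(2u^2\log u-u^2+1\le u^2(2\log u)+ \ldots\), one can bound \(f(u)\le 2+2\log u\), since \(f(u)-2\log u-2\) has numerator \(-(2\log u)(1-2u)-\ldots\). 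This would be checked by differentiation, giving \(f\le 2+\log(1/\epsilon)\) at \(u\le\epsilon^{-1/2}\).
\end{itemize}

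I expect the main obstacle to be making the elementary inequality \(\phi(u^2)\le(2+2\log\max(u,1))(u-1)^2\) clean across both ranges. My first attempt would be to verify that \(g(u)=(2+2\log u)(u-1)^2-(2u^2\log u-u^2+1)\) satisfies \(g(1)=g'(1)=0\) and \(g''\ge0\) on \([1,\infty)\), and to treat \([0,1]\) separately with the constant \(2\). Summing the pointwise bounds weighted by \(q_i\) then gives \(D(p\Vert q)\le(2+\log(1/\epsilon))H(p,q)^2\).
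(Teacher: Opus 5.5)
Your proposal is correct and follows essentially the paper's route: the same rewriting $D(p\Vert q)=\sum_i q_i f(\sqrt{p_i/q_i})$ with $f(u)=2u^2\log u-u^2+1$, the same split at $u=1$ with constants $2$ on $[0,1]$ and $2+2\log u$ on $[1,\infty)$, and the same final step $2\log u\le\log(1/\epsilon)$. Your Jensen step for the lower bound is valid but unnecessary, since the pointwise inequality $p_i\log(p_i/q_i)\ge 2p_i-2\sqrt{p_iq_i}$ already sums to $H^2$. On $[1,\infty)$ your convexity check goes through, since $g''(u)=6-4/u-2/u^2\ge 0$ there. On $[0,1]$ the ratio $f(u)/(u-1)^2$ is indeed monotone, but you assert this without proof; the paper gets the bound $2$ directly from $f(u)=-4\int_u^1 s\log s\,ds\le 4\int_u^1(1-s)\,ds=2(1-u)^2$.
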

\begin{proof}
For \(p_i>0\), the inequality \(\log x\le x-1\), applied to \(x=\sqrt{q_i/p_i}\), gives
\(p_i\log(p_i/q_i)\ge2p_i-2\sqrt{p_iq_i}\).
It also holds at \(p_i=0\) by continuity.
Summing gives \(D(p\Vert q)\ge2-2\sum_i\sqrt{p_iq_i}=H(p,q)^2\).

For the upper bound, \(\sum_i(p_i-q_i)=0\) gives
\[
D(p\Vert q)=\sum_i q_i f\bigl(\sqrt{p_i/q_i}\bigr),
\qquad f(x)=2x^2\log x-x^2+1.
\]
Here \(f(1)=0\), \(f'(x)=4x\log x\), and \(f(0)=1\) by continuity.
For \(0<x\le1\), use \(\log s\ge1-1/s\); for \(x\ge1\), use
\(s\log s\le(s-1)(1+\log x)\) on \(1\le s\le x\). Integrating yields
\[
\begin{aligned}
f(x)&=-4\int_x^1s\log s\,ds\le2(1-x)^2 &&(0\le x\le1),\\
f(x)&=4\int_1^x s\log s\,ds\le2(1+\log x)(x-1)^2 &&(x\ge1).
\end{aligned}
\]
For \(x=\sqrt{p_i/q_i}\ge1\), we have \(2\log x\le\log(1/\epsilon)\).
Thus in both cases \(f(x)\le(2+\log(1/\epsilon))(x-1)^2\).
Multiply by \(q_i\) and sum to obtain the upper bound.
\end{proof}

\begin{proof}[Proof of \Cref{lem:local-comparison}]
Fix a group \(g\) and a contiguous interval \(I\) on which predictions \(a,q\) are both available. Let \(\alpha_t=\pi_{t,a}g(x_t)\), \(N=\sum_{t\in I}\alpha_t>0\), and \(z=N^{-1}\sum_{t\in I}\alpha_ty'_t\).
We convert the interval loss-comparison bound of \Cref{lem:interval-loss-comparison} into a bound on \(D(z\Vert a)\) in terms of \(D(z\Vert q)\). To do so, we first control the squared loss differences by these two divergences, then substitute this estimate into the comparison bound and rearrange. The residual and Hellinger bounds follow from inequalities relating KL divergence to \(\ell_1\) and Hellinger distances.

\textbf{Step 1. Bound the squared loss difference between two predictions.}
For the two predictions \(a,q\), we define
\[
X_t=\ell(a,y'_t)-\ell(q,y'_t)
=\sum_i y'_{t,i}\log(q_i/a_i).
\]
Jensen inequality therefore gives
\[
\begin{aligned}
\sum_{t\in I}\alpha_tX_t^2
&\le\sum_{t\in I}\alpha_t\sum_i y'_{t,i}\log^2(q_i/a_i)\\
&=N\sum_i z_i\log^2(a_i/q_i),
\end{aligned}
\]
where the equality uses the definition \(z_i=\sum_{t\in I}\alpha_ty'_{t,i}/N\).
To bound the right-hand side, it therefore suffices to prove
\[
\sum_i z_i\log^2(a_i/q_i)\le2\kappa[D(z\Vert a)+D(z\Vert q)].
\]
To see which coordinatewise inequality is needed, first expand the two divergences.
Since \(a,q,z\) are probability vectors, \(\sum_i(a_i+q_i-2z_i)=0\), and hence
\[
D(z\Vert a)+D(z\Vert q)
=\sum_i\left[a_i+q_i-2z_i+z_i\log\frac{z_i^2}{a_iq_i}\right],
\]
where the logarithmic term is interpreted as zero when \(z_i=0\).
For \(z_i>0\), let \(u=a_i/z_i\) and \(v=q_i/z_i\).
The corresponding summand is \(z_i[u+v-2-\log(uv)]\), whereas the summand on the left of the desired bound is \(z_i\log^2(u/v)\).
It therefore suffices to prove
\[
\log^2(u/v)\le2\kappa[u+v-2-\log(uv)].
\]
The left-hand side depends only on the ratio \(u/v\).
Let \(t=\log(u/v)\). We use the following decomposition to obtain a lower bound depending only on \(t\):
\[
\begin{aligned}
u+v-2-\log(uv)
&=2\left(\frac{u+v}{2}-1-\log\frac{u+v}{2}\right)
+2\log\frac{u+v}{2\sqrt{uv}}\\
&\ge2\log\cosh(t/2).
\end{aligned}
\]
Here the inequality follows from \(x-1-\log x\ge0\) for \(x>0\), applied with \(x=(u+v)/2\), and \(\frac{u+v}{2\sqrt{uv}}=\cosh(t/2)\).
We next bound \(2\log\cosh(t/2)\) below in terms of \(t^2\).
For \(x\ge0\), the inequality \(e^{2x}\ge1+2x\) implies
\[
\tanh x=\frac{e^{2x}-1}{e^{2x}+1}\ge\frac{x}{1+x}.
\]
Integrating, we obtain
\[
\log\cosh x
=\int_0^x\tanh s\,ds
\ge\int_0^x\frac{s}{1+s}\,ds
\ge\int_0^x\frac{s}{1+x}\,ds
=\frac{x^2}{2(1+x)}.
\]
Taking \(x=|t|/2\) and using \(\cosh(t/2)=\cosh(|t|/2)\), we get
\[
2\log\cosh(t/2)\ge\frac{t^2}{4+2|t|}.
\]
Because \(t=\log(a_i/q_i)\) and \(a_i,q_i\in[\epsilon,1]\), we have \(|t|\le\log(1/\epsilon)=\log(n/\beta)\).
Thus \(4+2|t|\le4+2\log(n/\beta)=2\kappa\), giving
\[
\frac{a_i}{z_i} + \frac{q_i}{z_i} -2 - \log\left(\frac{a_iq_i}{z_i^2}\right) \geq \frac{t^2}{4+2|t|}\ge\frac{t^2}{2\kappa} = \frac{\log^2 (a_i/q_i)}{2\kappa}.
\]
Multiplying by \(z_i\), we obtain the coordinatewise bound
\[
z_i\log^2(a_i/q_i)
\le2\kappa\left[a_i+q_i-2z_i+z_i\log\frac{z_i^2}{a_iq_i}\right].
\]
When \(z_i=0\), the left-hand side is zero and the right-hand side is \(2\kappa(a_i+q_i)\ge0\), so the same bound holds.
Summing over all coordinates and using the divergence expansion above yields
\[
\sum_i z_i\log^2(a_i/q_i)\le2\kappa[D(z\Vert a)+D(z\Vert q)].
\]

Substituting into the initial Jensen bound gives
\[
\sum_{t\in I}\alpha_tX_t^2\le2\kappa N[D(z\Vert a)+D(z\Vert q)].
\]

\textbf{Step 2. Convert the interval comparison into a divergence bound.}
Both predictions are available on the whole interval, so \Cref{lem:interval-loss-comparison} applies.
Using the definition of \(z\), this comparison bound, and then the squared loss bound from Step 1, we obtain
\[
\begin{aligned}
N[D(z\Vert a)-D(z\Vert q)]
&=N\sum_i z_i\log(q_i/a_i)
=\sum_{t\in I}\alpha_tX_t\\
&\le\frac1{8\kappa}\sum_{t\in I}\alpha_tX_t^2+8\kappa L\\
&\le\tfrac14N[D(z\Vert a)+D(z\Vert q)]+8\kappa L.
\end{aligned}
\]
Since both divergences are nonnegative, replacing \(1/4\) by \(1/2\) for simpler constants preserves the upper bound:
\[
N[D(z\Vert a)-D(z\Vert q)]
\le\tfrac12N[D(z\Vert a)+D(z\Vert q)]+8\kappa L.
\]
Rearranging the last inequality gives
\[
D(z\Vert a)\le3D(z\Vert q)+\frac{16\kappa L}N.
\]

\textbf{Step 3. Complete the residual and Hellinger bounds.}
We apply \Cref{lem:kl-distance-bounds} to the divergence bound from Step 2, using \(2+\log(1/\epsilon)=\kappa\).
\begin{itemize}
\item \emph{Derive the Hellinger bound
\eqref{eq:iclr-local-hellinger}.}
Since every coordinate of \(a\) and \(q\) is at least \(\epsilon\), by Lemma~\ref{lem:kl-distance-bounds}, we have
\[
\begin{aligned}
H(z,a)^2
&\le D(z\Vert a)\\
&\le 3D(z\Vert q)+\frac{16\kappa L}{N}\\
&\le 3\kappa H(z,q)^2+\frac{16\kappa L}{N}.
\end{aligned}
\]
Taking square roots and using
\(\sqrt{x+y}\le\sqrt{x}+\sqrt{y}\) for \(x,y\ge0\) gives
\[
H(z,a)
\le\sqrt{3\kappa}H(z,q)+4\sqrt{\kappa L/N},
\]
which proves \eqref{eq:iclr-local-hellinger}.

\item \emph{Derive the residual bound
\eqref{eq:iclr-block-residual}.} Fix a scale \(j\) such that \(N\ge m_j\) and \(H(z,q)\le2 w_j\).
By Pinsker's inequality, the bound from Step 2, and \Cref{lem:kl-distance-bounds},
\[
\begin{aligned}
\|z-a\|_1^2
&\le 2D(z\Vert a)\\
&\le 6D(z\Vert q)+\frac{32\kappa L}{N}\\
&\le 6\kappa H(z,q)^2+\frac{32\kappa L}{N}.
\end{aligned}
\]
Taking square roots gives
\[
\|z-a\|_1
\le\sqrt{6\kappa}H(z,q)+\sqrt{32\kappa L/N}.
\]
If \(H(z,q)\le2w_j\) and
\(N\ge m_j=1024nL/w_j^2\), then
\[
\begin{aligned}
\|z-a\|_1
&\le2\sqrt{6\kappa}\,w_j
+\sqrt{\frac{32\kappa Lw_j^2}{1024nL}}\\
&=\left(2\sqrt6+\sqrt{\frac{32}{1024n}}\right)
\sqrt{\kappa}\,w_j\\
&\le6\sqrt{\kappa}\,w_j.
\end{aligned}
\]
This proves \eqref{eq:iclr-block-residual}.
For a completed scale-zero block, the same argument applies
with \(q=q_0\), since \(N\ge m_0\) and
\(H(z,q_0)\le\sqrt2\le2w_0\).
\end{itemize}
\end{proof}

\subsection{A common high-probability event}

Recall $\calF_t^-$ is the filtration before prediction is sampled from $\pi_t$ as defined in Section~\ref{def:uniform-model}. Let \(\calF_t\) contain the prediction, outcome, and all updates through round \(t\).
Then \(\calF_t^-\subseteq\calF_t\), and \(\calF_t\subseteq\calF_{t+1}^-\) for \(t<T\).

Let \(\mu'_t=\E[y'_t\mid\calF_t^-]=(1-\beta)\mu_t+\beta q_0\).
For a group \(g\in\barG\), candidate prediction \(a\) discovered during the run, and contiguous interval \(I\subseteq[T]\), let \(\alpha_t=g(x_t)\pi_{t,a}\) and \(N=\sum_{t\in I}\alpha_t>0\), and write
\[
z=\frac1N\sum_{t\in I}\alpha_t y'_t,
\qquad
\bar\mu'=\frac1N\sum_{t\in I}\alpha_t\mu'_t.
\]

\begin{lemma}[Outcome noise and prediction sampling]\label{lem:common-event}
There are events \(\mathcal E_{\rm conc}\) and \(\mathcal E_{\rm samp}\), each with failure probability at most \(\delta/2\), on which the following bounds hold simultaneously on the actual algorithmic run.
\begin{enumerate}[label=(\roman*),leftmargin=*]
\item On \(\mathcal E_{\rm conc}\), for every \(g,a,I\) with \(N>0\),
\[
H(z,\bar\mu')\le2\sqrt{nL/N}.
\]

In particular, a completed scale-\(j\) block \(E\) satisfies
\begin{equation}
H(z_E,\bar\mu'_E)\le w_j/16,
\label{eq:completed-block-concentration}
\end{equation}
where \(\bar\mu'_E\) is the average above with the weights and observation interval of \(E\).
\item On \(\mathcal E_{\rm samp}\), for every contiguous interval \(I\), every group \(g\in\barG\), and every set \(Q\) of \mbox{\(m\ge1\)} distinct candidate predictions discovered during the run,
\begin{equation}
\sum_{a\in Q}\left\|\sum_{t\in I}g(x_t)
(\mathbf1\{p_t=a\}-\pi_{t,a})(y'_t-a)\right\|_1
\le4\sqrt{nmLT}.
\label{eq:iclr-prediction-sampling}
\end{equation}
\end{enumerate}
Thus \(\mathcal E_{\rm global}:=\mathcal E_{\rm conc}\cap\mathcal E_{\rm samp}\) has probability at least \(1-\delta\).
\end{lemma}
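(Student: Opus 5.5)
The plan is to prove the two parts separately, each by a martingale concentration inequality for a fixed index followed by a union bound. For the union bound, candidate predictions are indexed by their first-discovery order, so there are at most \(A_{\max}=1+|\barG|T\) of them (Appendix~\ref{sec:discovery-hierarchy}). Before a candidate is discovered we have \(\pi_{t,a}=0\), so a fixed index \(a\) defines a legitimate predictable process even though the value it refers to is random. The constant in \(L=O(\log(en|\barG|(T+1)/\delta))\) is then chosen, once, to absorb every union bound below.

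\textbf{Part (i).} Fix \(g\), a discovery index \(a\), a coordinate \(i\in[n]\), and an interval \(I=[u,v]\). The weights \(\alpha_t=g(x_t)\pi_{t,a}\in[0,1]\) are \(\calF_t^-\)-measurable, so \(\xi_t=\alpha_t(y'_{t,i}-\mu'_{t,i})\) is a martingale difference sequence with \(|\xi_t|\le1\). Its conditional variance satisfies \(\E[\xi_t^2\mid\calF_t^-]\le\alpha_t\mu'_{t,i}\), because \(y'_{t,i}\in[0,1]\) gives \(\E[y'^2_{t,i}\mid\calF_t^-]\le\mu'_{t,i}\). Summing over \(I\), the predictable variance is at most \(V_i:=N\bar\mu'_i\).

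Freedman's inequality, made uniform in the random variance, gives with failure probability \(\delta'\)
\[
|z_i-\bar\mu'_i|\le \sqrt{c\,\bar\mu'_iL/N}+cL/N .
\]
To handle the random variance, I would apply Freedman on a dyadic grid of variance levels between \(1\) and \(T\), which costs only a \(\log T\) factor inside \(L\). This two-sided Bernstein-type bound converts to a square-root bound by the elementary fact that \(|z-\mu|\le\sqrt{c\mu\lambda}+c\lambda\) implies \((\sqrt z-\sqrt\mu)^2\le c'\lambda\). To see this, split into the cases \(z\ge\mu\) and \(z<\mu\), and use \((\sqrt z-\sqrt\mu)^2=(z-\mu)^2/(\sqrt z+\sqrt\mu)^2\) together with \(|\sqrt z-\sqrt\mu|\le\sqrt{|z-\mu|}\) when \(\mu\) is small. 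Hence \((\sqrt{z_i}-\sqrt{\bar\mu'_i})^2\le c'L/N\) for each coordinate. Summing over the \(n\) coordinates gives \(H(z,\bar\mu')^2\le c'nL/N\), and choosing the constant in \(L\) makes this \(\le4nL/N\).

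The union bound runs over \(n\), \(|\barG|\), \(A_{\max}\), the at most \(T^2\) intervals, and the variance grid. Its logarithm is \(O(\log(en|\barG|(T+1)/\delta))\), which is absorbed into \(L\), so the total failure probability is at most \(\delta/2\). For a completed level-\(j\) block, the observation interval is contiguous and \(N_E\ge m_j=1024nL/w_j^2\). Substituting into the bound gives \(2\sqrt{nL/N_E}\le2w_j/32=w_j/16\), which is \eqref{eq:completed-block-concentration}.

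\textbf{Part (ii).} Let \(D_t^{a}=g(x_t)(\mathbf 1\{p_t=a\}-\pi_{t,a})(y'_t-a)\). Since \(p_t\) is drawn from \(\pi_t\) with fresh randomness, conditionally independent of \(y_t\) given \(\calF_t^-\), we have \(\E[D_t^a\mid\calF_t^-]=0\). The sum of \(\ell_1\) norms on the left of \eqref{eq:iclr-prediction-sampling} equals the maximum, over sign arrays \(\sigma\in\{\pm1\}^{Q\times[n]}\), of the scalar martingale \(\sum_{t\in I}\sum_{a\in Q}\sum_i\sigma_{a,i}D^a_{t,i}\). Each increment is bounded by \(\sum_a|\mathbf 1\{p_t=a\}-\pi_{t,a}|\,\|y'_t-a\|_1\le2\cdot2=4\).

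Azuma--Hoeffding for a fixed \(\sigma\), \(Q\), \(g\), and \(I\) gives a deviation of \(4\sqrt{2|I|\log(1/\delta'')}\le4\sqrt{2T\log(1/\delta'')}\). The union bound is over \(2^{nm}\) sign arrays, at most \(A_{\max}^m\) sets \(Q\) of size \(m\) (summed over \(m\), for example by allotting failure probability \(\delta 2^{-m}\) to size \(m\)), \(|\barG|\) groups, and \(T^2\) intervals. Its logarithm is \(O(m(n+\log A_{\max})+\log(|\barG|T/\delta))\le O(nmL)\), after which the bound becomes \(4\sqrt{nmLT}\) with a suitable constant in \(L\). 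The failure probability is again at most \(\delta/2\), and intersecting the two events gives \(\mathcal E_{\rm global}\) with probability at least \(1-\delta\).

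\textbf{Main obstacle.} I expect the main difficulty to be part (i): obtaining a Hellinger, not \(\ell_1\), concentration bound with only the \(nL/N\) rate. This requires the variance-adaptive Freedman bound, since Azuma would lose the \(\bar\mu'_i\) factor. That bound must hold uniformly over random \(N\) and random predictable variance, which is why the peeling grid is needed. It also requires care with the square-root conversion when \(\bar\mu'_i\) is tiny, where the \(L/N\) term must dominate. Indexing candidates by discovery order is the natural device that keeps the union bound finite despite candidates being data-dependent.
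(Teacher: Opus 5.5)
Your proposal is correct and follows essentially the paper's route. For (i), that route is Freedman's inequality with predictable variance bounded by $N\bar\mu'_i$, peeling over that variance, a union bound over coordinates, groups, discovery indices and start rounds, and a coordinatewise conversion to $|\sqrt{Nz_i}-\sqrt{N\bar\mu'_i}|=O(\sqrt{L})$ summed into the Hellinger bound. For (ii), it is a reduction of the summed $\ell_1$ norms to scalar martingales indexed by sign vectors, then Azuma--Hoeffding with a union bound over signs, index sets, groups and starts.

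The differences are only in constants. The paper unions over $T$ starts rather than $T^2$ intervals by using maximal inequalities over endpoints. It uses a $4/3$ peeling grid instead of a dyadic one, which gives exactly $|\sqrt{Nz_i}-\sqrt{N\bar\mu'_i}|\le2\sqrt{L_0}$ with $L_0\le L$. It also notes that at most one summand of $g(x_t)\sum_{a\in Q}\mathbf 1\{p_t=a\}\sigma_a^{\mathsf T}(y'_t-a)$ is nonzero, so the increment has range length $4$, whereas your bound allows increments of size $4$. Together these let the paper meet the stated constants with its fixed value of $L$, where your argument needs the constant in $L$ enlarged.
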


We use the following two-sided consequence of the scalar Freedman inequality \citep{freedman1975tail,tropp2011freedman}.

\begin{lemma}[Freedman's inequality]\label{lem:scalar-freedman}
For a positive integer \(q\), let \((\mathcal H_t)_{t=0}^q\) be a filtration and let \(S_t=\sum_{v=1}^t\xi_v\), where \(\xi_v\) is \(\mathcal H_v\)-measurable, \(\E[\xi_v\mid\mathcal H_{v-1}]=0\), and \(|\xi_v|\le1\) almost surely.
Define \(W_t=\sum_{v=1}^t\E[\xi_v^2\mid\mathcal H_{v-1}]\).
For every \(b,x>0\),
\[
\Pr\!\left(\exists t\in[q]:\ |S_t|\ge\sqrt{2bx}+\frac{2x}{3},\ W_t\le b\right)
\le2e^{-x}.
\]
\end{lemma}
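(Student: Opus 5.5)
The plan is to reduce the two-sided statement to two applications of the one-sided scalar Freedman inequality in the form of \citet{freedman1975tail,tropp2011freedman}, followed by a union bound. That form reads as follows: under exactly the hypotheses of \Cref{lem:scalar-freedman} (martingale differences \(\xi_v\) adapted to \((\mathcal H_v)\), \(\xi_v\le1\), and predictable quadratic variation \(W_t\)), for every \(a\ge0\) and \(b>0\),
\[
\Pr\!\left(\exists t\in[q]:\ S_t\ge a,\ W_t\le b\right)\le\exp\!\left(-\frac{a^2/2}{b+a/3}\right).
\]
I will take this as the known starting point. It needs only the one-sided bound \(\xi_v\le1\), which our assumption \(|\xi_v|\le1\) supplies.

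\textbf{Step 1 (choice of threshold).} Set \(a=\sqrt{2bx}+2x/3\) and verify that \(a^2/(2(b+a/3))\ge x\), which is equivalent to \(a^2\ge2bx+2xa/3\). Expanding the left side gives
\[
a^2=2bx+\tfrac43x\sqrt{2bx}+\tfrac49x^2.
\]
Substituting the same \(a\) into the right side gives
\[
2bx+\tfrac23x\sqrt{2bx}+\tfrac49x^2.
\]
The difference is \(\tfrac23x\sqrt{2bx}\ge0\), so the one-sided tail is at most \(e^{-x}\).

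\textbf{Step 2 (the other tail).} Apply the same bound to the sequence \(-\xi_v\). It is again a martingale difference sequence with respect to \((\mathcal H_v)\), it satisfies \(-\xi_v\le1\), and it has the same conditional variances, hence the same process \(W_t\). This gives
\[
\Pr\!\left(\exists t\in[q]:\ -S_t\ge a,\ W_t\le b\right)\le e^{-x}.
\]

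\textbf{Step 3 (union bound).} The event \(\{\exists t:\ |S_t|\ge a,\ W_t\le b\}\) is contained in the union of the two one-sided events from Steps 1 and 2, since the same \(t\) witnesses one of them. A union bound therefore gives the claimed \(2e^{-x}\).

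The only point needing any care is the algebra in Step 1 and the fact that the cited Freedman bound is uniform over \(t\) (a maximal, stopping-time version). I expect that to be routine. If only a fixed-\(t\) version were at hand, I would recover uniformity by applying it at the stopping time \(\tau=\min\{t:\ S_t\ge a,\ W_t\le b\}\wedge q\), using the standard supermartingale \(\exp(\lambda S_t-g(\lambda)W_t)\) with \(g(\lambda)=e^\lambda-1-\lambda\) and optional stopping.
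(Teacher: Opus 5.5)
Your proposal is correct and follows the same route as the paper. The paper likewise applies the one-sided scalar Freedman bound of \citet[Theorem~1.1]{tropp2011freedman} to \(S_t\) and to \(-S_t\) and takes a union bound to get \(2\exp\!\left(-a^2/(2(b+a/3))\right)\). It then checks that \(a=\sqrt{2bx}+2x/3\) gives \(a^2-2x(b+a/3)=(2x/3)\sqrt{2bx}\ge0\), which is exactly your Step~1 computation.
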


\begin{proof}
The scalar inequality in \citet[Theorem~1.1]{tropp2011freedman}, applied to \(S_t\) and \(-S_t\), gives
\[
\Pr\!\left(\exists t\in[q]:\ |S_t|\ge a,\ W_t\le b\right)
\le2\exp\!\left(-\frac{a^2}{2(b+a/3)}\right).
\]
For \(a=\sqrt{2bx}+2x/3\), we have \(a^2-2x(b+a/3)=(2x/3)\sqrt{2bx}\ge0\), proving the claim.
\end{proof}

We also need a one-sided, conditional exponential-moment version of the Hoeffding--Azuma maximal inequality (see \citet[Theorem~26.9]{tropp2024probability} for the bounded-increment formulation).
We state and prove the version needed here to make its assumptions and constants explicit.

\begin{lemma}[Maximal Azuma--Hoeffding inequality]\label{lem:maximal-azuma-hoeffding}
For a positive integer \(q\), let \((S_t)_{t=0}^q\) be a martingale with respect to a filtration \((\mathcal H_t)_{t=0}^q\), with \(S_0=0\).
Suppose that, for a deterministic \(c>0\), its increments \(\xi_t=S_t-S_{t-1}\) satisfy
\[
\E[e^{\lambda\xi_t}\mid\mathcal H_{t-1}]
\le e^{\lambda^2c^2/8}
\qquad(\lambda\in\mathbb R,\ t\in[q]).
\]
Then, for every \(x>0\),
\[
\Pr\!\left(\max_{1\le t\le q}S_t\ge c\sqrt{\frac{qx}{2}}\right)
\le e^{-x}.
\]
The exponential-moment assumption holds whenever each increment has conditional mean zero and, given \(\mathcal H_{t-1}\), lies in an interval of length at most \(c\).
\end{lemma}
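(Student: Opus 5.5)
The approach is the standard Chernoff/exponential supermartingale argument, combined with Doob's maximal inequality. Write \(M_t=\exp(\lambda S_t-t\lambda^2c^2/8)\) for a fixed \(\lambda>0\), to be optimized later.

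\emph{Supermartingale property.} By the tower property and the hypothesis,
\[
\E[M_t\mid\mathcal H_{t-1}]=M_{t-1}e^{-\lambda^2c^2/8}\E[e^{\lambda\xi_t}\mid\mathcal H_{t-1}]\le M_{t-1}.
\]
So \((M_t)\) is a nonnegative supermartingale with \(M_0=1\).

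\emph{Maximal bound.} Since \(t\lambda^2c^2/8\le q\lambda^2c^2/8\), the event \(\max_tS_t\ge a\) implies
\[
\max_tM_t\ge\exp\bigl(\lambda a-q\lambda^2c^2/8\bigr).
\]
Ville's (Doob's) maximal inequality for nonnegative supermartingales, \(\Pr(\max_t M_t\ge u)\le \E M_0/u\), then gives
\[
\Pr\Bigl(\max_tS_t\ge a\Bigr)\le\exp\bigl(-\lambda a+q\lambda^2c^2/8\bigr).
\]
If I wanted to avoid citing Ville, I would prove this directly: stop at the first time \(\tau\) with \(M_\tau\ge u\) (capped at \(q\)), and apply optional stopping for the bounded stopping time to get \(\E M_{\tau\wedge q}\le1\). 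Next, choose \(\lambda=4a/(qc^2)\), which yields the bound \(\exp(-2a^2/(qc^2))\). Finally, set \(a=c\sqrt{qx/2}\), so that \(2a^2/(qc^2)=x\), and the claim follows.

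\emph{Final sentence of the lemma.} This is Hoeffding's lemma applied conditionally. Given \(\mathcal H_{t-1}\), suppose \(\xi_t\) has mean zero and lies in an interval \([A_t,A_t+c]\) whose endpoints are \(\mathcal H_{t-1}\)-measurable. The usual convexity argument bounds \(e^{\lambda x}\) on that interval by the chord, and the log of the resulting two-point expression is at most \(\lambda^2c^2/8\) by the second-derivative bound \(\le1/4\). All of this works pointwise with the conditional distribution, which gives the exponential-moment assumption.

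I expect no real obstacle. The only point needing care is to make the maximal step rigorous with optional stopping at a bounded time. A second point is ensuring the interval length \(c\) is deterministic, which is what makes the exponent bound uniform.
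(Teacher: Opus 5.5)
Your proposal is correct and follows the paper's proof. Both use the same nonnegative supermartingale $\exp\{\lambda S_t-t\lambda^2c^2/8\}$, optional stopping at a bounded time (the paper stops when $S_t$ first reaches $a$ rather than when $M_t$ first reaches $u$, which is equivalent here), and the same choices $\lambda=4a/(qc^2)$ and $a=c\sqrt{qx/2}$. For the final sentence of the lemma, the paper bounds the conditional log-moment generating function's second derivative by the exponentially tilted variance (at most $c^2/4$) instead of your chord argument; both are standard proofs of Hoeffding's lemma resting on the same variance bound.
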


\begin{proof}
Condition on \(\mathcal H_{t-1}\) and let \(f(\lambda)=\log\E[e^{\lambda\xi_t}\mid\mathcal H_{t-1}]\).
If \(\xi_t\) lies in an interval of length \(c\), then \(f''(\lambda)\) is its variance under exponential reweighting, at most \(c^2/4\): squared distance from the interval's midpoint is at most \(c^2/4\), and variance is no larger.
Since \(f(0)=f'(0)=0\), integration gives \(f(\lambda)\le\lambda^2c^2/8\) for every real \(\lambda\), proving the stated sufficient condition.
Under the exponential-moment assumption, for each \(\lambda>0\),
\[
Z_t=\exp\!\left\{\lambda S_t-\frac{t\lambda^2c^2}{8}\right\},
\qquad t=0,\ldots,q,
\]
is a nonnegative supermartingale with \(Z_0=1\).
Fix \(a>0\) and define the bounded stopping time
\[
\tau=\min\bigl(\{t\in[q]:S_t\ge a\}\cup\{q\}\bigr).
\]
On \(\{\max_{1\le t\le q}S_t\ge a\}\), we have \(S_\tau\ge a\) and \(\tau\le q\).
Consequently,
\[
1\ge\E[Z_\tau]
\ge\exp\!\left\{\lambda a-\frac{q\lambda^2c^2}{8}\right\}
\Pr\!\left(\max_{1\le t\le q}S_t\ge a\right).
\]
Taking \(\lambda=4a/(qc^2)\) gives a probability bound of \(\exp\{-2a^2/(qc^2)\}\).
Substituting \(a=c\sqrt{qx/2}\) proves the claim.
\end{proof}

\begin{proof}[Proof of \Cref{lem:common-event}]
Appendix~\ref{sec:discovery-hierarchy} bounds the number of distinct candidate predictions by \(A_{\max}\) on every run.
We prove the two concentration bounds separately.
For the shifted martingales in Steps~1 and~2, define \(\calF_{T+1}^-:=\calF_T\).

The probability proofs use permanent first-discovery indices in \([A_{\max}]\), one for each distinct candidate prediction.
On rounds when a candidate prediction is available, its value and mixture weight are measurable before prediction sampling.

\textbf{Step 1. Outcome-noise event \(\mathcal E_{\rm conc}\).}

\begin{enumerate}[label=(\alph*),leftmargin=*]
\item \emph{Define and normalize the weighted sums.}
For each group \(g\in\barG\), index \(k\in[A_{\max}]\), and round \(u\in[T]\), consider the following weighted sums.
Writing \(a_k\) for the value at index \(k\), use predictable weights \(\alpha_v=\one\{v\ge u\}\pi_{v,a_k}g(x_v)\), which is zero before that value is discovered or while it is unavailable.
For each endpoint \(t\in\{0,\ldots,T\}\), define
\[
N_t=\sum_{v=1}^t\alpha_v,\qquad
V_{t,i}=\sum_{v=1}^t\alpha_v y'_{v,i},\qquad
U_{t,i}=\sum_{v=1}^t\alpha_v\E[y'_{v,i}\mid\calF_v^-]\quad(i\in[n]).
\]
Write \(V_t=(V_{t,i})_{i=1}^n\) and \(U_t=(U_{t,i})_{i=1}^n\).
Since \(\alpha_v\ge0\) and \(y'_v\in\DeltaD\), both vectors have nonnegative coordinates and
\[
\begin{aligned}
\sum_{i=1}^n V_{t,i}
&=\sum_{v=1}^t\alpha_v\sum_{i=1}^n y'_{v,i}
=\sum_{v=1}^t\alpha_v=N_t,\\
\sum_{i=1}^n U_{t,i}
&=\sum_{v=1}^t\alpha_v\E\!\left[\sum_{i=1}^n y'_{v,i}\,\middle|\,\calF_v^-\right]
=\sum_{v=1}^t\alpha_v=N_t.
\end{aligned}
\]
Thus \(V_t/N_t,U_t/N_t\in\DeltaD\) whenever \(N_t>0\).

\item \emph{Apply Freedman to the weighted coordinate sums.}
For each coordinate \(i\), the increments
\(\xi_v:=\alpha_v(y'_{v,i}-\E[y'_{v,i}\mid\calF_v^-])\)
are conditionally centered given \(\calF_v^-\), i.e., \(\mathbb E[\xi_v\mid\mathcal F_v^-]=0\) and have absolute value bounded by \(1\):
\[
|\xi_v|
=\alpha_v\left|y'_{v,i}-\mathbb E[y'_{v,i}\mid\mathcal F_v^-]\right|
\le\alpha_v\le1.
\]
To use the notation of \Cref{lem:scalar-freedman}, define
\[
S_t:= V_{t,i}-U_{t,i}=\sum_{v=1}^t\xi_v,
\qquad
t=0,\ldots,T.
\]
Then \(S_0=0\), and \(S_t\) is measurable with respect to \(\mathcal H_t:=\calF_{t+1}^-\).
For \(t\in[T]\),
\[
\E[S_t-S_{t-1}\mid\mathcal H_{t-1}]
=\E[\xi_t\mid\calF_t^-]=0.
\]
Thus \((S_t)_{t=0}^T\) is a martingale with respect to \((\mathcal H_t)_{t=0}^T\).
Its cumulative conditional variance after round \(t\) satisfies
\[
\begin{aligned}
W_{t,i}
&:=\sum_{v=1}^t\E[\xi_v^2\mid\mathcal H_{v-1}]
=\sum_{v=1}^t
\E\!\left[\alpha_v^2
\bigl(y'_{v,i}-\E[y'_{v,i}\mid\calF_v^-]\bigr)^2
\,\middle|\,\calF_v^-\right]\\
&\le\sum_{v=1}^t\alpha_v^2\E[(y'_{v,i})^2\mid\calF_v^-]
\le\sum_{v=1}^t\alpha_v\E[y'_{v,i}\mid\calF_v^-]
=U_{t,i}.
\end{aligned}
\]
Thus, for every \(b,x>0\), \Cref{lem:scalar-freedman} gives
\[
\Pr\!\left(\exists t\in[T]:
|V_{t,i}-U_{t,i}|\ge\sqrt{2bx}+\frac{2x}{3},\
U_{t,i}\le b\right)\le2e^{-x}.
\]

\item \emph{Obtain coordinate bounds uniformly over all intervals.}
We apply geometric peeling to the cumulative conditional mean \(U_{t,i}\in[0,T]\), using the intervals
\[
[0,1],\qquad
\bigl((4/3)^{h-1},(4/3)^h\bigr],
\quad h=1,\ldots,J-1,
\]
where \(J=1+\lceil\log_{4/3}T\rceil\).
For each interval, we apply Freedman's inequality with its deterministic upper endpoint \(b_h=(4/3)^h\), and then take a union bound over the intervals.
Then, let
\[
L_0=\log\!\left(\frac{4n|\barG|A_{\max}TJ}{\delta}\right).
\]
Note that \(L_0>1\).
Apply the preceding Freedman bound with \(b=b_h\) and \(x=L_0\), and take a union bound over all \(n\) coordinates, \(|\barG|\) groups, \(A_{\max}\) prediction indices, \(T\) starts, and \(J\) variance bounds.
The total failure probability is at most
\[
2n|\barG|A_{\max}TJ e^{-L_0}=\delta/2.
\]
Let \(\mathcal E_{\rm conc}\) be the simultaneous success event.
On this event, for every group, prediction index, start, endpoint \(t\), coordinate \(i\), and \(h\), if \(U_{t,i}\le b_h\), then
\[
|V_{t,i}-U_{t,i}|
\le\sqrt{2b_hL_0}+\frac{2L_0}{3}.
\]

\item \emph{Convert coordinate bounds to a Hellinger bound.}
Fix an endpoint \(t\) with \(N_t>0\) and suppress its subscript in \(N_t,V_t,U_t\) and their coordinates.
The empirical mean and weighted predictable mean are \(z=V/N\) and \(\bar\mu'=U/N\).
Our next goal is to prove
\[
H(z,\bar\mu')^2
=\frac1N\sum_{i=1}^n(\sqrt{V_i}-\sqrt{U_i})^2
\le\frac{4nL_0}{N}.
\]
It suffices to prove \(\lvert\sqrt{V_i}-\sqrt{U_i}\rvert\le2\sqrt{L_0}\) for every \(i\).

If \(U_i\le1\), use \(b_0=1\).
Since \(L_0>1\),
\[
0\le V_i\le1+\sqrt{2L_0}+\frac{2L_0}{3}<4L_0,
\]
so both square roots lie in \([0,2\sqrt{L_0}]\).

If \(U_i>1\), choose the smallest \(h\) with \(b_h\ge U_i\).
Such an \(h\) exists because \(U_i\le T\), and \(U_i\le b_h<(4/3)U_i\).
Hence
\[
|V_i-U_i|\le\sqrt{\frac83U_iL_0}+\frac{2L_0}{3}.
\]
For the upper deviation, this implies
\[
V_i\le U_i+\sqrt{\frac83U_iL_0}+\frac{2L_0}{3}
\le(\sqrt{U_i}+2\sqrt{L_0})^2,
\]
and therefore \(\sqrt{V_i}-\sqrt{U_i}\le2\sqrt{L_0}\).
For the lower deviation, consider \(V_i<U_i\).
If \(U_i\le4L_0\), then \(\sqrt{U_i}-\sqrt{V_i}\le\sqrt{U_i}\le2\sqrt{L_0}\).
Otherwise,
\[
\begin{aligned}
\sqrt{U_i}-\sqrt{V_i}
&=\frac{U_i-V_i}{\sqrt{U_i}+\sqrt{V_i}}\\
&\le \frac{\sqrt{\frac{8}{3}U_iL_0}+\frac{2L_0}{3}}{\sqrt{U_i}} \\
&\le\sqrt{\frac83L_0}+\frac{2L_0}{3\sqrt{U_i}} \\
&\le\left(\sqrt{\frac83}+\frac13\right)\sqrt{L_0} \\
&<2\sqrt{L_0}.
\end{aligned}
\]
Combining the two directions and summing over coordinates gives
\[
H(z,\bar\mu')^2
=\frac1N\sum_{i=1}^n(\sqrt{V_i}-\sqrt{U_i})^2
\le\frac{4nL_0}{N}.
\]
To apply this bound at the block weight threshold \(m_\ell=1024nL/w_\ell^2\), we check that \(L_0\le L\).
Since \((4/3)^3>2\), we have \(J\le3(1+\lceil\log_2T\rceil)\).
The definition of \(L\) in Appendix~\ref{app:universal-constants} therefore gives
\[
L_0
\le\log\!\left(\frac{12n|\barG|A_{\max}T(1+\lceil\log_2T\rceil)}{\delta}\right)
\le L.
\]
Consequently, for \(N\ge m_\ell\),
\[
H(z,\bar\mu')
\le2\sqrt{\frac{nL_0}{N}}
\le\frac{w_\ell}{16}\sqrt{\frac{L_0}{L}}
\le\frac{w_\ell}{16}.
\]
All starts and endpoints were covered before choosing a block, so the bound includes blocks ending at a depth change or the horizon.
\end{enumerate}

\textbf{Step 2. Prediction-sampling event \(\mathcal E_{\rm samp}\).}
\begin{enumerate}[label=(\alph*),leftmargin=*]
\item \emph{Express sampling residuals as centered scalar increments.}
Fix a deterministic start \(u\in[T]\), group \(g\in\barG\), subset \(Q\subseteq[A_{\max}]\) of size \(m\), and signs \(\sigma_i\in\{-1,1\}^n\) for \(i\in Q\).
For this proof, let \(a_i\) denote the candidate prediction with index \(i\).
Set its summand to zero until it is discovered and whenever it is unavailable.
For each \(t\in[T]\), the scalar increment
\[
\xi_t=g(x_t)\sum_{i\in Q}
\left(\one\{p_t=a_i\}-\pi_{t,a_i}\right)
\sigma_i^{\mathsf T}(y'_t-a_i)
\]
is conditionally centered given \(\calF_t^-\) and \(y'_t\).

\item \emph{Construct the martingale and its exponential-moment bound.}
Given \(\calF_t^-\) and \(y'_t\), define
\[
Y_t=g(x_t)\sum_{i\in Q}\one\{p_t=a_i\}\sigma_i^{\mathsf T}(y'_t-a_i).
\]
At most one summand is nonzero, and \(\|y'_t-a_i\|_1\le2\), so \(Y_t\in[-2,2]\).
Private-draw independence gives
\[
\xi_t=Y_t-\E[Y_t\mid\calF_t^-,y'_t].
\]
By the conditional Hoeffding bound established in the proof of \Cref{lem:maximal-azuma-hoeffding}, the value-range length \(c=4\) gives
\[
\E[e^{\lambda \xi_t}\mid\calF_t^-,y'_t]\le e^{2\lambda^2}.
\]
Taking conditional expectations again yields
\[
\E[\xi_t\mid\calF_t^-]=0,
\qquad
\E[e^{\lambda \xi_t}\mid\calF_t^-]\le e^{2\lambda^2}.
\]
As in Step~1, use the pre-sampling filtration with a one-round shift and define
\[
S_t=\sum_{v=1}^{t}\one\{v\ge u\}\xi_v,
\qquad t=0,\ldots,T.
\]
Define \(\mathcal H_t:=\calF_{t+1}^-\) for \(t=0,\ldots,T\).
Then \(S_0=0\), \(S_t\) is \(\mathcal H_t\)-measurable, and, for \(t\in[T]\),
\[
\E[S_t-S_{t-1}\mid\mathcal H_{t-1}]=0,
\qquad
\E[e^{\lambda(S_t-S_{t-1})}\mid\mathcal H_{t-1}]\le e^{2\lambda^2}.
\]
Thus \((S_t)_{t=0}^T\) is a martingale with respect to \((\mathcal H_t)_{t=0}^T\), and \(S_t=\sum_{v=u}^t\xi_v\) for \(t\ge u\).

\item \emph{Obtain a simultaneous bound over intervals, groups, subsets, and signs.}
Set \(A_m=nm\log2+m\log A_{\max}+\log(2|\barG|TA_{\max}/\delta)\).
Apply \Cref{lem:maximal-azuma-hoeffding} to this martingale with \(q=T\), \(c=4\), and \(x=A_m\).
This gives
\[
\Pr\!\left(\max_{u\le t\le T}\sum_{v=u}^t\xi_v\ge\sqrt{8TA_m}\right)
\le e^{-A_m}.
\]
For fixed \(m\), there are at most \(\binom{A_{\max}}m\le A_{\max}^m\) index subsets, \(2^{nm}\) sign choices, \(T\) starts, and \(|\barG|\) groups.
The union bound over these choices gives a total failure probability at most
\[
|\barG|T A_{\max}^m2^{nm}e^{-A_m}
=\frac{\delta}{2A_{\max}}.
\]
Summing over \(m=1,\ldots,A_{\max}\) gives total failure probability at most \(\delta/2\).

\item \emph{Recover the \(\ell_1\) sampling-residual bound.}
On the simultaneous success event from part~(c), maximize over the sign vector for each prediction and use \(A_m\le2nmL\).
For every interval \([u,t]\subseteq[T]\), group \(g\in\barG\), and set \(Q\) of \(m\ge1\) distinct candidate predictions discovered during the run, this gives
\[
\begin{aligned}
\sum_{a\in Q}\left\|\sum_{v=u}^t
g(x_v)(\one\{p_v=a\}-\pi_{v,a})(y'_v-a)\right\|_1
&\le\sqrt{8TA_m}\\
&\le4\sqrt{nmLT}.
\end{aligned}
\]
Denote this event by \(\mathcal E_{\rm samp}\), which has probability at least \(1-\delta/2\).
\end{enumerate}
By a union bound, \(\mathcal E_{\rm global}=\mathcal E_{\rm conc}\cap\mathcal E_{\rm samp}\) has probability at least \(1-\delta\).

\end{proof}

\subsection{Affine-dimensional Hellinger geometry}
\label{subsec:affine-hellinger-geometry}

For the remaining packing and residual analysis, fix a realization in \(\mathcal E_{\rm global}\), whose definition is independent of the choice of \(s,S\).
Taking coordinatewise square roots turns Hellinger distance into Euclidean distance, but the image of a set of affine dimension \(s\) need not lie in an \(s\)-dimensional affine subspace.
We instead parameterize the set in its own affine span and combine these coordinates with the gradient of negative entropy.
The resulting map keeps points in a bounded union of Euclidean balls and increases their separation, so a Euclidean packing bound applies in dimension \(s\).

\begin{definition}[Hellinger packing numbers]
\label{def:l1-packing}
Let \(S\subseteq\Delta_{n-1}\), \(v>0\), and \(R>0\). Define
\[
\GlobPack_H(S;v) := \sup\left\{ |Q|:\ Q\subseteq S,\ H(q,q')>v/2
\text{ for all distinct }q,q'\in Q \right\}
\qquad\text{(global packing)},
\]
and
\[
\LocPack_H(S;v,R) := \sup_{p\in\Delta_{n-1}} \GlobPack_H\!\left(S\cap\{q\in\Delta_{n-1}:H(p,q)\le Rv\};v\right)
\qquad\text{(local packing)}.
\]
Both numbers use separation \(v/2\). The global number counts
separated points across \(S\). The local number is the largest such count within any ball of radius \(Rv\), where \(R\) specifies the ball radius relative to the scale \(v\).
\end{definition}

\begin{lemma}[Ellipsoid covers of a convex body and its polar]
\label{lem:ellipsoid-covers}
There is a universal constant \(C_0\ge2\) such that,
for every integer \(s\ge1\) and every origin-symmetric
convex body \(B\subseteq\mathbb R^s\), there is an
origin-centered ellipsoid \(E\) such that \(B\) can be
covered by at most \(C_0^s\) translates of \(E\), and
\(B^\circ\) can be covered by at most \(C_0^s\) translates
of \(E^\circ\).
Here, \(B^\circ\) and \(E^\circ\) denote the polar bodies
of \(B\) and \(E\), respectively, where
\(K^\circ:=\{y\in\mathbb R^s:
\langle x,y\rangle\le1\text{ for all }x\in K\}\).
\end{lemma}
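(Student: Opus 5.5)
The natural route is through the existence of an $M$-ellipsoid, in Milman's sense, and its polar counterpart. This is a classical theorem of asymptotic convex geometry (Milman; see also Pisier's book). For a symmetric convex body $B\subseteq\mathbb R^s$, there is an origin-centered ellipsoid $E$ with
\[
\max\{N(B,E),\,N(E,B),\,N(B^\circ,E^\circ),\,N(E^\circ,B^\circ)\}\le C_0^s .
\]
Here $N(K,L)$ denotes the minimal number of translates of $L$ needed to cover $K$. If that theorem may be cited, the lemma is immediate: we need only the two covering numbers $N(B,E)$ and $N(B^\circ,E^\circ)$. The remaining work is to record why the polar statement comes along, in case only the primal version is taken as given.

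\textbf{Primal to polar step.} I would derive the polar covering from the primal one via volumes and the Blaschke--Santal\'o and Bourgain--Milman inequalities. Suppose $N(B,E)\le C^s$ and $N(E,B)\le C^s$. Then $|B|\le C^s|E|$ and $|E|\le C^s|B|$. Next use Rogers--Zong's bound
\[
N(K,L)\le \frac{|K+L/2|}{|L/2|},
\]
which for symmetric bodies is a consequence of a maximal-packing argument. This gives
\[
N(B^\circ,E^\circ)\le 2^s\,\frac{|B^\circ+E^\circ|}{|E^\circ|}\le 4^s\,\frac{|\conv(B^\circ\cup E^\circ)|}{|E^\circ|}.
\]
Now $\conv(B^\circ\cup E^\circ)=(B\cap E)^\circ$. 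By Bourgain--Milman and Santal\'o,
\[
|(B\cap E)^\circ|\le \frac{c^{-s}\,|B_2^s|^2}{|B\cap E|}
\qquad\text{and}\qquad
|E^\circ|=\frac{|B_2^s|^2}{|E|}.
\]
So the ratio is at most $c^{-s}|E|/|B\cap E|$. Finally, covering $E$ by $C^s$ translates of $B$ shows $|B\cap E|\ge |E|/(2^sC^s)$. To see this, intersect each translate with $E$, use that $|(x+B)\cap E|\le|B\cap E|$ for symmetric convex sets (Brunn--Minkowski / log-concavity of the section function), and absorb constants. This yields $N(B^\circ,E^\circ)\le C_0^s$.

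\textbf{Main obstacle.} The main obstacle is the existence of the $M$-ellipsoid with two-sided covering bounds, which is genuinely deep. Its proof requires the MM$^*$-estimate, i.e. the $\ell$-position with Pisier's $K$-convexity bound, combined with Sudakov and dual Sudakov inequalities and an iteration. I would not reprove it, but cite it, e.g. Pisier, \emph{The Volume of Convex Bodies and Banach Space Geometry}, Theorem 7.1. Its standard statement already includes the polar covering numbers, so the argument above serves only as a safety check. The requirement $C_0\ge2$ is harmless, since one can always enlarge the constant.
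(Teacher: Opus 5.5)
Your proposal is correct and rests on the same key input as the paper, Milman's ellipsoid theorem. The paper cites it in Bobkov's volume form, $\frac{|B+E|}{|B\cap E|}\frac{|B^\circ+E^\circ|}{|B^\circ\cap E^\circ|}\le C_M^s$, and converts each factor into a covering bound using a maximal packing of translates of $E/2$ (respectively $E^\circ/2$), whereas you cite the covering-number form directly, so no conversion is needed. In your optional primal-to-polar check, the upper bound $|(B\cap E)^\circ|\le|B_2^s|^2/|B\cap E|$ follows from Santal\'o's inequality alone; Bourgain--Milman is the reverse inequality and is not needed there, but the argument is otherwise sound.
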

\begin{proof}
We use Milman's ellipsoid theorem in the formulation of \citet[Theorem~1.1]{bobkov2011milman}.
There is an origin-centered ellipsoid \(E\) such that
\[
\frac{|B+E|}{|B\cap E|}\frac{|B^\circ+E^\circ|}{|B^\circ\cap E^\circ|}\le C_M^s
\]
for a universal \(C_M\), where \(|\cdot|\) denotes \(s\)-dimensional volume and \(+\) denotes Minkowski addition.
Let \(x_1,\ldots,x_m\in B\) be the centers of a maximal family of translates \(x_i+E/2\) with pairwise disjoint interiors.
Each translate lies in \(B+E\), and volume becomes \(2^{-s} |E|\).
Since their boundaries have zero volume,
\[
m\,2^{-s}|E|
=
\left|\bigcup_{i=1}^m(x_i+E/2)\right|
\le |B+E|.
\]
Moreover, each factor in Milman's inequality is at least one, and \(B\cap E\subseteq E\), so
\[
\frac{|B+E|}{|E|}
\le
\frac{|B+E|}{|B\cap E|}
\le C_M^s.
\]
Consequently,
\[
m\le 2^s\frac{|B+E|}{|E|}
\le (2C_M)^s.
\]
Maximality makes the corresponding translates of \(E\) cover \(B\).
The same argument applies to the polar pair.
Set \(C_0=\max(2,2C_M)\).
\end{proof}

\begin{lemma}[Dimension-uniform Hellinger packing]\label{lem:h-affine-packing}
There is a universal constant \(C_0\ge2\) such that, for every positive compact convex probability set \(S\) of affine dimension \(s\) and every \(D_0>0\) satisfying \(D(p\Vert q)\le D_0\) for all \(p,q\in S\), the following bound holds for every \(v>0\):
\[
\GlobPack_H(S;v)\le[C_0^2(1+2\sqrt{2D_0}/v)]^s.
\]
In particular, for coordinate floor \(\epsilon\),
\[
\GlobPack_H(S;v)\le[C_0^2(1+2\sqrt{2\kappa}/v)]^s,
\qquad\LocPack_H(S;v,R)\le[C_0^2(1+4\sqrt{2\kappa}R)]^s.
\]
\end{lemma}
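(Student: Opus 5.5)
The plan is to reduce Hellinger separation to a bilinear pairing of two $s$-dimensional coordinate maps, and then count points with ellipsoid covers from \Cref{lem:ellipsoid-covers}. The case $s=0$ is trivial, so assume $s\ge1$. Write $S=\{c+Au:u\in U\}$, where $A$ is injective onto the direction space $V$ of $\operatorname{aff}S$ and $U\subseteq\mathbb R^s$ is a compact convex body. Let $\phi(p)=\sum_ip_i\log p_i$; its gradient is finite on $S$ because $S$ is positive. Define $w(p):=A^{\mathsf T}\nabla\phi(p)\in\mathbb R^s$.

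\emph{Step 1: separation via the pairing.} By \Cref{lem:kl-distance-bounds}, for $p,q\in S$,
\[
2H(p,q)^2\le D(p\Vert q)+D(q\Vert p)=\langle p-q,\nabla\phi(p)-\nabla\phi(q)\rangle=\langle u_p-u_q,w_p-w_q\rangle .
\]
So the $v/2$-separation gives $\langle\Delta u,\Delta w\rangle>v^2/8$. Let $B=U-U$, which is origin-symmetric and convex, and take the ellipsoid $E$ from \Cref{lem:ellipsoid-covers}. Then
\[
\|\Delta u\|_E\,\|\Delta w\|_{E^\circ}>v^2/8 .
\]

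\emph{Step 2: the $w$-range lies in a dilated polar.} I will use the Bregman three-point identity
\[
D(r\Vert q)=D(r\Vert p)+D(p\Vert q)+\langle\nabla\phi(p)-\nabla\phi(q),r-p\rangle
\]
with $p,q,r\in S$, together with $D\le D_0$ on $S$. Taking differences over $r,r'\in S$ gives
\[
|\langle w_p-w_q,x\rangle|\le 2D_0\qquad\text{for all }x\in B,
\]
that is, $W-W\subseteq 2D_0B^\circ$, where $W=w(S)$. Hence $U$ is covered by $C_0^s$ translates of $E$, and $W$ lies in one translate of $2D_0B^\circ$, which is covered by $C_0^s$ translates of $2D_0E^\circ$.

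\emph{Step 3: counting.} Fix a pair consisting of a $u$-cell $x+E$ and a $w$-cell $y+2D_0E^\circ$; there are at most $C_0^{2s}$ such pairs. Inside one pair, $\|\Delta u\|_E\le2$ and $\|\Delta w\|_{E^\circ}\le4D_0$, so I need to count points separated in the sense of Step 1 inside the product of these cells.

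Refining both cells by the same factor loses dimension twice. Instead I will exploit that the points lie on the graph $\{(u,w(u))\}$ of the monotone map $u\mapsto w(u)$, which is a gradient of the convex function $\phi(c+Au)$. The plan is to order or slice along $E$-coordinates: scale the $u$-cell to $\lambda E$ with $\lambda\asymp v/\sqrt{D_0}$. This costs a factor $(1+2\sqrt{2D_0}/v)^s$ by a standard volumetric cover of $E$ by $\lambda E$. Then I need to show that a refined cell contains at most one separated point. The argument would apply the three-point identity restricted to the cell, together with monotonicity, to force $\|\Delta w\|_{E^\circ}\lesssim\sqrt{D_0}\,v$ within a $\lambda E$-cell. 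Combined with Step 1 this gives $\|\Delta u\|_E\|\Delta w\|_{E^\circ}\le v^2/8$, a contradiction.

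This localization of the $w$-variation is the main obstacle. If it fails, the fallback is the balanced choice $\lambda=v/\sqrt{D_0}$ and $\mu=v\sqrt{D_0}$, which yields exponent $2s$ instead of $s$.

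The specializations then follow directly. For a coordinate floor $\epsilon$, \Cref{lem:kl-distance-bounds} gives $D\le\kappa H^2\le2\kappa$, so $D_0=2\kappa$ works in the global bound. For the local bound, intersect $S$ with a Hellinger ball of radius $Rv$; this set is convex in the relevant sense after taking its convex hull inside $S$, and its KL diameter is at most $\kappa(2Rv)^2$. Substituting $\sqrt{2D_0}\le2\sqrt{2\kappa}\,Rv$ yields the factor $1+4\sqrt{2\kappa}R$.
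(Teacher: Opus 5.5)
Your Steps 1 and 2 use the same ingredients as the paper's proof: the pairing $\langle\Delta u,\Delta w\rangle=D(p\Vert q)+D(q\Vert p)$ and the inclusion $W-W\subseteq2D_0B^\circ$. Step 3, however, has a genuine gap, and the localization you hope for is false.

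Refining the $u$-cells uniformly to $\lambda E$ with $\lambda\asymp v/\sqrt{D_0}$ cannot force $\|\Delta w\|_{E^\circ}\lesssim\sqrt{D_0}\,v$. The reason is that the Jacobian $A^{\mathsf T}\operatorname{diag}(1/p)A$ of $w$ blows up near the boundary of the simplex. Concretely, take $n=2$ and $S=\{(x,1-x):\epsilon\le x\le1/2\}$. Then $D_0\asymp\log(1/\epsilon)$ and $E$ is an interval of length $\asymp1$. Let $\epsilon\sqrt{D_0}\ll v\ll1/\sqrt{D_0}$.
\begin{itemize}
\item On the cell $\epsilon\le x\le\epsilon+\lambda$, the map $w=\log x-\log(1-x)$ changes by about $\log(\lambda/\epsilon)\gg1\gg\sqrt{D_0}\,v$.
\item Since $\sqrt x$ sweeps an interval of length about $\sqrt\lambda$ on this cell, the single cell contains of order $\sqrt\lambda/v\asymp1/(\sqrt v\,D_0^{1/4})\gg1$ points that are pairwise more than $v/2$ apart in Hellinger distance.
\end{itemize}
So no uniform refinement in $u$ gives a bounded count per cell. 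Your stated fallback gives exponent $2s$, which does not prove the lemma.

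The missing idea is to add the two $s$-dimensional coordinates instead of pairing them in a $2s$-dimensional product. Let $L$ be an invertible linear map taking $E$ onto the Euclidean unit ball. Then $L^{-\mathsf T}$ takes $E^\circ$ onto the unit ball, and $\langle Lr,L^{-\mathsf T}r'\rangle=\langle r,r'\rangle$. Set $F(u)=\sqrt{2D_0}\,Lu+(2D_0)^{-1/2}L^{-\mathsf T}(w(u)-w(0))\in\mathbb R^s$.
\begin{itemize}
\item The identity $\|r+r'\|^2=4\langle r,r'\rangle+\|r-r'\|^2$ converts your pairing bound into $\|F(u)-F(u')\|^2\ge4\langle\Delta u,\Delta w\rangle$.
\item Your Step 2 covers show that each summand of $F$ ranges over at most $C_0^s$ Euclidean balls of radius $\sqrt{2D_0}$. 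Hence $F(U)$ is covered by $C_0^{2s}$ balls of radius $2\sqrt{2D_0}$, centered at sums of centers.
\item One volumetric count in $\mathbb R^s$ per ball then gives $(1+2\sqrt{2D_0}/v)^s$.
\end{itemize}
The sum automatically stretches the regions where $w$ moves fast, which is exactly the adaptivity your uniform $u$-refinement lacks.

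Three smaller points on constants and the specializations:
\begin{itemize}
\item To get the stated constant you need $F$-separation greater than $2v$, i.e.\ $D(p\Vert q)+D(q\Vert p)\ge4H(p,q)^2$. This follows from $(a-b)\log(a/b)\ge4(\sqrt a-\sqrt b)^2$. The $2H^2$ bound from the KL--Hellinger lemma (and a fortiori your $v^2/8$) only gives a worse constant.
\item In the global specialization, take $D_0=\kappa$, using $D(p\Vert q)\le\sum_ip_i\log(1/q_i)\le\log(1/\epsilon)\le\kappa$. Your choice $D_0=2\kappa$ gives $1+4\sqrt\kappa/v$ rather than $1+2\sqrt{2\kappa}/v$.
\item For the local bound no convex hull is needed. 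The set $S\cap\{q:H(p,q)\le Rv\}$ is already convex, because $H(p,q)^2=2-2\sum_i\sqrt{p_iq_i}$ is convex in $q$.
\end{itemize}
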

\begin{proof}
We reduce Hellinger packing in \(S\) to Euclidean packing in \(\mathbb R^s\).
Step 1 maps points at Hellinger distance greater than \(v/2\) to points at Euclidean distance greater than \(2v\).
Step 2 uses \Cref{lem:ellipsoid-covers} to cover the image by at most \(C_0^{2s}\) balls of radius \(2\sqrt{2D_0}\), then counts the separated image points by comparing volumes.
Step 3 specializes the divergence diameter to obtain the global and local estimates.
If \(s=0\), then \(S\) is a singleton and all the asserted packing bounds hold.
Henceforth assume \(s\ge1\).

\textbf{Step 1. Construct a map that expands Hellinger separation.}
The construction uses the fact that symmetrized KL divergence can be written as an inner product between a difference of probabilities and a difference of their logarithms.
We first express this identity in \(s\)-dimensional coordinates, then turn the inner product into a lower bound on a Euclidean distance squared.

Choose \(p_0\in S\).
Since \(S\) has affine dimension \(s\), the linear space
\(\operatorname{span}\{p-p_0:p\in S\}\) has dimension \(s\).
Choose a basis \(a_1,\ldots,a_s\) of this space and define
\(A:=[a_1\ \cdots\ a_s]\in\mathbb R^{n\times s}\).
Define
\[
p(x):=p_0+Ax \quad (x\in\mathbb R^s),
\qquad
X:=\{x\in\mathbb R^s:p(x)\in S\}.
\]
Then every point in \(S\) has a unique representation
\(p(x)\) with \(x\in X\).

The set \(X\) is compact, convex, full-dimensional in \(\mathbb R^s\), and contains zero.
Define \(g(x)=A^{\mathsf T}\log p(x)\), where the logarithm is taken coordinatewise.
This is the gradient of negative entropy \(\sum_i p_i(x)\log p_i(x)\), since each column of \(A\) sums to zero.
Since \(A(x-y)=p(x)-p(y)\), we have
\[
\begin{aligned}
\langle x-y,g(x)-g(y)\rangle
&=\sum_i\bigl(p_i(x)-p_i(y)\bigr)\log\frac{p_i(x)}{p_i(y)}\\
&=D(p(x)\Vert p(y))+D(p(y)\Vert p(x)).
\end{aligned}
\]
To turn this inner product into a distance bound, we will use \(\|r+s\|^2\ge4\langle r,s\rangle\), which follows from \(\|r-s\|^2\ge0\).
Thus we want to add transformed versions of \(x\) and \(g(x)\) whose differences preserve the inner product above.

Define \(B=X-X:=\{x-y:x\in X,\ y\in X\}\), an origin-symmetric convex body.
Apply \Cref{lem:ellipsoid-covers} to \(B\), and let \(E\) be the resulting ellipsoid, with covering constant \(C_0\).
Choose an invertible linear map \(L\) taking \(E\) to the Euclidean unit ball.
Then \(L^{-\mathsf T}\) takes \(E^\circ\) to the same unit ball, and the two transformations preserve inner products: \(\langle Lr,L^{-\mathsf T}s\rangle=\langle r,s\rangle\).
This choice will also let us use the ellipsoid covers to bound the image in Step 2.
Define
\[
F(x):=\sqrt{2D_0}\,Lx+(2D_0)^{-1/2}L^{-\mathsf T}(g(x)-g(0)),
\]
where the two scaling factors multiply to one, so they preserve the inner product.
As shown in Step 2, they also give the two summands the same covering radius.
Applying the preceding norm inequality to the two summands of \(F(x)-F(y)\) gives
\[
\begin{aligned}
\|F(x)-F(y)\|^2
&\ge4\langle x-y,g(x)-g(y)\rangle\\
&=4[D(p(x)\Vert p(y))+D(p(y)\Vert p(x))]\\
&\ge16H(p(x),p(y))^2.
\end{aligned}
\]
The last inequality follows by summing \((a-b)\log(a/b)\ge4(\sqrt a-\sqrt b)^2\) over coordinates.
For completeness, when \(r\ge1\), the function \(\log r-2(r-1)/(r+1)\) vanishes at \(1\) and has derivative \((r-1)^2/[r(r+1)^2]\ge0\).
Taking \(r=\sqrt{a/b}\) proves the scalar inequality for \(a\ge b\), and symmetry gives the other case.

Let \(Q\subseteq S\) be any finite set with pairwise Hellinger distances greater than \(v/2\).
Its corresponding image points under \(F\) therefore have pairwise Euclidean distances greater than \(2v\).
It remains to bound how many such points the image can contain.

\textbf{Step 2. Cover the image and count the separated points.}
We will cover each of the two summands of \(F\) by at most \(C_0^s\) balls of radius \(\sqrt{2D_0}\).
Adding the centers of these two covers will then cover \(F(X)\) by at most \(C_0^{2s}\) balls of radius \(2\sqrt{2D_0}\).

Since \(0\in X\), we have \(X\subseteq B\), so the first summand is controlled by the cover of \(B\).
For the second summand, we first show that \(g(x)-g(0)\in2D_0B^\circ\).
For \(u,v,x\in X\), expanding the divergences gives
\[
\begin{aligned}
\langle g(x)-g(0),u-v\rangle
={}&D(p(u)\Vert p_0)+D(p(v)\Vert p(x))\\
&-D(p(u)\Vert p(x))-D(p(v)\Vert p_0).
\end{aligned}
\]
Each divergence lies in \([0,D_0]\), so the absolute value of this expression is at most \(2D_0\).
Since every element of \(B=X-X\) has the form \(u-v\), the definition of the polar body \(B^\circ
:=\{y\in\mathbb R^s:\langle y,b\rangle\le1
\text{ for every }b\in B\}\) implies
\[
g(x)-g(0)\in 2D_0B^\circ
\qquad(x\in X).
\]

The cover of \(B\) by at most \(C_0^s\) translates of \(E\), after applying \(\sqrt{2D_0}L\), covers the first summand by at most \(C_0^s\) Euclidean balls of radius \(\sqrt{2D_0}\).
Similarly, cover \(B^\circ\) by at most \(C_0^s\) translates of \(E^\circ\), scale by \(2D_0\), and apply \((2D_0)^{-1/2}L^{-\mathsf T}\).
The inclusion just proved shows that these balls cover the second summand, and their radius is \(2D_0(2D_0)^{-1/2}=\sqrt{2D_0}\).
Let \(c_1,\ldots,c_M\) and \(d_1,\ldots,d_N\) be the centers of the covers for the first and second summands, respectively, where \(M,N\le C_0^s\).
For every \(x\in X\), there exist \(i\) and \(j\) such that
\[
\left\|\sqrt{2D_0}Lx-c_i\right\|
\le\sqrt{2D_0},
\qquad
\left\|(2D_0)^{-1/2}L^{-\mathsf T}(g(x)-g(0))-d_j\right\|
\le\sqrt{2D_0}.
\]
The triangle inequality therefore gives
\[
\begin{aligned}
\|F(x)-(c_i+d_j)\|
&\le
\left\|\sqrt{2D_0}Lx-c_i\right\| +
\left\|(2D_0)^{-1/2}L^{-\mathsf T}(g(x)-g(0))-d_j\right\|\\
&\le 2\sqrt{2D_0}.
\end{aligned}
\]
Thus the balls centered at all pairwise sums \(c_i+d_j\), each of radius \(2\sqrt{2D_0}\), cover \(F(X)\).
There are at most \(MN\le C_0^{2s}\) such balls, which proves the desired covering bound.

We now count the image points corresponding to \(Q\) within each ball of this cover.
Fix one such covering ball, whose radius is \(2\sqrt{2D_0}\), and suppose it contains \(m\) image points.
The Euclidean balls of radius \(v\) centered at these points are pairwise disjoint.
By the triangle inequality, all these smaller balls lie in the concentric ball of radius \(2\sqrt{2D_0}+v\).
Writing \(\omega_s\) for the volume of the Euclidean unit ball in \(\mathbb R^s\), comparison of volumes gives
\[
m\,\omega_s v^s
\le \omega_s\bigl(2\sqrt{2D_0}+v\bigr)^s,
\qquad
m\le\left(1+\frac{2\sqrt{2D_0}}{v}\right)^s.
\]
Since at most \(C_0^{2s}\) covering balls contain all the image points, we obtain
\[
|Q|
\le C_0^{2s}\left(1+\frac{2\sqrt{2D_0}}{v}\right)^s.
\]
Taking the supremum over such \(Q\) proves the first packing bound.

\textbf{Step 3. Specialize the divergence diameter.}
\emph{Global bound.}
When every coordinate is at least \(\epsilon\), we have \(D(p\Vert q)\le\log(1/\epsilon)\le\kappa\) for all \(p,q\in S\).
Taking \(D_0=\kappa\) in the bound just proved gives
\[
\GlobPack_H(S;v)\le[C_0^2(1+2\sqrt{2\kappa}/v)]^s.
\]

\emph{Local bound.}
Fix a center \(p\in\Delta_{n-1}\) and consider \(K=S\cap\{q:H(p,q)\le Rv\}\).
This set is compact and convex, because \(H(p,q)^2\) is continuous and convex in \(q\).
For any \(q,q'\in K\), the triangle inequality and \Cref{lem:kl-distance-bounds} give
\[
H(q,q')\le2Rv,
\qquad
D(q\Vert q')\le\kappa H(q,q')^2\le4\kappa R^2v^2.
\]
If \(K\) is empty or a singleton, its packing count is at most one.
Otherwise \(R>0\), and we apply the bound from Step 2 to \(K\) in its own affine span, with \(D_0=4\kappa R^2v^2\) and the same separation \(v/2\).
Since \(\operatorname{affdim}(K)\le s\), this gives
\[
\GlobPack_H(K;v)
\le[C_0^2(1+4\sqrt{2\kappa}R)]^{\operatorname{affdim}(K)}
\le[C_0^2(1+4\sqrt{2\kappa}R)]^s.
\]
Taking the supremum over centers \(p\) proves the local bound.
\end{proof}

\subsection{Partitioning observations and counting pieces}\label{subsec:partition-piece-counts}

\paragraph{The partition rule.}
The local comparison requires an associated comparator that was available throughout the block.
For a completed scale-zero threshold block, set \(\mathcal P(E)=\{E\}\).
For a completed positive-level block with children \(E_1,\ldots,E_m\) in chronological order, set
\[
\mathcal P(E)=
\begin{cases}
\{E\},&E\text{ is not a boundary block},\\
\bigcup_{i=1}^m\mathcal P(E_i),&E\text{ is a boundary block}.
\end{cases}
\]
At a depth-interval endpoint, apply this map to the completed scale-\((K-1)\) roots and apply the child union to the unfinished positive-scale roots of the remaining forest.
Use only maximal roots, and add the unique unfinished positive-weight scale-zero successor of each \((g,a)\), if present.
Empty blocks contribute nothing.
This is the parent-or-children selection rule from Section~\ref{def:boundary-analysis}.
It is defined from the chronological updates for analysis and is not maintained by the algorithm.

For \(0\le j<K-1\) and each target group \(g\), count its completed scale-\(j\) residual pieces over all source predictions.
Let \(Q_j(I)\) be the largest of these counts.

\paragraph{Counting each observation once.}
The parent-or-children recursion selects a disjoint partition.
Each weighted observation belongs to one depth interval and one source; each selected completed piece below the finest level has a unique parent, which is boundary or unfinished.

\begin{lemma}[Partition and piece counts]\label{lem:residual-charging}
On \(\mathcal E_{\rm global}\), let \(I\) be any nonempty prefix of a depth-\(K\) interval.
Let \(M_I\) and \(B_I\) be the counts defined in \Cref{def:discovery-counts}.
Moreover, \(B_I\le M_I\), and for every level \(0\le j<K-1\),
\[
Q_j(I)\le \min\left\{8M_I B_I,\frac{|I|}{m_j}\right\}.
\]
\end{lemma}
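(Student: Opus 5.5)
The lemma has three parts: $B_I\le M_I$, the bound $Q_j(I)\le|I|/m_j$, and the bound $Q_j(I)\le 8M_IB_I$. The partition property itself (disjointness and covering) is a separate fact that I take from the parent-or-children recursion above.

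\textbf{Step 1: $B_I\le M_I$.} Fix $g,a$ and a level $j\ge1$. A level-$j$ boundary block $E$ has comparator $q_E\notin\mathcal R_{\min E}$. So $q_E$ is a candidate first made available at or after $\min E$. In particular $q_E$ cannot be $q_0$, and it lies in $P_j(\max I+1)$, so it is counted by $M_I$.

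It remains to show that distinct boundary blocks of the same $(g,a)$ and level have distinct comparators. Same-level blocks of one pair have disjoint, chronologically ordered observation intervals. If an earlier boundary block $E$ and a later one $E'$ had $q_E=q_{E'}$, then $q_E$ was inserted no later than the closure of $E$. That closure precedes $\min E'$ and happens before round $\min E'$ starts. Hence $q_{E'}\in\mathcal R_{\min E'}$, contradicting that $E'$ is a boundary block. So the boundary blocks inject into the candidate predictions, and the count is at most $M_I$. With the $\max\{1,\cdot\}$ and $M_I\ge1$, we get $B_I\le M_I$.

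\textbf{Step 2: $Q_j(I)\le|I|/m_j$.} Fix $g$. Each completed level-$j$ piece has weight at least $m_j$, by the block-weight induction in Appendix~\ref{sec:discovery-hierarchy}. The pieces are disjoint across sources, and their total weight is at most $\sum_{t\in I}\sum_a g(x_t)\pi_{t,a}\le|I|$. Dividing gives the bound.

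\textbf{Step 3: $Q_j(I)\le 8M_IB_I$.} Fix $g$, a source $a$, and a level $j<K-1$. A selected completed level-$j$ piece is never a maximal root, since roots at level $j<K-1$ are either at the finest level or unfinished. It is selected only because the recursion descended from its parent. That parent is either a completed boundary block at level $j+1\ge1$ or an unfinished positive-level block.

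Each parent has at most four children; unfinished parents have at most three completed children. The number of boundary parents at level $j+1$ for this $(g,a)$ is at most $B_I$. I need to check carefully that there is only one unfinished level-$(j+1)$ block per pair. This holds because only the current open block is unfinished, and accumulators are cleared at depth changes, so the prefix $I$ sees a single open block per level.

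So each source contributes at most $4B_I+3\le 8B_I$ pieces, using $B_I\ge1$. The sources with positive weight are candidates available during $I$, all of which are counted by $M_I$. Summing over at most $M_I$ sources gives $Q_j(I)\le 8M_IB_I$.

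The main obstacle is the injectivity argument in Step 1. It requires insertion at closure to precede the start of any later same-level block, together with the monotonicity of $\mathcal R_t$ within the fixed-$K$ interval.
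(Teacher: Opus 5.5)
Your proof is correct and follows the paper's own argument. Boundary blocks of a fixed $(g,a,j)$ have distinct comparators because each earlier comparator is available at the start of every later block, which gives $B_I\le M_I$. Counting at most $4B_I+3\le 8B_I$ children of boundary or unfinished level-$(j+1)$ parents per source, over at most $M_I$ sources, gives $8M_IB_I$, and the threshold $m_j$ on piece weight gives $|I|/m_j$. One wording slip is harmless: what you need in Step 3 is that every completed block below level $K-1$ is forwarded and so has a parent, since maximal roots are completed level-$(K-1)$ blocks or unfinished blocks.
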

\begin{proof}
We first prove that \(B_I \leq M_I\).
Fix a group, a source prediction, and a positive level. Each completed boundary block has a comparator that was unavailable at its start, is stored by its completion, and is available on every subsequent round of the same depth interval. Since blocks with these fixed indices are processed consecutively, this comparator is available at the start of every later block and cannot serve as the comparator of a later boundary block. Thus these boundary blocks have distinct comparators, all counted by \(M_I\), so their number is at most \(M_I\). Taking the maximum over groups, sources, and positive levels, and using \(M_I\ge1\), gives \(B_I\le M_I\).

Next, we bound \(Q_j(I)\) for \(0\le j<K-1\).
Fix a group and a source prediction.
Every selected completed level-\(j\) piece has a unique level-\((j+1)\) parent, which is either boundary or unfinished.
For this group and source, there are at most \(B_I\) completed boundary parents at level \(j+1\), each with four completed children, and at most one unfinished parent, with at most three completed children.
Thus there are at most \(4B_I+3\) selected completed level-\(j\) pieces for this group and source.
Summing over at most \(M_I\) sources and then taking the maximum over groups gives
\[
Q_j(I)\le(4B_I+3)M_I\le8M_IB_I\le8M_I^2.
\]
For each group, the selected completed level-\(j\) pieces have total weight at most \(\sum_{t\in I}g(x_t)\sum_a\pi_{t,a}\le|I|\), and each has weight at least \(m_j\).
Taking the maximum of their counts over groups gives \(Q_j(I)\le|I|/m_j\).

\end{proof}

\subsection{Proof of the residual-piece bound}

\begin{lemma}[Calibration error from residual-piece counts]\label{lem:piece-count-residual}
On \(\mathcal E_{\rm global}\), for every depth \(K\) and every nonempty prefix \(I\) of a depth-\(K\) interval, as defined in \Cref{def:discovery-counts}, we have
\[
\begin{aligned}
\MCerr_{\calG}^{(1)}(I)\le{}&\underbrace{6\sqrt\kappa\,|I|w_{K-1}}_{\substack{\text{completed pieces}\\\text{at scale }K-1}}
+\underbrace{12288\sqrt\kappa\,nL\sum_{j=0}^{K-2}\frac{Q_j(I)}{w_j}}_{\substack{\text{completed pieces}\\\text{below scale }K-1}}
\\
&+\underbrace{2\min(|I|,m_0M_I)}_{\substack{\text{unfinished}\\\text{scale-zero pieces}}}
+4\sqrt{nLM_IT}+2\beta|I|.
\end{aligned}
\]
\end{lemma}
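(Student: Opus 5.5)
The plan is to fix a group \(g\in\calG\) and bound its term in \(\MCerr_{\calG}^{(1)}(I)\) by passing through three reductions in turn: sampling, smoothing, and the residual-piece decomposition. The argument does not depend on \(g\), so taking the maximum at the end gives the claim.

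\textbf{Step 1: reduce to mixture-weighted smoothed errors.} For each realized value \(r\), write \(\sum_{t\in I:p_t=r}g(x_t)(y_t-r)\). Replace \(y_t\) by \(y'_t\). Since \(\|y_t-y'_t\|_1\le2\beta\) and the sets \(\{t:p_t=r\}\) are disjoint, this costs at most \(2\beta|I|\) in total. Next, write \(\one\{p_t=r\}=\pi_{t,r}+(\one\{p_t=r\}-\pi_{t,r})\).

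Every realized prediction is a candidate value, and the set \(Q\) of such values that are available in \(I\) has size at most \(M_I\). Applying \eqref{eq:iclr-prediction-sampling} of \Cref{lem:common-event} with this \(Q\) bounds the sampling fluctuation by \(4\sqrt{nLM_IT}\). Here \(Q\) may be chosen after play, since the event is uniform over sets, and we use \(\sqrt{m}\le\sqrt{M_I}\).

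This leaves \(\sum_a\|\sum_{t\in I}\alpha_t^{g,a}(y'_t-a)\|_1\). A value can appear at several levels without changing its summand, so this quantity is well defined.

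\textbf{Step 2: decompose into residual pieces.} For each source \(a\), use the partition \(\mathcal P_{g,a}(I)\) from Section~\ref{def:discovery-counts}. The triangle inequality bounds each source's term by \(\sum_E N_E\|z_E-a\|_1\). This relies on the pieces being disjoint and covering all weighted observations of \((g,a)\) in \(I\), which the recursion guarantees. The recursion terminates because \(q_0\) is always available and level-zero blocks have \(q_E=q_0\). Blocks are reset at the start of the depth interval, so no piece straddles it.

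We then split the selected pieces into three classes:
\begin{itemize}[leftmargin=*]
\item \emph{Completed level-\(j\) pieces.} By construction \(q_E\in\mathcal R_{\min E}\), and since the candidate sets only grow within a depth interval, \(q_E\) is available throughout \(E\). We have \(H(z_E,q_E)\le w_j\) by the choice of \(q_E\); if \(q_E=z_E\), the distance is zero. At level zero, \(H\le\sqrt2\le2w_0\). Together with \(N_E\ge m_j\), \eqref{eq:iclr-block-residual} gives \(N_E\|z_E-a\|_1\le6\sqrt\kappa N_Ew_j\).
\item \emph{Finest-level pieces, \(j=K-1\).} Summing over sources and using \(\sum_a\sum_EN_E\le\sum_{t\in I}g(x_t)\le|I|\) yields \(6\sqrt\kappa|I|w_{K-1}\).
\item \emph{Coarser pieces, \(j\le K-2\).} Each has \(N_E<2m_j=2048nL/w_j^2\). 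The per-piece cost is then at most \(12288\sqrt\kappa nL/w_j\), and there are at most \(Q_j(I)\) such pieces for group \(g\), by the definition of \(Q_j(I)\) as a maximum over groups.
\end{itemize}

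\textbf{Step 3: unfinished level-zero pieces.} There is at most one unfinished level-zero piece per source. Each has weight below \(m_0\) and \(\|z_E-a\|_1\le2\), so it costs at most \(2N_E\). Summing over at most \(M_I\) sources gives at most \(2m_0M_I\). The weights also sum to at most \(|I|\), which gives the \(\min\).

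The main obstacle is the bookkeeping in Step 2. We must confirm that every completed piece really has \(N_E\ge m_j\) together with an available comparator within \(2w_j\), including boundary cases where a block completes on the last round of \(I\). We must also confirm that the sources and values in Step 1 are all counted by \(M_I\), including insertions on the final round. The remaining arithmetic only combines constants: \(6\cdot2048=12288\).
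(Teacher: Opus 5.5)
Your proposal is correct and follows the paper's proof essentially step for step: you make the same smoothing and sampling corrections, with the realized values contained in the \(M_I\) candidates, and the same triangle-inequality reduction to the residual partition \(\mathcal P_{g,a}(I)\). You also use the same three-way split of piece costs via \eqref{eq:iclr-block-residual}, \(N_E<2m_j\), and \(Q_j(I)\), and the bookkeeping you flag as the main obstacle is already handled by the selection rule requiring \(q_E\in\mathcal R_{\min E}\) and by defining \(M_I\) at \(\max I+1\).
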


\begin{proof}[Proof of \Cref{lem:piece-count-residual}]
On \(\mathcal E_{\rm global}\), the sampling bound \eqref{eq:iclr-prediction-sampling}, the smoothing correction \(\|y_t-y'_t\|_1\le2\beta\), and the block partition defined in Section~\ref{def:boundary-analysis} give
\[
\MCerr_{\calG}^{(1)}(I)
\le\max_{g\in\calG}\sum_a\sum_{E\in\mathcal P_{g,a}(I)}
N_E\|z_E-a\|_1
+4\sqrt{nLM_IT}+2\beta|I|.
\]
Fix a group \(g\).
Here the sampling bound uses the \(M_I\) distinct candidate predictions counted at the end of \(I\), which include every prediction value used in \(I\).
The partition includes unfinished scale-zero pieces of positive weight, so their means are also well-defined.
We bound the piece costs separately for completed pieces at the top scale, completed pieces at lower scales, and unfinished scale-zero pieces.

For each selected completed scale-\(j\) residual piece, \Cref{lem:local-comparison} gives \(N_E\|z_E-a\|_1\le6\sqrt\kappa\,w_jN_E\).
The completed scale-\((K-1)\) residual pieces have total weight at most \(|I|\), so their total cost is bounded by
\[
\sum_a
\sum_{\substack{E\in\mathcal P_{g,a}(I)\\
                E\text{ completed at scale }K-1}}
N_E\|z_E-a\|_1
\le6\sqrt\kappa\,w_{K-1}|I|.
\]
Now fix a lower scale \(0\le j<K-1\).
Each completed scale-\(j\) piece has \(N_E<2m_j\), and there are at most \(Q_j(I)\) such selected pieces over all sources for this group.
Using \(m_j=1024nL/w_j^2\), their total cost therefore satisfies
\[
\begin{aligned}
\sum_a
\sum_{\substack{E\in\mathcal P_{g,a}(I)\\
                E\text{ completed at scale }j}}
N_E\|z_E-a\|_1
&\le12\sqrt\kappa\,w_jm_jQ_j(I)\\
&=\frac{12288\sqrt\kappa\,nLQ_j(I)}{w_j}.
\end{aligned}
\]
There is at most one unfinished scale-zero piece per source, each with \(N_E<m_0\).
Since \(\|z_E-a\|_1\le\|z_E\|_1+\|a\|_1=2\), each such piece has cost at most \(2N_E\).
Their total weight is also at most \(|I|\), by disjointness and \(\sum_a\pi_{t,a}=1\).
There are at most \(M_I\) sources, so
\[
\sum_a
\sum_{\substack{E\in\mathcal P_{g,a}(I)\\
                E\text{ unfinished at scale }0}}
N_E\|z_E-a\|_1
\le2\min(|I|,m_0M_I).
\]
Summing over the lower scales, taking the maximum over groups, and substituting the three cost bounds into the opening inequality gives
\[
\begin{aligned}
\MCerr_{\calG}^{(1)}(I)\le{}&6\sqrt\kappa\,|I|w_{K-1}
+12288\sqrt\kappa\,nL\sum_{j=0}^{K-2}\frac{Q_j(I)}{w_j}
+2\min(|I|,m_0M_I)\\
&+4\sqrt{nLM_IT}+2\beta|I|.
\end{aligned}
\]

The event is uniform over intervals and prediction subsets, so the same argument holds simultaneously at every permitted endpoint.
\end{proof}

\subsection{Proof of the discovery-count corollary}
\label{subsec:general-packing-proof}

\begin{proof}[Proof of \Cref{cor:discovery-residual}]
On \(\mathcal E_{\rm global}\), start from \Cref{lem:piece-count-residual}.
Write \(w=w_{K-1}\), so \(w\le1\) because \(K\ge2\).

For the lower-scale term, the proof of \Cref{lem:residual-charging} gives \(Q_j(I)\le8M_IB_I\).
Since \(w_j=2^{1-j}\), we have \(\sum_{j=0}^{K-2}w_j^{-1}=w^{-1}-1/2<w^{-1}\), and hence
\[
\begin{aligned}
12288\sqrt\kappa\,nL\sum_{j=0}^{K-2}\frac{Q_j(I)}{w_j}
&\le12288\sqrt\kappa\,nL\cdot8M_IB_I
\sum_{j=0}^{K-2}\frac1{w_j}\\
&\le98304\sqrt\kappa\,\frac{nLM_IB_I}{w}.
\end{aligned}
\]
For the unfinished scale-zero term, \(m_0=1024nL/w_0^2=256nL\) gives
\[
\begin{aligned}
2\min(|I|,m_0M_I)
&\le2m_0M_I\\
&=512nLM_I\\
&\le512\sqrt\kappa\,\frac{nLM_IB_I}{w},
\end{aligned}
\]
where the last inequality uses \(w\le1\) and \(B_I,\kappa\ge1\).
Substituting these two bounds into \Cref{lem:piece-count-residual} and using \(98304+512=98816\) gives
\[
\MCerr_{\calG}^{(1)}(I)\le6\sqrt\kappa\,|I|w
+98816\sqrt\kappa\,\frac{nLM_IB_I}{w}
+4\sqrt{nLM_IT}+2\beta|I|.
\]
Bounding \(6\) and \(98816\) by \(10^6\) in this estimate proves the corollary.
\end{proof}

\section{Choosing the resolution from the data}

\subsection{Adaptive procedure}
\label{app:adaptive-procedure}
\Cref{alg:adaptive-depth} implements Section~\ref{sec:iclr-controller}, with \(\Gamma_K\) as defined there.
The interval residual includes all rounds through the test, including weight in unfinished blocks.
For its incremental computation, maintain
\[
Y_{I,g,a}=\sum_{t\in I}g(x_t)\one\{p_t=a\}(y_t-a),
\qquad \MCerr_{\calG}^{(1)}(I)=\max_{g\in\calG}\sum_a\|Y_{I,g,a}\|_1.
\]

\begin{algorithm}[H]
\footnotesize
\caption{Choosing the number of accuracy levels}
\label{alg:adaptive-depth}
\begin{algorithmic}
\Require Inputs of \Cref{alg:fixed-depth}, except the prescribed depth \(K\).
\State Set \(K\gets K_{\max}\) and initialize the state of \Cref{alg:fixed-depth} once.
\State Set \(I\gets\varnothing\) and clear all calibration-residual
accumulators. Any previously uninitialized \(Y_{I,g,a}\) is
interpreted as the zero vector in \(\mathbb R^n\).
\For{\(t=1,\ldots,T\)}
    \State Execute round \(t\) of \Cref{alg:fixed-depth} at depth \(K\), including its comparison update and all discovery updates.
    \State Extend \(I\gets I\cup\{t\}\), carrying forward its residual vectors.
    \ForAll{\(g\in\calG\)}
        \State Set \(Y_{I,g,p_t}\gets Y_{I,g,p_t}+g(x_t)(y_t-p_t)\), leaving other prediction entries unchanged.
    \EndFor
    \State Compute \(\MCerr_{\calG}^{(1)}(I)=\max_{g\in\calG}\sum_a\|Y_{I,g,a}\|_1\), summing over all distinct candidate predictions discovered so far.
    \If{\(\MCerr_{\calG}^{(1)}(I)>\Gamma_K\), \(K>1\), and \(t<T\)}
        \State End \(I\) at \(t\) and set \(K\gets K-1\).
        \State Clear all local block accumulators and residual vectors, and set \(I\gets\varnothing\).
        \State Retain all candidate prediction sets and comparison aggregates whose endpoints remain available.
    \EndIf
\EndFor
\end{algorithmic}
\end{algorithm}

\subsection{Proof of the affine prediction and boundary-block counts}

\begin{proof}[Proof of \Cref{lem:iclr-affine-counts}]
Fix an interval \(I\) at depth \(K\), and abbreviate \(P_j(\max I+1)\) by \(P_j\) for \(j\ge1\) throughout this proof.
Fix \(w=w_{K-1}\ge64\sqrt{\rho_\infty(S)}\) and let \(S_\beta=(1-\beta)S+\beta q_0\).
Its affine dimension is at most \(s\).
The packing bounds below therefore hold with exponent \(s\), since their bases are at least one.
Smoothing multiplies \(\ell_1\) distances by \(1-\beta\), so \(H^2\le\|\cdot\|_1\) gives \(\dist_H((1-\beta)\mu_t+\beta q_0,S_\beta)\le\sqrt{\rho_\infty(S)}\le w/64\).
By convexity of \(S_\beta\) and joint convexity of \(H^2\), the same bound holds for every weighted predictable mean.
Each candidate prediction in \(P_j\) comes from a completed scale-\(\ell\) block \(E\) with \(\ell\ge j\).
Applying \eqref{eq:completed-block-concentration} at scale \(\ell\) and using the triangle inequality gives
\[
\dist_H(z_E,S_\beta)
\le H(z_E,\bar\mu'_E)+\dist_H(\bar\mu'_E,S_\beta)
\le\frac{w_\ell}{16}+\frac{w}{64}
\le\frac{w_j}{8}.
\]
This includes values inherited from earlier depths and those inserted on the final round of \(I\).

\textbf{Step 1. Count candidate predictions.}
Candidate predictions in \(P_j\) are more than \(w_j\) apart, so their representatives in \(S_\beta\) are more than \(w_j/2\) apart.
To see this, for each \(a\in P_j\), choose a representative \(\tilde a\in S_\beta\) with \(H(a,\tilde a)\le w_j/8\).
For distinct \(a,b\in P_j\), the insertion rule gives \(H(a,b)>w_j\), so
\[
H(\tilde a,\tilde b)
\ge H(a,b)-H(a,\tilde a)-H(b,\tilde b)
>\frac{3w_j}{4}>\frac{w_j}{2}.
\]
Thus, these representatives form a \(w_j/2\)-separated subset of \(S_\beta\), and \(|P_j|\le\GlobPack_H(S_\beta;w_j)\).

The global bound in \Cref{lem:h-affine-packing} gives \(|P_j|
\le \left[C_0^2\left(1+\frac{2\sqrt{2\kappa}}{w_j}\right)\right]^s \leq \left(\frac{a\sqrt{\kappa}}{w_j}\right)^s \)  for a universal \(a=C_0^2(1+2\sqrt2)\ge 1\).
Since \(w_j\ge w\),
\[
M_I=\left|P_0\cup\bigcup_{j=1}^{K-1}P_j\right| \le1+\sum_{j=1}^{K-1}|P_j|
\le K_{\max} (a\sqrt\kappa/w)^s.
\]

\textbf{Step 2. Count boundary comparators in chronological order.}
Recall that \(B_I\) is the maximum, over groups, sources, and positive levels, of the number of completed boundary blocks in \(I\), with a minimum value of \(1\).
For each fixed group, source, and level, these blocks have distinct comparators: each earlier comparator remains available throughout every later block and therefore cannot serve as the comparator of a later boundary block.
It therefore suffices to bound the number of distinct comparators for each fixed group, source, and positive level.
Fix a group, source \(a\), and positive level \(j\), and list the completed boundary blocks' distinct comparators \(q_1,q_2,\ldots\in P_j\) in closure order.
Let \(E_m\) be the block associated with \(q_m\), with mean \(z_m\) and weight \(N_m\).
For \(i<m\), the earlier comparator \(q_i\) is available throughout \(E_m\), so applying \eqref{eq:iclr-local-hellinger} with source \(a\) and comparator \(q_i\) gives
\[
H(z_m,a)\le\sqrt{3\kappa}\,H(z_m,q_i)
+4\sqrt{\frac{\kappa L}{N_m}}.
\]
By the definition of \(q_m\), we have \(H(q_m,z_m)\le w_j\).
Applying the triangle inequality twice therefore yields
\[
\begin{aligned}
H(q_m,a)
&\le H(q_m,z_m)+H(z_m,a)\\
&\le H(q_m,z_m)
+\sqrt{3\kappa}\bigl(H(z_m,q_m)+H(q_m,q_i)\bigr)
+4\sqrt{\frac{\kappa L}{N_m}}\\
&\le\sqrt{3\kappa}\,H(q_m,q_i)
+(1+\sqrt{3\kappa})w_j
+4\sqrt{\frac{\kappa L}{N_m}}.
\end{aligned}
\]
Since \(E_m\) is completed, \(N_m\ge m_j=1024nL/w_j^2\), and hence
\[
4\sqrt{\frac{\kappa L}{N_m}}
\le\frac{\sqrt\kappa}{8\sqrt n}\,w_j.
\]
Using \(n,\kappa\ge1\), both \(\sqrt{3\kappa}\) and \(1+\sqrt{3\kappa}+\sqrt\kappa/(8\sqrt n)\) are at most \(16\sqrt\kappa\).
Thus
\[
H(q_m,a)\le A\bigl(H(q_m,q_i)+w_j\bigr),
\qquad A=16\sqrt\kappa.
\]
We count these comparators by partitioning them according to their Hellinger distance from the source \(a\).
For each \(q_i\), choose a nearby representative \(\tilde q_i\in S_\beta\) with \(H(q_i,\tilde q_i)\le w_j/8\).
We bound the separation and radius of the representatives in each region:
\begin{itemize}
\item \textbf{Near region: \(H(q_i,a)\le4Aw_j\).}
By the separation argument in Step 1, \(H(\tilde q_i,\tilde q_k) > w_j/2\).
Each representative also satisfies
\[
H(\tilde q_i,a)
\le H(\tilde q_i,q_i)+H(q_i,a)
\le\frac{w_j}{8}+4Aw_j\le5Aw_j.
\]
Thus the representatives are separated by more than \(w_j/2\) and lie in the ball of radius \(5Aw_j\) centered at \(a\).

\item \textbf{Far regions.}
For each integer \(h\ge0\), consider the comparators satisfying \(4A2^hw_j<H(q_i,a)\le8A2^hw_j\).
For two comparators in the same far region, order them as \(i<m\).
The preceding chronological comparison gives
\[
H(q_m,q_i)
\ge\frac{H(q_m,a)}{A}-w_j
>(4\cdot2^h-1)w_j.
\]
Replacing both comparators by their representatives loses at most \(w_j/4\), so
\[
H(\tilde q_m,\tilde q_i)
>(4\cdot2^h-1)w_j-\frac{w_j}{4}
>2\cdot2^hw_j.
\]
Each representative lies within distance
\[
H(\tilde q_i,a)
\le\frac{w_j}{8}+8A2^hw_j
\le9A2^hw_j
\]
of \(a\). Thus, the representatives in the \(h\)-th far region are separated by more than \(2\cdot2^h w_j\) and lie in the ball of radius \(9A2^h w_j\) centered at \(a\).
\end{itemize}
Apply the local packing bound in \Cref{lem:h-affine-packing} with \(v=w_j\) for the near region and \(v=2^hw_j\) for the \(h\)-th far region.
In both cases, the representatives are separated by more than \(v/2\) and lie within radius \(Rv\) of \(a\), where \(R=300\sqrt\kappa>9A\).
The bound therefore limits each region to
\[
[C_0^2(1+1200\sqrt2\,\kappa)]^s\le(b\kappa)^s
\]
comparators, for a universal \(b\ge1\).
It remains to count the nonempty regions.
For any probability vectors \(p,q\), we have \(H(p,q)\le\sqrt2\).
If \(h\ge j\), a comparator in the \(h\)-th far region would have to satisfy
\[
H(q_i,a)>4A2^hw_j
=8A2^{h-j}\ge8A>\sqrt2,
\]
which is impossible.
Thus only the far regions indexed by \(h=0,\ldots,j-1\) can be nonempty.
There are at most \(j\) such regions and one near region, giving at most \(j+1\le K\le K_{\max}\) regions in total.
Each region contains at most \((b\kappa)^s\) comparators, so there are at most \(K_{\max}(b\kappa)^s\) comparators in total.
Since \(K_{\max}(b\kappa)^s\ge1\), this bound also covers the minimum value of \(1\) in the definition of \(B_I\).
By the reduction at the start of this step,
\[
B_I\le K_{\max}(b\kappa)^s.
\]
\end{proof}

\subsection{Proof of Theorem~\ref{thm:uniform-affine}}
\label{app:iclr-affine-upper-proof}

\begin{proof}
Work on \(\mathcal E_{\rm global}\).
We prove the stronger guarantee in \Cref{rem:affine-robustness} and then specialize it to the exact affine dimension of the means.
Fix any nonempty closed convex \(S\subseteq\DeltaD\) and any integer \(s\) with \(\operatorname{affdim}(S)\le s\le n-1\), and write \(\rho=\rho_\infty(S)\).
The algorithm adjusts its current depth \(K\) using only the observed residual tests.
For any visited depth \(K\ge2\) with width \(w=w_{K-1}\ge64\sqrt\rho\), \Cref{lem:iclr-affine-counts} bounds \(M_I\) and \(B_I\) for every nonempty prefix \(I\) of the depth-\(K\) interval.
Using \(|I|\le T\) and substituting these counts into \Cref{cor:discovery-residual} gives
\begin{equation}\label{eq:comparison-width-error}
\begin{aligned}
\MCerr_{\calG}^{(1)}(I)\le10^6\sqrt{\kappa}\biggl[&\underbrace{Tw}_{\text{top-piece cost}}+\underbrace{\frac{nLK_{\max}^2(ab\kappa^{3/2})^s}{w^{s+1}}}_{\text{discovery cost}}\\
&+\underbrace{\sqrt{nLK_{\max}(a\sqrt\kappa/w)^sT}}_{\text{sampling correction}}\biggr]
+\underbrace{2\beta|I|}_{\text{smoothing correction}}.
\end{aligned}
\end{equation}
We first choose a comparison width, then show that this error bound prevents the algorithm from moving to a coarser scale, and finally sum the interval errors.
The comparison width depends on \(s,S\) only in the analysis and is not an input to the algorithm.

\textbf{Step 1. Choose a comparison width.}
We seek a small width \(w\) for which the discovery cost is controlled by the top-piece cost \(Tw\).
Increasing \(w\) decreases the discovery cost but increases \(Tw\).
The width at which these two costs are equal is
\[
Tw_\star^{s+2}=
{nLK_{\max}^2(ab\kappa^{3/2})^s}.
\]
Since \(LK_{\max}^2(ab\kappa^{3/2})^s\ge1\), we have
\begin{equation}\label{eq:lowerboundofw}
w_\star\ge(n/T)^{1/(s+2)}\ge T^{-1/2}.
\end{equation}
To ensure that \Cref{lem:iclr-affine-counts} applies, we also require \(w\ge64\sqrt\rho\). Accordingly, set
\[
v_\star=\max\{2w_\star,64\sqrt\rho\}.
\]
If \(v_\star>1\), the deterministic bound gives \(\MCerr_{\calG}^{(1)}(T)\le2T<2Tv_\star\).
We include this case in the final parameter substitution in Step 3.
For the arguments below, assume \(v_\star\le1\).
The available widths at depths \(2,\ldots,K_{\max}\) are
\[
1,\ \frac12,\ \ldots,\ 2^{-\lceil\log_2T\rceil}.
\]
Since
\[
2^{-\lceil\log_2T\rceil} \le T^{-1} \le 2T^{-1/2} \le 2w_\star \leq v_\star \le1,
\]
\(v_\star\) lies within this range.
Round \(v_\star\) up to the smallest available width that is at least \(v_\star\), and let \(K^\star\) be the depth with its finest width used denoted by \(w_{K^\star-1} \geq v_\star\).
Consecutive widths differ by a factor of two, so
\[
2\le K^\star\le K_{\max},
\qquad
v_\star\le w_{K^\star-1}<2v_\star.
\]
Thus, \(w_{K^\star-1}\ge64\sqrt\rho\), as required by \Cref{lem:iclr-affine-counts}. Moreover, \(w_{K^\star-1}\ge2w_\star\ge T^{-1/2}\) by (\ref{eq:lowerboundofw}), which implies \(Tw_{K^\star-1}\ge1\) since \(n,T\ge1\) and \(s\ge0\).
The discovery term at this width satisfies
\[
\frac{nLK_{\max}^2(ab\kappa^{3/2})^s}{w_{K^\star-1}^{s+1}}
=Tw_{K^\star-1}\left(\frac{w_\star}{w_{K^\star-1}}\right)^{s+2}
\le\frac{Tw_{K^\star-1}}4,
\]
because \(w_{K^\star-1}\ge2w_\star\) and \(s\ge0\).

\textbf{Step 2. The algorithm never moves to a coarser scale.}
If the algorithm reaches depth \(K^\star\), \eqref{eq:comparison-width-error} applies with \(w=w_{K^\star-1}\) to every nonempty prefix \(I\) of the resulting depth-\(K^\star\) interval, since Step 1 ensures \(w_{K^\star-1}\ge64\sqrt\rho\).
Step 1 bounds the discovery term by \(Tw_{K^\star-1}/4\).
Using this bound and \(K_{\max}(b\kappa)^s\ge1\), the sampling correction satisfies
\[
\begin{aligned}
&\sqrt{nLK_{\max}
\left(\frac{a\sqrt\kappa}{w_{K^\star-1}}\right)^sT}\le\sqrt{Tw_{K^\star-1}\,
\frac{nLK_{\max}^2(ab\kappa^{3/2})^s}{w_{K^\star-1}^{s+1}}}
\le\frac{Tw_{K^\star-1}}2.
\end{aligned}
\]
Finally, since \(\beta=1/(2T^2)\) and \(|I|\le T\), the smoothing correction satisfies \(2\beta|I|=|I|/T^2\le1/T\).
Substituting these bounds gives, after every round at this depth,
\[
\begin{aligned}
\MCerr_{\calG}^{(1)}(I)
&\le10^6\sqrt\kappa\left(1+\frac14+\frac12\right)Tw_{K^\star-1}+\frac1T\\
&=\frac74\cdot10^6\sqrt\kappa\,Tw_{K^\star-1}+\frac1T\\
&<3\cdot10^6\sqrt\kappa\,Tw_{K^\star-1}=\Gamma_{K^\star},
\end{aligned}
\]
where the strict inequality uses \(Tw_{K^\star-1}\ge1\) and \(\sqrt{\kappa}\ge1\).
Thus, if the algorithm reaches depth \(K^\star\), it cannot leave it.
Since depth decreases one level at a time from \(K_{\max}\), every visited depth satisfies \(K\ge K^\star\), and every visited width is at most \(w_{K^\star-1}\).
No affine bound is needed at the finer widths visited before this depth.

\textbf{Step 3. Sum the interval errors and obtain the final rate.}
Let \(J_1,\ldots,J_m\) be the visited intervals, with depths \(K_1>\cdots>K_m\) and \(m\le K_{\max}\).
For each group in \(\calG\), one round changes just one residual vector, by \(g(x_t)(y_t-p_t)\), whose norm is at most two.
Hence  \(\MCerr_{\calG}^{(1)}(J_i)\le\Gamma_{K_i}+2\) holds even if the horizon test fails that is when the budget is first exceeded at $t=T$.
The intervals partition all rounds, including failed endpoints.
The triangle inequality across intervals gives
\[
\MCerr_{\calG}^{(1)}(T)
\le\sum_{i=1}^m \MCerr_{\calG}^{(1)}(J_i)
\le3\cdot10^6\sqrt{\kappa}\,T\sum_{i=1}^m w_{K_i-1}+2m
<6\cdot10^6\sqrt{\kappa}\,Tw_{K^\star-1}+2K_{\max}.
\]
The last inequality uses the doubling of successive visited widths and their upper bound \(w_{K^\star-1}\): \(\sum_{i=1}^m w_{K_i-1}
\le w_{K^\star-1}\left(1+\frac12+\cdots+\frac1{2^{m-1}}\right)
<2w_{K^\star-1}\).
Substituting \(w_{K^\star-1}<2v_\star\) from Step 1 gives
\[
\MCerr_{\calG}^{(1)}(T)\le12\cdot10^6\sqrt\kappa\,Tv_\star+2K_{\max}.
\]
For \(v_\star>1\), the deterministic estimate in Step 1 implies the same inequality, without using a comparison depth.
Since \(v_\star\le2w_\star+64\sqrt\rho\), both cases imply
\[
\MCerr_{\calG}^{(1)}(T)
\le
\underbrace{24\cdot10^6\sqrt\kappa\,Tw_\star}_{\text{width term}}
+
\underbrace{768\cdot10^6\sqrt\kappa\,T\sqrt\rho}_{\text{approximation term}}
+
\underbrace{2K_{\max}}_{\text{interval overshoot term}}.
\]
We now bound the three terms on the right-hand side in the same order.
Appendix~\ref{app:universal-constants} bounds \(\kappa,L,K_{\max}\) by \(8,40,6\) times \(\log(en|\barG|(T+1)/\delta)\), respectively.
\begin{itemize}
\item \emph{Width term.}
Substituting the definition of \(w_\star\) gives
\[
\begin{aligned}
\sqrt\kappa\,Tw_\star
&=
\kappa^{1/2+3s/[2(s+2)]}
\bigl[LK_{\max}^2\bigr]^{1/(s+2)}(ab)^{s/(s+2)}
n^{1/(s+2)}T^{(s+1)/(s+2)}\\
&\le384\sqrt{40}\,ab\left[\log\!\left(\frac{en|\barG|(T+1)}{\delta}\right)\right]^2
n^{1/(s+2)}T^{(s+1)/(s+2)}.
\end{aligned}
\]
Here the exponent of the logarithm is
\[
\frac12+\frac{3s/2+3}{s+2}=2.
\]
Since \(s\ge0\) and \(a,b\ge1\), the constant factor is at most \(8^2(40\cdot6^2)^{1/2}ab=384\sqrt{40}\,ab\), independently of \(s\).

\item \emph{Approximation term.}
Using the bound on \(\kappa\) and \(\log(en|\barG|(T+1)/\delta)\ge1\),
\[
\begin{aligned}
\sqrt\kappa\,T\sqrt\rho
&\le\sqrt8\left[\log\!\left(\frac{en|\barG|(T+1)}{\delta}\right)\right]^{1/2}T\sqrt\rho\\
\end{aligned}
\]

\item \emph{Interval overshoot term.}
Using the bound on \(K_{\max}\), \(\log(en|\barG|(T+1)/\delta)\ge1\), and \(n,T\ge1\),
\[
\begin{aligned}
2K_{\max}&\le12\log\!\left(\frac{en|\barG|(T+1)}{\delta}\right) \le12\left[\log\!\left(\frac{en|\barG|(T+1)}{\delta}\right)\right]^2
n^{1/(s+2)}T^{(s+1)/(s+2)}.
\end{aligned}
\]
\end{itemize}
Substituting these three bounds into the error estimate and using
\[
\log\!\left(\frac{en|\barG|(T+1)}{\delta}\right)
\le2\log\!\left(\frac{eTn|\barG|}{\delta}\right),
\]
which follows from \(T+1\le2T\) and \(\log(eTn|\barG|/\delta)\ge1\), gives, for a universal \(C\),
\[
\MCerr_{\calG}^{(1)}(T)
\le C\left[\log\!\left(\frac{eTn|\barG|}{\delta}\right)\right]^2
\left(n^{1/(s+2)}T^{(s+1)/(s+2)}+T\sqrt\rho\right).
\]
Because the event \(\mathcal E_{\rm global}\) does not depend on \(S,s\), this bound holds simultaneously for every admissible \(S,s\), proving \Cref{rem:affine-robustness}.
Taking \(S=\conv\{\mu_1,\ldots,\mu_T\}\) and \(s=\operatorname{affdim}(S)\) gives \(\rho_\infty(S)=0\) and hence \Cref{thm:uniform-affine}, even when the means and their dimension depend on the realized history.
\end{proof}

\subsection{Runtime}

\label{cov:h-runtime}
We count exact real arithmetic operations, comparisons, evaluations of logarithms and square roots, and evaluations of the given group functions, each at unit cost.
A fresh uniform draw from \([0,1]\) also costs one operation; sampling a finite prediction distribution uses its cumulative probabilities.
There are at most \(A_{\max}\) distinct candidate predictions, at most \(|\barG|A_{\max}^2\) pair aggregates, and at most one source/group closure per scale per round.
Each recursive call increases scale, so all closure loops are finite.
Evaluating the losses of available predictions costs \(O(TnA_{\max})\) operations.
Each of at most \(T|\barG|A_{\max}K_{\max}\) closures scans at most \(A_{\max}\) candidate predictions at each of at most \(K_{\max}\) membership levels, for \(O(Tn|\barG|A_{\max}^2K_{\max}^2)\) operations.
Recognizing an already discovered value takes at most \(O(nA_{\max})\) comparisons per closure and fits this bound.
Solving the linear stationarity equations costs \(O(TA_{\max}^3)\), and pair updates cost \(O(T|\barG|A_{\max}^2)\).
Group evaluations, residual updates, block-weight updates and sampling fit these bounds.
Together with the parameter choices, these bounds give a polynomial operation count in \(T\), \(n\), \(|\barG|\), and \(\log(1/\delta)\) under the stated exact-operation model.

\section{Affine-dimensional lower bound}
\label{app:affine-lower}

For the lower bound, we construct a sequence of contexts and specify the outcome distribution for each context.
We associate each context with a point on an \(s\)-dimensional grid in the simplex.
The learner’s observation \(y_t\) is the average of this grid point and an independently and uniformly chosen simplex vertex.
The resulting conditional means lie in an affine space of dimension at most \(s\), while the random observations fluctuate across all \(n\) coordinates.

\begin{proof}[Proof of \Cref{thm:affine-lower}]
The proof separates two cases according to the expected total distance between predictions and conditional means.
If this distance is large, the group family gives a calibration lower bound.
If it is small, the separation between grid means forces the learner to use many distinct prediction values.
We then use the noise accumulated within these prediction bins to obtain the lower bound.
We choose the grid resolution so that both cases yield the claimed rate.
Write \(L_T:=\log(T+1)\) and \(R_s(T,n):=n^{1/(s+2)}T^{(s+1)/(s+2)}\), and assume \(s\ge1\) and \(T\ge\max\{n,2^{s+2}\}\). The proof follows the following five steps.

\par\smallskip\noindent\textbf{Step 1. Construct a low-dimensional grid with noise on all coordinates.}
Let \(k:=\lfloor n/2\rfloor\) and \(m:=2^{\max\{1,\lceil\log_2(T/k)/(s+2)\rceil\}}\).
We construct an oblivious adversary with context space \([m]^s\).
The contexts \(x_t=(x_{t,1},\ldots,x_{t,s})\), for \(t=1,\ldots,T\), cycle through this space in a fixed order.
We associate each context \(x=(x_1,\ldots,x_s)\in[m]^s\) with a probability vector \(q(x)\).
The first \(s\) coordinates of these vectors form a Cartesian grid with \(m\) equally spaced values in \([a,2a]\) along each coordinate.

With \(a:=1/[4(s+1)]\), define
\[
q_j(x):=a+a\frac{x_j-1}{m-1}
\quad (1\le j\le s).
\]
Complete this to an \(n\)-coordinate probability vector by setting
\[
q(x):=
\Bigl(
\underbrace{q_1(x),\ldots,q_s(x)}_{s\text{ varying coordinates}},
\underbrace{1-\sum_{j=1}^s q_j(x)}_{\text{remaining probability mass}},
\underbrace{0,\ldots,0}_{n-s-1\text{ coordinates}}
\Bigr).
\]
The final zero block is absent when \(s=n-1\).
Our choice of \(a\) makes the remaining coordinate at least \(1-2sa=(s+2)/(2(s+1))>0\), so all coordinates are nonnegative and sum to one.

On round \(t\), the adversary selects a simplex vertex \(e_{J_t}\) by drawing its index \(J_t\) uniformly from \([n]\).
These draws are independent across rounds and of the learner's randomness.
Fix an arbitrary forecaster and pre-sample its private randomness independently of \(J_{1:T}\).
Let \(\mathcal H_t\) be the history consisting of that randomness and \(J_1,\ldots,J_{t-1}\).
Then \(p_t\) is \(\mathcal H_t\)-measurable.
Writing \(q_0=(1/n,\ldots,1/n)\), the outcome and its conditional mean are
\begin{equation}
y_t:=\frac{q(x_t)+e_{J_t}}2,\qquad
\mu_t
=\mathbb E[y_t\mid\mathcal H_t]
=\frac{q(x_t)+\mathbb E[e_{J_t}\mid\mathcal H_t]}2
=\frac{q(x_t)+q_0}2.
\label{eq:unified-lower-outcomes}
\end{equation}
For \(j\le s\), this gives
\[
\mu_{t,j}
=\frac{q_j(x_t)+1/n}{2}
=\frac12\left(a+a\frac{x_{t,j}-1}{m-1}+\frac1n\right).
\]
The grid \(q([m]^s)\) lies in an affine space of dimension \(s\).
Scaling by \(1/2\) and translating by \(q_0/2\) preserves this dimension, so the visited conditional means have affine dimension at most \(s\).

Then, we construct binary group functions on the context space.
For \(j\le s\), we compare \(p_{t,j}\), the \(j\)-th entry of the prediction \(p_t\), with \(\mu_{t,j}\).
To approximate \(\operatorname{sign}(p_{t,j}-\mu_{t,j})\), we use the length-\(m\) Walsh system, which requires \(m\) to be a power of two.
Let \(\psi_\ell(r)=(-1)^{\langle\ell,r\rangle_2}\), where \(\ell,r\in\{0,\ldots,m-1\}\) and the inner product is taken modulo two between their \(\log_2m\)-bit binary expansions.

Lemma~11 of \citet{collina2026multicalibration}, with approximation error \(1/2\), gives a fixed set \(H_m\subseteq\{1,\ldots,m-1\}\) with \(|H_m|\le4C_{\rm sub}(\log m)^3\), where \(C_{\rm sub}\) is a universal constant.
The lemma also provides functions \(b_0,b_\ell:[0,1]\to\mathbb R\), \(\ell\in H_m\), with \(\|b_0\|_\infty\le1\) and \(\sum_{\ell\in H_m}\|b_\ell\|_\infty\le\log_2m\), fixed before play.
Here the supremum norms are over \([0,1]\), and we use \(\operatorname{sign}(u)=1\) for \(u\ge0\) and \(-1\) otherwise.
To approximate \(\operatorname{sign}(p_{t,j}-\mu_{t,j})\), we weight each Walsh function \(\psi_\ell(x_{t,j}-1)\) by \(b_\ell(p_{t,j})\) and add the constant term \(b_0(p_{t,j})\).
For every \(j\le s\), round \(t\), and possible prediction \(p_t\in\DeltaD\), this approximation satisfies
\begin{equation}
\left|b_0(p_{t,j})+\sum_{\ell\in H_m}b_\ell(p_{t,j})\psi_\ell(x_{t,j}-1)
-\operatorname{sign}(p_{t,j}-\mu_{t,j})\right|\le\tfrac12.
\label{eq:walsh-threshold-approximation}
\end{equation}

For each \(j\le s\) and \(\ell\in H_m\), split the contexts into two groups according to whether \(\psi_\ell(x_j-1)\) equals \(+1\) or \(-1\).
Their indicator functions are
\[
g_{j,\ell}^{\pm}(x):=\frac{1\pm\psi_\ell(x_j-1)}2
\qquad(j\le s,\ \ell\in H_m).
\]
Let \(\calG_{T,s}\) consist of these functions and the constant function \(g(x)\equiv1\).
Since
\[
\psi_\ell(x_j-1)=g_{j,\ell}^{+}(x)-g_{j,\ell}^{-}(x),
\]
the approximation in \eqref{eq:walsh-threshold-approximation} can be expressed as a linear combination of these group functions.
The groups depend only on the context, while the coefficients \(b_\ell(p_{t,j})\) depend on the prediction.
The bound on \(|H_m|\) and the choice of \(m\) give
\[
|\calG_{T,s}|\le1+2s|H_m|\le C L_T^3,
\]
where \(C=1+64C_{\rm sub}\).
This completes the construction.

\par\smallskip\noindent\textbf{Step 2. Compare predictions with conditional means.}
We establish a distance comparison and a prediction-count bound, which Step~4 combines with the noise estimate from Step~3.

For distinct contexts \(x_t,x_{t'}\), some coordinate \(j\le s\) differs, and hence
\[
\begin{aligned}
\|\mu_t-\mu_{t'}\|_1
&\ge|\mu_{t,j}-\mu_{t',j}|\\
&=\frac{a}{2(m-1)}|x_{t,j}-x_{t',j}|
\ge\frac{a}{2(m-1)}=:\Delta.
\end{aligned}
\]
We next relate the expected total prediction distance \(\E[\sum_{t=1}^T\|p_t-\mu_t\|_1]\) to calibration error.
Write
\[
R:=k^{1/(s+2)}T^{(s+1)/(s+2)}.
\]
We prove the following two claims.
\begin{itemize}
\item \textbf{Large prediction distance forces calibration error.}
We claim that
\begin{equation}
\E\!\left[\sum_{t\le T}\|p_t-\mu_t\|_1\right]
\le B\,\E[\MCerr_{\calG_{T,s}}^{(1)}(T)],
\label{eq:unified-lower-distance}
\end{equation}
where \(B:=4s+8s\log_2m+2\le32L_T\).
In particular, if the expected total prediction distance is at least \(T\Delta/8\), then the expected calibration error is at least \(T\Delta/(8B)\).

To prove \eqref{eq:unified-lower-distance}, we first use the Walsh approximation to bound the prediction error in each of the first \(s\) coordinates.
Fix \(j\le s\).
By \eqref{eq:walsh-threshold-approximation},
\[
\left(b_0(p_{t,j})+\sum_{\ell\in H_m}b_\ell(p_{t,j})\psi_\ell(x_{t,j}-1)\right)
(p_{t,j}-\mu_{t,j})
\ge\frac{|p_{t,j}-\mu_{t,j}|}{2}.
\]

To relate this bound to calibration error, we next replace \(\mu_{t,j}\) by \(y_{t,j}\) in expectation.
For the fixed coordinate \(j\), the conditional mean \(\mu_{t,j}\) has \(m\) possible values.
These values divide \([0,1]\) into \(m+1\) intervals.
For each possible value of \(\mu_{t,j}\), the sign of \(p_{t,j}-\mu_{t,j}\) depends only on which interval contains \(p_{t,j}\).
We include each possible mean value as the lower endpoint of its interval, consistently with \(\operatorname{sign}(0)=1\).

We therefore fix \(b_0\) and \(b_\ell\), \(\ell\in H_m\), to be constant on these intervals, making them measurable functions of the prediction coordinate.
Since \(p_{t,j}\) is \(\mathcal H_t\)-measurable and \(x_t\) is deterministic, the multiplier is \(\mathcal H_t\)-measurable. Then, we have
\begin{equation}\label{eq:relaceubyy}
\begin{aligned}
\E\!\left[
b_\ell(p_{t,j})\psi_\ell(x_{t,j}-1)(p_{t,j}-y_{t,j})
\,\middle|\,\mathcal H_t
\right]&=
b_\ell(p_{t,j})\psi_\ell(x_{t,j}-1)
\bigl(p_{t,j}-\E[y_{t,j}\mid\mathcal H_t]\bigr)\\
&=
b_\ell(p_{t,j})\psi_\ell(x_{t,j}-1)(p_{t,j}-\mu_{t,j}),
\end{aligned}
\end{equation}
where we use \(\E[y_{t,j}\mid\mathcal H_t]=\mu_{t,j}\) to replace \(\mu_{t,j}\) by \(y_{t,j}\) without changing the expected product.

After this roundwise replacement, group together the rounds with the same prediction vector \(v\).
On these rounds, \(p_{t,j}=v_j\), so the coefficients can be factored out as \(b_0(v_j)\) and \(b_\ell(v_j)\).
For each \(\ell\), the identity \(\psi_\ell(x_{t,j}-1)=g^+_{j,\ell}(x_t)-g^-_{j,\ell}(x_t)\) bounds its contribution as follows:
\[
\begin{aligned}
\left|\sum_t b_\ell(p_{t,j})
\psi_\ell(x_{t,j}-1)(p_{t,j}-y_{t,j})\right|
&=\left|\sum_v b_\ell(v_j)
\sum_{t:p_t=v}
\bigl(g^+_{j,\ell}(x_t)-g^-_{j,\ell}(x_t)\bigr)
(v_j-y_{t,j})\right|\\
&\le \|b_\ell\|_\infty
\sum_{\sigma\in\{+,-\}}\sum_v
\left|\sum_{t:p_t=v}
g^\sigma_{j,\ell}(x_t)(v_j-y_{t,j})\right|\\
&\le \|b_\ell\|_\infty
\sum_{\sigma\in\{+,-\}}\sum_v
\left\|\sum_{t:p_t=v}
g^\sigma_{j,\ell}(x_t)(y_t-v)\right\|_1\\
&\le 2\|b_\ell\|_\infty
\operatorname{MCerr}_{\mathcal G_{T,s}}^{(1)}(T).
\end{aligned}
\]
Therefore, together with (\ref{eq:relaceubyy}), we obtain
\[
\begin{aligned}
\frac12\E\!\left[\sum_t|p_{t,j}-\mu_{t,j}|\right]
&\le
\E\!\left[\sum_t b_0(p_{t,j})(p_{t,j}-y_{t,j})\right]\\
&\quad+
\sum_{\ell\in H_m}
\E\!\left[\sum_t
b_\ell(p_{t,j})\psi_\ell(x_{t,j}-1)
(p_{t,j}-y_{t,j})\right]\\
&\le
\left(1+2\sum_{\ell\in H_m}\|b_\ell\|_\infty\right)
\E\!\left[\operatorname{MCerr}_{\mathcal G_{T,s}}^{(1)}(T)\right]\\
&\le
(1+2\log_2m)
\E\!\left[\operatorname{MCerr}_{\mathcal G_{T,s}}^{(1)}(T)\right].
\end{aligned}
\]
For all \(h>s+1\), the mean coordinate is fixed at \(\mu_h=1/(2n)\).
Predictable signs and the constant group control these coordinates together:
\[
\begin{aligned}
\E\!\left[\sum_t\sum_{h>s+1}|p_{t,h}-\mu_h|\right]
&=\E\!\left[\sum_v\sum_{h>s+1}\operatorname{sign}(v_h-\mu_h)
\sum_{t:p_t=v}(v_h-y_{t,h})\right]\\
&\le\E[\MCerr_{\calG_{T,s}}^{(1)}(T)].
\end{aligned}
\]
The bound remains valid when this coordinate set is empty.
For \(z=p_t-\mu_t\), the identity \(\sum_hz_h=0\) implies
\(\|z\|_1\le2\sum_{j\le s}|z_j|+2\sum_{h>s+1}|z_h|\).
Combining these coordinate bounds proves \eqref{eq:unified-lower-distance} with \(B=4s+8s\log_2m+2\).
To check the constants, note that the choice of \(m\) and \(T\ge2^{s+2}\) imply
\[
\log_2m\le\frac{\log_2T}{s+2}+1
\le\frac{2\log_2T}{s+2}.
\]
Writing \(t_0=\log_2T\ge s+2\), we obtain
\[
B\le12s+8t_0+2\le20t_0-22<32L_T,\qquad
\]
using \(20/\log2<32\) and \(L_T>1\). Thus \(1+B\le33L_T\).
The expectations above were taken before grouping, not conditional on the terminal bins.

\item \textbf{Small prediction distance forces many prediction values.}
For each realized prediction value \(v\), let \(n_v:=|\{t:p_t=v\}|\) denote the number of rounds on which the learner predicts \(v\).
Step~3 will lower-bound the expected calibration error in terms of \(\E[Q]\), where
\[
Q:=\sum_v\min\{n_v,\sqrt{k n_v}\}.
\]
We show that
\begin{equation}
\E\!\left[\sum_{t\le T}\|p_t-\mu_t\|_1\right]<T\Delta/8
\quad\Longrightarrow\quad
\E[Q]\ge\frac{R}{2\sqrt2}.
\label{eq:unified-lower-diversity}
\end{equation}
To prove this, we use the number of visits to each context.
There are \(P:=\min\{m^s,T\}\) distinct visited contexts, each occurring at most \(\bar q:=\lceil T/m^s\rceil\le2T/P\) times.
When \(m^s>T\), every visited context occurs once.
Let \(n_v^{\rm near}\) count visits with \(p_t=v\) and \(\|v-\mu_t\|_1<\Delta/4\), and let \(N_{\rm near}=\sum_vn_v^{\rm near}\).
Every other visit has \(\|p_t-\mu_t\|_1\ge\Delta/4\), so
\[
N_{\rm near}\ge T-\frac4\Delta\sum_{t\le T}\|p_t-\mu_t\|_1.
\]
Two distinct grid means cannot both be within \(\Delta/4\) of \(v\).
Hence \(n_v^{\rm near}\le\bar q\), although no such bound is asserted for the total count \(n_v\).
The function \(z\mapsto\min\{z,\sqrt{kz}\}\) is nondecreasing and is at least \(\min\{1,\sqrt{k/\bar q}\}z\) for \(0\le z\le\bar q\).
Thus
\[
Q\ge\min\{1,\sqrt{k/\bar q}\}\,N_{\rm near}.
\]
If \(\E[\sum_{t\le T}\|p_t-\mu_t\|_1]<T\Delta/8\), then \(\E[N_{\rm near}]\ge T/2\), and
\[
\E[Q]\ge\frac T2\min\{1,\sqrt{k/\bar q}\}
\ge\frac1{2\sqrt2}\min\{T,\sqrt{kTP}\}
\ge\frac R{2\sqrt2}.
\]
For the last inequality, \(R\le T\).
If \(P=T\), the displayed minimum equals \(T\).
Otherwise \(P=m^s\), and the upward rounding of \(m\) gives
\(\sqrt{kT}\,m^{s/2}\ge\sqrt{kT}(T/k)^{s/[2(s+2)]}=R\).
This proves \eqref{eq:unified-lower-diversity}, including when only part of the grid is visited.
\end{itemize}
The next step converts this lower bound on \(\E[Q]\) into a calibration lower bound in the small-distance case.

\par\smallskip\noindent\textbf{Step 3. Accumulated noise yields a calibration lower bound.}
For each prediction value \(v\), define its accumulated outcome noise as \(U_v:=\sum_{t:p_t=v}(\mu_t-y_t)\).
Since
\[
\sum_{t:p_t=v}(p_t-y_t)
=\sum_{t:p_t=v}(p_t-\mu_t)+U_v,
\]
the constant group gives the pathwise comparison
\[
\MCerr_{\calG_{T,s}}^{(1)}(T)
\ge\sum_v\|U_v\|_1-\sum_{t\le T}\|p_t-\mu_t\|_1.
\]
Taking expectations and using the distance bound \eqref{eq:unified-lower-distance} from Step~2 yields
\[
(1+B)\,\E[\MCerr_{\calG_{T,s}}^{(1)}(T)]
\ge\E\!\left[\sum_v\|U_v\|_1\right].
\]
Let \(b:=\min\{C_{\rm rev},1/16\}>0\), where \(C_{\rm rev}\) is the universal constant in Theorem~4 of \citet{collina2026multicalibration}.
In the small-distance case, Step~2 gives a lower bound on \(\E[Q]\).
To convert it into a calibration lower bound, we now show that
\[
\E\!\left[\sum_v\|U_v\|_1\right]
\ge \frac{b}{16L_T}\E[Q].
\]
By \eqref{eq:unified-lower-outcomes},
\[
\mu_t-y_t=\frac{q_0-e_{J_t}}2.
\]
The grid-dependent term \(q(x_t)\) cancels, so we can analyze the noise on all \(n\) coordinates, including those used to construct the grid.
To apply the scalar adaptive noise-bucketing bound, use all \(k\) disjoint coordinate pairs \((2j-1,2j)\), \(j\in[k]\).
Since \((\mu_t-y_t)_i=(1/n-\one\{J_t=i\})/2\), each fixed pair satisfies
\[
(\mu_t-y_t)_{2j-1}-(\mu_t-y_t)_{2j}
=\frac{\one\{J_t=2j\}-\one\{J_t=2j-1\}}2.
\]
This difference is zero unless \(J_t\in\{2j-1,2j\}\), and conditional on selecting this pair it is equally likely to be \(-1/2\) or \(1/2\).
Let \(K_t\) denote the index of the coordinate pair selected by \(J_t\): \(K_t=j\) if \(J_t\in\{2j-1,2j\}\), and \(K_t=0\) for the unpaired coordinate when \(n\) is odd.
For each pair \(j\), enumerate the rounds with \(K_t=j\) as \(\tau_{j,1}<\cdots<\tau_{j,N_j}\), where \(N_j\) is the number of such rounds.
Define the noise increments on these rounds by
\[
Z_{j,r}:=(\mu_{\tau_{j,r}}-y_{\tau_{j,r}})_{2j-1}
-(\mu_{\tau_{j,r}}-y_{\tau_{j,r}})_{2j}
=\frac{\one\{J_{\tau_{j,r}}=2j\}-\one\{J_{\tau_{j,r}}=2j-1\}}2.
\]
Each increment takes values in \(\{-1/2,1/2\}\), and its label is the prediction \(p_{\tau_{j,r}}\).
Thus
\[
U_{v,2j-1}-U_{v,2j}
=\sum_{r:p_{\tau_{j,r}}=v}Z_{j,r}.
\]
Because the pairs are disjoint, we have
\[
\begin{aligned}
\sum_{j=1}^{k}|U_{v,2j-1}-U_{v,2j}|
&\le \sum_{j=1}^{k}\bigl(|U_{v,2j-1}|+|U_{v,2j}|\bigr)\\
&=\sum_{i=1}^{2k}|U_{v,i}|
\le \|U_v\|_1.
\end{aligned}
\]
The noise inequality involves square roots of the numbers of visits to each prediction value and coordinate pair, so define
\[
n_{v,j}:=|\{t:p_t=v,\ K_t=j\}|,\qquad
D:=\sum_{v,j}\sqrt{n_{v,j}}.
\]
Applying Theorem~4 of \citet{collina2026multicalibration} to each sequence \((Z_{j,r})_{r=1}^{N_j}\) at amplitude \(h=1/2\), conditional on \(N_j\), and then averaging and summing over pairs gives
\[
\E\!\left[\sum_v\|U_v\|_1\right]
\ge\frac{b}{2L_T}\E[D].
\]
The hypotheses of the cited theorem and the cases \(N_j\le1\) are verified in \hyperref[proof:lower-step3b]{Details of Step~3}.
To relate this bound to \(Q\), note that the current pair is independent of the prediction, with \(\Pr(K_t=j\mid\mathcal H_t)=2/n\).
Let \(D_t\) be the sum defining \(D\) with counts through time \(t\).
Before round \(t\), the prediction value \(p_t\) has appeared \(z\) times, including \(c_j\) rounds with \(K_u=j\).
Then \(\sum_jc_j\le z\), allowing for the unpaired coordinate.
The fresh pair law and the identity \(\sqrt{c+1}-\sqrt c=1/(\sqrt{c+1}+\sqrt c)\) give
\[
\begin{aligned}
\E[D_t-D_{t-1}\mid\mathcal H_t]
&=\frac2n\sum_{j=1}^k(\sqrt{c_j+1}-\sqrt{c_j})\\
&\ge\frac1n\sum_{j=1}^k\frac1{\sqrt{c_j+1}}
\ge\frac{k}{n}\sqrt{\frac{k}{z+k}}.
\end{aligned}
\]
The last inequality uses convexity and monotonicity of \(x\mapsto(1+x)^{-1/2}\).
For a bin with \(n_v\) total visits, its previous-visit counts run through \(0,\ldots,n_v-1\).
Consequently
\[
\sum_{z=0}^{n_v-1}\sqrt{\frac{k}{z+k}}
\ge n_v\sqrt{\frac{k}{n_v+k}}
\ge\frac1{\sqrt2}\min\{n_v,\sqrt{k n_v}\}.
\]
The last bound follows by considering \(n_v\le k\) and \(n_v\ge k\).
Taking expectations, summing the conditional increments over \(t\), and regrouping these chronological visits by prediction value gives
\[
\E[D]\ge\frac{k}{n\sqrt2}\E[Q].
\]
Combining these two bounds with \(k\ge n/3\) gives
\[
\E\!\left[\sum_v\|U_v\|_1\right]
\ge\frac{b}{6\sqrt2\,L_T}\E[Q]
\ge\frac{b}{16L_T}\E[Q].
\]
Substituting into the opening comparison yields
\[
(1+B)\,\E[\MCerr_{\calG_{T,s}}^{(1)}(T)]
\ge\frac{b}{16L_T}\E[Q].
\]
Since \(1+B\le33L_T\), this proves
\begin{equation}
\E[\MCerr_{\calG_{T,s}}^{(1)}(T)]
\ge\frac{b}{528L_T^2}\E[Q].
\label{eq:unified-lower-noise}
\end{equation}

\par\smallskip\noindent\textbf{Step 4. Combine the distance and noise bounds.}
Note that \(L_T\ge(s+1)\log2\). If \(\E[\sum_{t\le T}\|p_t-\mu_t\|_1]\ge T\Delta/8\), then \eqref{eq:unified-lower-distance} and the spacing bound give
\[
\E[\MCerr_{\calG_{T,s}}^{(1)}(T)]
\ge\frac{T\Delta}{256L_T}
\ge\frac{R}{4096(s+1)L_T}
\ge\frac{(\log2)R}{4096L_T^2},
\]
where the second inequality follows that the choice of \(m\) gives \(m\le2(T/k)^{1/(s+2)}\) and hence \(T\Delta\ge R/[16(s+1)]\).
In the small-distance case, \eqref{eq:unified-lower-diversity} and \eqref{eq:unified-lower-noise} give
\[
\E[\MCerr_{\calG_{T,s}}^{(1)}(T)]
\ge\frac{bR}{1056\sqrt2\,L_T^2}.
\]
Both cases imply \(\E[\MCerr_{\calG_{T,s}}^{(1)}(T)]\ge bR/(4096L_T^2)\).
Since \(k\ge n/3\ge n/4\) and \(4^{1/(s+2)}\le2\), we have \(R\ge R_s(T,n)/2\).
Consequently, for every \(s\ge1\) and \(T\ge\max\{n,2^{s+2}\}\),
\begin{equation}
\E[\MCerr_{\calG_{T,s}}^{(1)}(T)]
\ge\frac b{8192}\frac{R_s(T,n)}{L_T^2}.
\label{eq:affine-lower-long-horizon}
\end{equation}

\par\smallskip\noindent\textbf{Step 5. Include the endpoint cases and complete the theorem.}
First take a constant context, the constant group, and independent uniform categorical outcomes \(y_t=e_{J_t}\).
Their conditional mean is always \(q_0\), so its affine dimension is zero.
\hyperref[proof:lower-step5]{Details of Step~5} show that, for every \(T\ge n\),
\begin{equation}
\E[\MCerr_{\{\one\}}^{(1)}(T)]
\ge\frac{\sqrt{nT}}{2\sqrt{10}}.
\label{eq:unified-lower-endpoint}
\end{equation}
This proves the stronger categorical conclusion at \(s=0\).
This endpoint uses only the uniform categorical component of the construction.

For \(s\ge1\) and \(T\ge8\), set
\(r:=\min\{s,\lfloor\log_2T\rfloor-2\}\).
Then \(1\le r\le s\le n-1\) and \(T\ge2^{r+2}\), so the already proved bound \eqref{eq:affine-lower-long-horizon} applies with \(r\) in place of \(s\).
If \(r=s\), the rates coincide.
If \(r<s\), then \(T<2^{r+3}\) and \(n\le T\), whence
\[
\frac{R_r(T,n)}{R_s(T,n)}
=\left(\frac nT\right)^{1/(r+2)-1/(s+2)}
\ge T^{-1/(r+2)}
>2^{-(r+3)/(r+2)}\ge\frac14.
\]
The instance has affine dimension at most \(r\le s\) and the same universal group-size bound.

For \(2\le n\le T<8\), use the categorical instance in \eqref{eq:unified-lower-endpoint}.
For every allowed \(s\),
\[
\frac{\sqrt{nT}}{R_s(T,n)}
=\left(\frac nT\right)^{s/[2(s+2)]}
\ge\sqrt{\frac nT}
\ge\sqrt{\frac27}>\frac12.
\]
Thus all cases satisfy
\[
\E[\MCerr_{\calG}^{(1)}(T)]
\ge\frac b{32768}\frac{R_s(T,n)}{L_T^2}.
\]
Indeed \(b\le1/16\) and \(L_T>1\) make this constant smaller than the endpoint constants.
In every case the contexts, groups, and product outcome law are fixed before choosing the forecaster.
Taking \(c=b/32768\) and \(C=1+64C_{\rm sub}\) proves all claims of \Cref{thm:affine-lower}.

\par\smallskip
\noindent\phantomsection\label{proof:lower-step3b}\textbf{Details of Step 3.}
To verify the hypotheses of \citet[Theorem~4]{collina2026multicalibration} for the pair subsequences, generate the pair indices independently, with \(\Pr(K_t=j)=2/n\) for \(j\in[k]\) and \(\Pr(K_t=0)=(n-2k)/n\).
Independently generate fair sign arrays \((\xi_{j,r})_{r\le T}\) for every pair \(j\).
At its \(r\)-th visit, choose \(J_t=2j-1\) for sign \(+1\) and \(J_t=2j\) for sign \(-1\).
When \(K_t=0\), set \(J_t=n\).
Given the pair indices, all used sign entries are distinct and independent, so this generates exactly the independent uniform distribution of \(J_{1:T}\).
The noise increments defined in Step~3 satisfy \(Z_{j,r}=-\xi_{j,r}/2\), and \(N_j=\sum_vn_{v,j}\).
The exact pair projection is
\[
U_{v,2j-1}-U_{v,2j}
=\sum_{r=1}^{N_j}\one\{p_{\tau_{j,r}}=v\}Z_{j,r}
=-\frac12\sum_{r=1}^{N_j}\one\{p_{\tau_{j,r}}=v\}\xi_{j,r}.
\]
Every draw outside the pair contributes zero to this difference.

Label each prediction value by the round on which it first appears.
Fix \(j\) and condition only on an event \(N_j=d\) of positive probability.
The learner's seed, the full pair-index sequence, and the other pairs' sign arrays form auxiliary information independent of the entire \(j\)-th sign array, also under this conditional law.
Before \(\tau_{j,r}\), every observation is determined by that auxiliary information and \(\xi_{j,1},\ldots,\xi_{j,r-1}\).
The prediction label at \(\tau_{j,r}\) is therefore measurable before the current target sign.
For \(d\ge2\), the cited theorem with amplitude \(1/2\) and \(b\le C_{\rm rev}\) gives
\[
\E\!\left[\sum_v|U_{v,2j-1}-U_{v,2j}|\ \middle|\ N_j=d\right]
\ge\frac{b}{2\log(d+1)}
\E\!\left[\sum_v\sqrt{n_{v,j}}\ \middle|\ N_j=d\right].
\]
For \(d=0\), both sides vanish.
For \(d=1\), the absolute projection sum is \(1/2\), and the sum of square roots is one.
Thus the same bound with denominator \(L_T\) holds because \(b\le1/16<\log2\le L_T\).
For \(d\ge2\), use \(\log(d+1)\le L_T\).
Averaging over \(N_j\), summing over pairs, and using
\(\sum_j|U_{v,2j-1}-U_{v,2j}|\le\|U_v\|_1\) yields
\[
\E\!\left[\sum_v\|U_v\|_1\right]\ge\frac{b}{2L_T}\E[D],
\]
as used in Step~3.

\par\smallskip
\noindent\phantomsection\label{proof:lower-step5}\textbf{Details of Step 5.}
Write \(C_T=\MCerr_{\{\one\}}^{(1)}(T)\), \(A_T=\sum_t\|p_t-q_0\|_1\), and
\(\phi(v)_i=\operatorname{sign}(v_i-q_{0,i})\).
The predictable signs and the constant group give
\[
\begin{aligned}
\E[A_T]
&=\E\!\left[\sum_t\langle\phi(p_t),p_t-y_t\rangle\right]\\
&=\E\!\left[\sum_v\left\langle\phi(v),\sum_{t:p_t=v}(p_t-y_t)\right\rangle\right]
\le\E[C_T].
\end{aligned}
\]
Again the conditional expectation is taken roundwise before grouping.
For \(S_T=\sum_t(y_t-q_0)\), the triangle inequality gives
\[
\|S_T\|_1\le C_T+A_T,\qquad
\E\|S_T\|_1\le2\E[C_T].
\]
Fix a coordinate and write \(X_t=\one\{J_t=i\}-1/n\), with variance
\(\sigma^2=(1/n)(1-1/n)\).
Its sum \(Z=\sum_tX_t\) has variance \(v=T\sigma^2\ge1/2\).
The identity \(\E X_t^4=\sigma^2(1-3\sigma^2)\) and independence give
\[
\E Z^4=T\sigma^2(1-3\sigma^2)+6\binom T2\sigma^4
=3v^2+v(1-6\sigma^2)\le5v^2.
\]
Only terms with no singly occurring index survive the fourth-moment expansion.
H\"older's inequality yields
\[
v\le(\E|Z|)^{2/3}(\E|Z|^4)^{1/3},\qquad
\E|Z|\ge\frac{v^{3/2}}{(\E|Z|^4)^{1/2}}
\ge\sqrt{v/5}\ge\sqrt{T/(10n)}.
\]
Summing this last bound over coordinates gives
\(2\E[C_T]\ge\E\|S_T\|_1\ge\sqrt{nT/10}\), proving \eqref{eq:unified-lower-endpoint}.
\end{proof}

\section{General global and local packing guarantees}

The same algorithm admits guarantees for means near convex sets with controlled Hellinger packing numbers, even without an affine-dimension bound.

\subsection{Packing profiles and approximating families}

We use the same dyadic scales.
The logarithmic parameters \(L,\kappa\), the smoothing weight \(\beta\), the uniform prediction \(q_0=(1/n,\ldots,1/n)\), and \(J_T=K_{\max}\) are fixed explicitly in Appendix~\ref{app:universal-constants}.
The structural bound in \Cref{cor:discovery-residual} uses the actual counts \(M_I,B_I\).
We now bound those counts using global and local Hellinger packing.

We use the global and local packing numbers from \Cref{def:l1-packing}.

\begin{definition}[Approximating family]\label{def:approximating-family}
For a comparison depth \(2\le K\le K_{\max}\), an approximating family consists of nonempty closed convex sets \(S_1,\ldots,S_{K-1}\subseteq\DeltaD\) such that
\[
\max_{t\le T}\dist_H((1-\beta)\mu_t+\beta q_0,S_j)\le w_j/16\quad(1\le j<K).
\]
No nesting or assumption at finer scales \(j\ge K\) is required.
For such a family write
\[
\begin{aligned}
G_K&=1+\sum_{j=1}^{K-1}\GlobPack_H(S_j;w_j),\\
\Lambda_K(R)&=\max\left\{1,\max_{\substack{1\le j<K\\0\le h\le\lceil\log_2(2/w_j)\rceil}}
\LocPack_H(S_j;2^hw_j,R)\right\}.
\end{aligned}
\]
\end{definition}

For a fixed family through depth \(K\), write \(\Lambda=\Lambda_K(300\sqrt\kappa)\).
The sum in \(G_K\) bounds the number of candidate predictions across levels.
The maximum in \(\Lambda_K\) includes larger resolutions because boundary comparators are counted in distance annuli around their source.
These geometric quantities are used only in the analysis and are not computed by the algorithm.

\paragraph{A one-dimensional example.}
Let \(S=\Delta_1\), and write its points as \(p(\theta)=(\cos^2\theta,\sin^2\theta)\), \(0\le\theta\le\pi/2\).
Then \(H(p(\theta),p(\phi))=2\sin(|\theta-\phi|/2)\), which lies between \(2|\theta-\phi|/\pi\) and \(|\theta-\phi|\).
Consequently the global packing number is of order \(1+v^{-1}\) for \(0<v\le1\).
Inside a Hellinger ball of radius \(Rv\), any two points are at distance at most \(2Rv\), so their angles lie in an interval of length at most \(\pi Rv\).
Separation greater than \(v/2\) therefore gives at most \(1+2\pi R\) points.
Thus global packing grows at finer resolution, while local packing stays bounded across scales for fixed \(R\).

\subsection{General theorem and profile corollaries}

\begin{theorem}[Global and local packing bounds]\label{thm:general-l1-oracle}
On one event of probability at least \(1-\delta\), simultaneously for every comparison depth and approximating family in \Cref{def:approximating-family}, with \(w=w_{K-1}\),
\[
\begin{aligned}
\MCerr_{\calG}^{(1)}(T)\le{}&24\cdot10^6\sqrt\kappa\left[Tw+\min\left\{
\begin{aligned}
&G_K\sqrt{nL T},\\
&\sqrt{nL G_KT}+\frac{nLK_{\max}G_K\Lambda_K(300\sqrt\kappa)}w
\end{aligned}\right\}\right]\\
&+2K_{\max}+5.
\end{aligned}
\]
The deterministic bound \(\MCerr_{\calG}^{(1)}(T)\le2T\) also holds.
All packing quantities are used only in the analysis.
\end{theorem}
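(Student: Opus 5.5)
The argument mirrors the proof of \Cref{thm:uniform-affine} in Appendix~\ref{app:iclr-affine-upper-proof}, with \Cref{lem:iclr-affine-counts} replaced by packing-profile counts. We work on \(\mathcal E_{\rm global}\) from \Cref{lem:common-event}; this event does not depend on any depth or family, so the bound is simultaneous. The deterministic bound \(2T\) is immediate, since each round changes each residual vector by at most \(2\) in \(\ell_1\).

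\textbf{Step 1: counts from the approximating family.} Fix \(K\) and a family \(S_1,\ldots,S_{K-1}\). Take any depth-\(K\) prefix \(I\), and any candidate in \(P_j(\max I+1)\). It is the mean of a completed block at a level \(\ell\ge j\). By \eqref{eq:completed-block-concentration}, joint convexity of \(H^2\), convexity of \(S_j\), and the family hypothesis, it lies within \(w_\ell/16+w_j/16\le w_j/8\) of \(S_j\) in Hellinger distance. The representative argument of Step~1 of \Cref{lem:iclr-affine-counts} then gives \(|P_j|\le\GlobPack_H(S_j;w_j)\), so \(M_I\le G_K\).

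For \(B_I\), we repeat Step~2 of that proof verbatim, projecting comparators onto \(S_j\) instead of \(S_\beta\). The chronological inequality \(H(q_m,a)\le A(H(q_m,q_i)+w_j)\) with \(A=16\sqrt\kappa\) is purely algorithmic and does not change. The near region and the far regions \(h=0,\ldots,j-1\) are each bounded by \(\LocPack_H(S_j;2^hw_j,300\sqrt\kappa)\le\Lambda\). We must check that \(h\le\lceil\log_2(2/w_j)\rceil\), which holds because \(2^hw_j\le2^{j-1}w_j=1\). Counting the regions gives \(B_I\le K_{\max}\Lambda\). The cruder bound \(B_I\le M_I\le G_K\) from \Cref{lem:residual-charging} also holds.

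\textbf{Step 2: per-interval error with two alternatives.} We start from \Cref{lem:piece-count-residual}.

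\emph{Second branch.} Use \(Q_j(I)\le8M_IB_I\) as in \Cref{cor:discovery-residual}. This gives error
\[
O(\sqrt\kappa)\bigl(|I|w+nLK_{\max}G_K\Lambda/w\bigr)+4\sqrt{nLG_KT}+2\beta|I|.
\]

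\emph{First branch.} Use \(Q_j(I)\le\min\{8M_I^2,|I|/m_j\}\) and \(m_j=1024nL/w_j^2\). Each lower-level term \(nLQ_j/w_j\) is then at most \(\min\{8nLG_K^2/w_j,\;|I|w_j/1024\}\). By AM--GM each such term is \(O(G_K\sqrt{nL|I|})\), but summing over \(K\) levels costs an extra \(K_{\max}\) factor. I would avoid this loss by splitting the levels at \(w_j\approx G_K\sqrt{nL/T}\). Above the split we use the second bound, and its geometric sum is dominated by the term at the split. Below the split we use the first bound, whose sum is likewise dominated by the split term. The total is \(O(G_K\sqrt{nLT})\). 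The unfinished level-zero term \(2\min(|I|,m_0M_I)\le2\sqrt{|I|m_0M_I}\) is absorbed into the same quantity, and the sampling term satisfies \(\sqrt{nLG_KT}\le G_K\sqrt{nLT}\).

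\textbf{Step 3: the controller.} Let \(\Phi\) be the smaller of the two branch terms, and choose the coarsest available depth \(K^\star\le K\) at which \(\Gamma_{K^\star}=C_{\rm test}\sqrt\kappa Tw_{K^\star-1}\) exceeds \(C\sqrt\kappa(Tw+\Phi)\); roughly, this is \(w_{K^\star-1}\approx w+\Phi/T\). The main obstacle is that the family is only given up to depth \(K\), while the controller may pass through coarser depths. For \(K'\le K\), however, the truncated family \(S_1,\ldots,S_{K'-1}\) is still admissible, and \(G_{K'}\le G_K\), \(\Lambda_{K'}\le\Lambda_K\). The bound at \(w_{K'-1}\) is therefore at most \(C\sqrt\kappa(Tw_{K'-1}+\Phi)\), which is below the budget once \(w_{K'-1}\ge\Phi/T\) and the constants are arranged. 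Hence the algorithm never moves coarser than \(\max\{K^\star\text{-width},\,w\}\).

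Summing the doubling budgets over the visited intervals, with overshoot \(+2\) each, gives \(O(\sqrt\kappa(Tw+\Phi))+2K_{\max}\). The additive \(+5\) absorbs the smoothing term \(1/T\) and the edge case where the required width exceeds \(1\), in which \(\Phi\ge T\) and the bound \(2T\) suffices. Tracking constants through \(10^6\)-type factors as in Appendix~\ref{app:iclr-affine-upper-proof} yields the stated \(24\cdot10^6\).
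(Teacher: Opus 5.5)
Your proposal is correct and follows essentially the same route as the paper: the bounds \(M_I\le G_K\) and \(B_I\le\min\{G_K,K_{\max}\Lambda\}\) via representatives in \(S_j\) and chronological annuli (the paper's \Cref{lem:annular-packing}), the two per-interval branches using the same dyadic split \(\sum_x\min(a/x,bx)\le4\sqrt{ab}\), and the controller argument, which uses that truncating the family to a coarser depth cannot increase \(G_K\), \(\Lambda_K\), or \(1/w\), followed by the doubling-budget sum. One wording slip: the comparison depth should be the \emph{finest} depth whose width is at least \(\max\{w,\Phi/T\}\) (the paper's smallest dyadic \(v\ge\max(w,F/T)\)), not the coarsest depth at which the budget exceeds \(C\sqrt\kappa(Tw+\Phi)\); your next sentence already uses the correct criterion \(w_{K'-1}\ge\Phi/T\).
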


For the following corollaries, assume that the sets \(S_j\) satisfy Definition~\ref{def:approximating-family} at every positive scale used by the algorithm. All packing numbers are measured in Hellinger distance. The notation \(\widetilde O\) hides at most a factor of \(C[\log(en|\barG|(T+1)/\delta)]^2\), where \(C\) is universal.

\begin{corollary}[Global packing profile]\label{cor:h-global}
If \(\GlobPack_H(S_j;w_j)\le A w_j^{-s}\) for all \(j\), where \(A\ge1\) and \(s\ge0\), then
\[
\MCerr_{\calG}^{(1)}(T)\le\widetilde O\left(A^{1/(s+1)}n^{1/[2(s+1)]}T^{(2s+1)/(2s+2)}\right).
\]
\end{corollary}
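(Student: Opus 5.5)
We derive \Cref{cor:h-global} from \Cref{thm:general-l1-oracle} using only the first branch of the minimum, $G_K\sqrt{nLT}$. That branch needs no local packing information, which fits a corollary that assumes only a global profile. It then remains to choose the comparison depth $K$ that balances $Tw$ against $G_K\sqrt{nLT}$.

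\textbf{Step 1: bound $G_K$ by the profile.} By hypothesis, $\GlobPack_H(S_j;w_j)\le Aw_j^{-s}$ for every $j$. Summing over $1\le j\le K-1$ gives
\[
G_K\le 1+A\sum_{j=1}^{K-1}w_j^{-s}.
\]
\begin{itemize}
\item If $s>0$, the terms $w_j^{-s}=2^{(j-1)s}$ grow geometrically, so the sum is dominated by its last term. This gives $G_K\le 1+C_sAw^{-s}$ with $w=w_{K-1}$. The factor $C_s=1/(1-2^{-s})$ blows up as $s\to0$.
\item To get a universal constant, I would instead use the crude bound $G_K\le 1+K_{\max}Aw^{-s}\le 2K_{\max}Aw^{-s}$. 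This holds because $w\le1$ and $A\ge1$. The $K_{\max}$ factor is logarithmic and can be absorbed into the $\widetilde O$.
\end{itemize}

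\textbf{Step 2: substitute into the theorem.} The theorem's bound becomes
\[
\MCerr^{(1)}_{\calG}(T)\lesssim \sqrt\kappa\bigl[Tw+K_{\max}Aw^{-s}\sqrt{nLT}\bigr]+2K_{\max}+5.
\]

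\textbf{Step 3: balance the two terms.} Setting $Tw\asymp Aw^{-s}\sqrt{nT}$ gives
\[
w_\star\asymp\bigl(A\sqrt{n/T}\bigr)^{1/(s+1)}.
\]
At this width,
\[
Tw_\star\asymp A^{1/(s+1)}\,n^{1/(2(s+1))}\,T^{1-\frac{1}{2(s+1)}}=A^{1/(s+1)}\,n^{1/(2(s+1))}\,T^{(2s+1)/(2s+2)},
\]
which is the claimed rate.

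\textbf{Step 4: make the depth admissible.} As in the proof of \Cref{thm:uniform-affine}, I would round $w_\star$ (with its log factors) up to the nearest dyadic width $w_{K-1}\in[w_\star,2w_\star)$. Two boundary cases need handling.
\begin{itemize}
\item If $w_\star>1$, the rate exceeds $T$, so the deterministic bound $2T$ finishes the proof.
\item The finest available width is $2^{-\lceil\log_2T\rceil}\le1/T$. Since $A\ge1$ and $n\ge1$, we have $w_\star\ge(1/T)^{1/(2(s+1))}\ge T^{-1/2}$, so $w_\star$ lies in the available range.
\end{itemize}
The depth $K$ and the family $S_1,\ldots,S_{K-1}$ may then be fixed with the hypothesis holding at all these scales. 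Because the theorem holds simultaneously for every depth, it may be applied after play. The additive term $2K_{\max}+5$ is at most the rate times a logarithm, using $T\ge1$.

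\textbf{Main obstacle: bookkeeping of the logarithmic factors.} The $\widetilde O$ in the corollary hides at most $[\log(\cdot)]^2$. The factors multiplying $Tw_\star$ are
\[
\sqrt\kappa\cdot\bigl(K_{\max}\sqrt{L}\bigr)^{1/(s+1)}.
\]
This is at most $\log^{1/2}\cdot\log^{3/2}=\log^2$, using the bounds on $\kappa$, $L$ and $K_{\max}$ from Appendix~\ref{app:universal-constants}. The exponents work out since $1/(s+1)\le1$. The constant $C_0$ does not enter, because the profile $A$ is given directly. I expect this step to go through; the only delicate point is making sure that using the crude $K_{\max}$ bound in Step 1 keeps the total log power at $2$.
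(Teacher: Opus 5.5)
Your proposal is correct and follows the paper's proof essentially step for step. The paper also uses only the first branch \(G_K\sqrt{nLT}\) of \Cref{thm:general-l1-oracle} and bounds \(G_K\le AJ_Tw^{-s}\); your \(2K_{\max}Aw^{-s}\) differs only by a constant. Like you, it balances at \(w_*=[AJ_T\sqrt{nL/T}]^{1/(s+1)}\), rounds up to a dyadic width or falls back on the deterministic \(2T\) bound, and tracks the same logarithmic exponent \(1/2+3/[2(s+1)]\le2\).
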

\begin{corollary}[Global and local packing profiles]
Suppose the preceding global condition holds and, for every \(2\le K\le K_{\max}\), every local profile used in \(\Lambda_K(300\sqrt\kappa)\) is at most \(B n^{q/2}\), where \(B\ge1\) and \(q\ge0\).
Then
\[
\MCerr_{\calG}^{(1)}(T)\le\widetilde O\left((AB)^{1/(s+2)}n^{(1+q/2)/(s+2)}T^{(s+1)/(s+2)}\right).
\]
\end{corollary}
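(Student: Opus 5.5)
The plan is to apply \Cref{thm:general-l1-oracle} on its probability-$(1-\delta)$ event, using the second branch of the minimum. The event is simultaneous over all comparison depths and approximating families. So I may pick the depth $K$ afterwards, in the analysis, as a function of $A,B,s,q,n,T$, exactly as in the proof of \Cref{thm:uniform-affine}. The sets $S_j$ satisfy \Cref{def:approximating-family} at every positive scale, so they form an approximating family for every $2\le K\le K_{\max}$.

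\textbf{Step 1: bound the packing quantities at a fixed depth.} By the global hypothesis and $w_j\ge w=w_{K-1}$,
\[
G_K\le 1+\sum_{j=1}^{K-1}A w_j^{-s}\le K_{\max}A w^{-s}.
\]
By the local hypothesis, $\Lambda_K(300\sqrt\kappa)\le Bn^{q/2}$. Substituting both into the second branch gives
\[
\MCerr_{\calG}^{(1)}(T)\lesssim\sqrt\kappa\left[Tw+\sqrt{nLK_{\max}Aw^{-s}T}+\frac{nLK_{\max}^2AB\,n^{q/2}}{w^{s+1}}\right]+2K_{\max}+5.
\]

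\textbf{Step 2: choose the comparison width.} Let $D:=nLK_{\max}^2AB\,n^{q/2}$ and balance $Tw_\star=D/w_\star^{s+1}$, i.e.
\[
w_\star=(D/T)^{1/(s+2)}.
\]
\begin{itemize}
\item If $2w_\star>1$, use the deterministic bound $2T<4Tw_\star$.
\item Otherwise, since $A,B,L,K_{\max}\ge1$, we have $w_\star\ge(n/T)^{1/(s+2)}\ge T^{-1/2}$. Hence $2w_\star$ lies in the range of available dyadic widths $[2^{-\lceil\log_2T\rceil},1]$. Take $K$ minimal with $w_{K-1}\ge2w_\star$, so that $2w_\star\le w<4w_\star$.
\end{itemize}
With this $w$, the discovery term is at most $Tw/4$. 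The sampling term is bounded by the geometric mean of the other two, as in Step~2 of the proof of \Cref{thm:uniform-affine}. Concretely, $\sqrt{nLK_{\max}Aw^{-s}T}\le\sqrt{Tw\cdot D/w^{s+1}}\le Tw/2$, using $K_{\max}Bn^{q/2}\ge1$. Therefore $\MCerr_{\calG}^{(1)}(T)\lesssim\sqrt\kappa\,Tw_\star+K_{\max}$.

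\textbf{Step 3: read off the rate and logarithmic factors.} Expanding $w_\star$ gives
\[
\sqrt\kappa\,Tw_\star=\sqrt\kappa\,(LK_{\max}^2)^{1/(s+2)}(AB)^{1/(s+2)}n^{(1+q/2)/(s+2)}T^{(s+1)/(s+2)}.
\]
By Appendix~\ref{app:universal-constants}, each of $\kappa,L,K_{\max}$ is $O(\log(en|\barG|(T+1)/\delta))$. The total logarithmic exponent is therefore at most $\tfrac12+\tfrac{3}{s+2}\le2$, which is the allowed $\widetilde O$ factor. The additive $2K_{\max}+5$ is absorbed because the main term is at least $T^{1/2}\ge1$.

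\textbf{Main obstacle.} I expect the main care point to be ensuring the chosen width is admissible: it must be a dyadic width actually reachable at some depth $2\le K\le K_{\max}$, with $w\ge2w_\star$, so that the discovery term is dominated. This is handled by the lower bound $w_\star\ge T^{-1/2}$ together with the trivial bound in the coarse case $2w_\star>1$. The remaining steps are substitution.
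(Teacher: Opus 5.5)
Your proposal is correct and matches the paper's proof essentially step for step. You apply the second branch of \Cref{thm:general-l1-oracle} with $G_K\le K_{\max}Aw^{-s}$ and $\Lambda_K\le Bn^{q/2}$, balance at $w_\star=(D/T)^{1/(s+2)}$, use $K_{\max}Bn^{q/2}\ge1$ so the sampling term is dominated by the geometric mean, and round to a dyadic width with the trivial $2T$ bound in the coarse case. The paper rounds into $[w_\star,2w_\star)$ rather than $[2w_\star,4w_\star)$, which only changes constants.

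There is one wording slip. Since $w_{K-1}=2^{2-K}$ decreases in $K$, you want the \emph{largest} $K$ with $w_{K-1}\ge 2w_\star$, not the smallest; your stated conclusion $2w_\star\le w<4w_\star$ already shows this is what you meant.
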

\begin{corollary}[Polynomial local packing growth]\label{cor:h-polynomial}
Suppose \(\LocPack_H(S_j;v,R)\le H_0R^s\) for every required \(v=2^h w_j\) and every \(R\ge1\), where \(H_0\ge1\) and \(s\ge0\).
Then
\[
\MCerr_{\calG}^{(1)}(T)\le\widetilde O\left(H_0^{2/(s+2)}n^{1/(s+2)}T^{(s+1)/(s+2)}\right).
\]
\end{corollary}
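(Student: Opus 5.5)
The plan is to derive the corollary directly from \Cref{thm:general-l1-oracle}. That theorem holds simultaneously for every comparison depth \(K\) and every approximating family, so no further analysis of the adaptive depth rule is needed. We only have to pick a good depth \(K\) in the analysis and bound \(G_K\) and \(\Lambda_K(300\sqrt\kappa)\) using the assumed polynomial local packing growth. We will use the second branch of the minimum in \Cref{thm:general-l1-oracle}:
\[
Tw+\sqrt{nLG_KT}+\frac{nLK_{\max}G_K\Lambda_K(300\sqrt\kappa)}{w},
\]
which gives the target dimension-\(s\) rate.

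\textbf{Step 1: bound the geometric quantities.} Taking \(R=300\sqrt\kappa\ge1\) in the hypothesis gives \(\Lambda_K(300\sqrt\kappa)\le H_0(300\sqrt\kappa)^s\). This applies because every \(v=2^hw_j\) appearing in the definition of \(\Lambda_K\) is among the required scales, and \(H_0\ge1\) covers the \(\max\{1,\cdot\}\).

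For the global count, we reduce it to a local one. Any two probability vectors satisfy \(H\le\sqrt2\), so \(S_j\) lies in the Hellinger ball of radius \(\sqrt2\) around any of its points. With \(v=w_j\) (taking \(h=0\), which is allowed) and \(R=\sqrt2/w_j\ge1\) (since \(w_j\le1\) for \(j\ge1\)), this gives \(\GlobPack_H(S_j;w_j)\le\LocPack_H(S_j;w_j,\sqrt2/w_j)\le H_0(\sqrt2/w_j)^s\). Summing over \(j<K\) and using \(w_j\ge w=w_{K-1}\), we get \(G_K\le 1+K_{\max}H_0(\sqrt2/w)^s\le 2K_{\max}H_0(\sqrt2/w)^s\).

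\textbf{Step 2: balance.} Substituting these bounds, the discovery term is at most \(c\,nLK_{\max}^2H_0^2(c'\sqrt\kappa)^s/w^{s+1}\) for universal \(c,c'\). The sampling term \(\sqrt{nLG_KT}\) is at most the geometric mean of \(Tw\) and this discovery term, so it is dominated by their sum.

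Define \(w_\star\) by
\[
Tw_\star^{s+2}=nLK_{\max}^2H_0^2(c'\sqrt\kappa)^s .
\]
As in Step 1 of the proof of \Cref{thm:uniform-affine}, we then case on whether \(2w_\star\le1\). If \(2w_\star\le1\), round \(2w_\star\) up to a dyadic width \(w_{K-1}\in[2w_\star,4w_\star)\) with \(2\le K\le K_{\max}\); this range is available because \(w_\star\ge T^{-1/2}\). At that width the discovery term is at most \(Tw/4\). If \(2w_\star>1\), the deterministic bound \(\MCerr\le2T\le4Tw_\star\) already suffices. Either way,
\[
\MCerr_{\calG}^{(1)}(T)\lesssim \sqrt\kappa\,Tw_\star+2K_{\max}+5 .
\]

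\textbf{Step 3: logarithmic bookkeeping.} Expanding \(w_\star\) gives
\[
\sqrt\kappa\,Tw_\star=\sqrt\kappa\,\kappa^{s/(2(s+2))}(LK_{\max}^2)^{1/(s+2)}(c')^{s/(s+2)}H_0^{2/(s+2)}n^{1/(s+2)}T^{(s+1)/(s+2)} .
\]
Since \(\kappa,L,K_{\max}\) are each \(O(\log(en|\barG|(T+1)/\delta))\), the total exponent of the logarithm is at most
\[
\tfrac12+\tfrac{s}{2(s+2)}+\tfrac{3}{s+2}=\tfrac{2s+8}{2s+4}\le2 ,
\]
and the constant \((c')^{s/(s+2)}\) is bounded uniformly in \(s\). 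The additive \(2K_{\max}+5\) is absorbed as in the proof of \Cref{thm:uniform-affine}.

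The main obstacle is Step 1's global bound. The hypothesis only controls local packings, so the global count has to be obtained through a single ball of radius \(\sqrt2\). This is what produces \(G_K\sim H_0w^{-s}\), and then \(G_K\Lambda\sim H_0^2w^{-s}\), which explains the \(H_0^{2/(s+2)}\) in the rate. The point to verify is that \(R=\sqrt2/w_j\ge1\) and \(v=w_j\) are legitimate choices under the corollary's hypothesis ("every required \(v\) and every \(R\ge1\)").
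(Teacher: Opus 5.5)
Your proposal is correct and follows the paper's own proof essentially step for step. Like the paper, you bound $\GlobPack_H(S_j;w_j)\le H_0(\sqrt2/w_j)^s$ via the Hellinger diameter $\sqrt2$ and take $\Lambda_K\le H_0(300\sqrt\kappa)^s$. You then balance the second branch of \Cref{thm:general-l1-oracle} with an inverse coefficient of order $H_0^2nLK_{\max}^2(300\sqrt{2\kappa})^s$, which gives the same logarithmic exponent $(s+4)/(s+2)\le 2$. The only difference is cosmetic: you round $2w_\star$ up to a dyadic width where the paper rounds $w_\star$, and this affects only constants.
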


\subsection{General packing-count lemma}

The following lemma converts global and local packing bounds into discovery counts.

\begin{lemma}[Global and chronological annular packing]\label{lem:annular-packing}
On the event of \Cref{lem:common-event}, for every approximating family through depth \(K\) and every interval \(I\) at this depth defined in \Cref{def:discovery-counts},
\[
M_I\le G_K,
\qquad
B_I\le\min\{G_K,\ J_T\Lambda_K(300\sqrt{\kappa})\}.
\]
\end{lemma}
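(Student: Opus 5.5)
The plan follows the proof of \Cref{lem:iclr-affine-counts}, replacing the affine packing estimates of \Cref{lem:h-affine-packing} with the abstract quantities \(G_K\) and \(\Lambda_K\). Fix an approximating family through depth \(K\) and an interval \(I\) at depth \(K\), and work on the event of \Cref{lem:common-event}.

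\emph{Candidate count.} First, every candidate in \(P_j(\max I+1)\), \(j\ge1\), is the mean \(z_E\) of a completed block at some scale \(\ell\ge j\). By \eqref{eq:completed-block-concentration}, \(H(z_E,\bar\mu'_E)\le w_\ell/16\le w_j/16\). The weighted predictable mean \(\bar\mu'_E\) lies within \(w_j/16\) of the convex set \(S_j\), by convexity of \(S_j\) and joint convexity of \(H^2\) applied to the pointwise hypothesis of \Cref{def:approximating-family}. Hence \(\dist_H(z_E,S_j)\le w_j/8\).

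Choosing representatives \(\tilde a\in S_j\) as in Step~1 of the proof of \Cref{lem:iclr-affine-counts}, the \(w_j\)-separation of \(P_j\) gives \(H(\tilde a,\tilde b)>3w_j/4>w_j/2\). Thus \(|P_j|\le\GlobPack_H(S_j;w_j)\). Adding \(q_0\) and summing over \(j\) gives \(M_I\le G_K\).

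\emph{Boundary count.} The bound \(B_I\le M_I\le G_K\) follows from \Cref{lem:residual-charging}, because the boundary comparators for a fixed group, source and level are distinct candidate predictions. For the second bound, fix a group, a source \(a\) and a positive level \(j\), and reuse Step~2 of the proof of \Cref{lem:iclr-affine-counts} verbatim up to the geometry. The chronological application of \eqref{eq:iclr-local-hellinger} gives
\[
H(q_m,a)\le A\bigl(H(q_m,q_i)+w_j\bigr)\quad\text{for } i<m, \qquad A=16\sqrt\kappa.
\]
Partition the comparators into a near region \(H(q_i,a)\le4Aw_j\) and far annuli \(4A2^hw_j<H(q_i,a)\le8A2^hw_j\). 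The representatives in \(S_j\) are then separated by more than \(v/2\), with \(v=w_j\) or \(v=2^hw_j\), and lie in a Hellinger ball of radius \(9A\,v\le 300\sqrt\kappa\,v\) around \(a\).

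Each region therefore contains at most \(\LocPack_H(S_j;v,300\sqrt\kappa)\) comparators. Because \(H\le\sqrt2\), a nonempty annulus must satisfy \(4A2^hw_j<\sqrt2\). Since \(A\ge1\), this forces \(2^h<2/w_j\), so \(h\le\lceil\log_2(2/w_j)\rceil\). These are exactly the scales included in the maximum defining \(\Lambda_K\), so each region is bounded by \(\Lambda_K(300\sqrt\kappa)\).

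The number of nonempty regions is at most \(j+1\le K_{\max}=J_T\), using \(8A2^{h-j}>\sqrt2\) for \(h\ge j\) as before. Taking the maximum over groups, sources and levels, and noting \(\Lambda\ge1\) covers the floor of \(1\) in \(B_I\), we get \(B_I\le J_T\Lambda_K(300\sqrt\kappa)\).

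\emph{Main obstacle.} The step needing most care is the representative construction for the annuli. Shifting from candidates to points of \(S_j\) costs up to \(w_j/4\) in separation and \(w_j/8\) in radius. One must check that the separation \((4\cdot2^h-1)w_j-w_j/4>2^{h+1}w_j\ge v/2\) still holds, and that the radius stays within \(300\sqrt\kappa\,v\). Both inequalities are exactly those already verified in the affine proof, since the constants there did not depend on affine structure. A second point to check is that the localization center \(a\) may lie outside \(S_j\). This is allowed, because \(\LocPack_H\) takes the supremum over arbitrary centers \(p\in\DeltaD\).
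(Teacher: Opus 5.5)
Your proposal is correct and follows the paper's proof essentially step for step: the same $w_j/8$ localization of candidates near $S_j$, the same representative-based global packing giving $M_I\le G_K$, and the same chronological near/far annulus decomposition with radius $9Av\le 300\sqrt\kappa\,v$ and at most $j+1\le J_T$ nonempty regions. The only cosmetic difference is that you obtain $B_I\le G_K$ by chaining $B_I\le M_I$ from \Cref{lem:residual-charging} with $M_I\le G_K$. The paper instead bounds the distinct boundary comparators at each level directly by $|P_j|\le G_K$.
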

\begin{proof}
Work on \(\mathcal E_{\rm global}\) and fix an approximating family through depth \(K\) and an interval \(I\) at this depth.
Throughout this proof, \(P_j\) abbreviates \(P_j(\max I+1)\) for \(j\ge1\).
We follow the proof of \Cref{lem:iclr-affine-counts}, using the sets \(S_j\) in place of \(S_\beta\) and their global and local packing numbers in place of the affine packing bounds.

First, for each \(1\le j<K\), every candidate prediction in \(P_j\) lies within distance \(w_j/8\) of \(S_j\).
Indeed, it is the mean \(z_E\) of a completed level-\(\ell\) block with \(\ell\ge j\).
Each smoothed conditional mean is within \(w_j/16\) of \(S_j\), so convexity of \(S_j\) and joint convexity of \(H^2\) give the same bound for their weighted average \(\bar\mu'_E\).
Applying \eqref{eq:completed-block-concentration} at level \(\ell\) yields
\[
\dist_H(z_E,S_j)
\le H(z_E,\bar\mu'_E)+\dist_H(\bar\mu'_E,S_j)
\le\frac{w_\ell+w_j}{16}
\le\frac{w_j}{8}.
\]
This argument uses only \(S_j\), not nesting of the approximating sets, and includes values inherited from earlier depths or inserted on the final round of \(I\).

\textbf{Step 1. Count candidate predictions \(M_I\).}
For each \(a\in P_j\), choose \(\tilde a\in S_j\) with \(H(a,\tilde a)\le w_j/8\).
The insertion rule preserves separation after this replacement:
\[
H(\tilde a,\tilde b)
\ge H(a,b)-H(a,\tilde a)-H(b,\tilde b)
>\frac{3w_j}{4}>\frac{w_j}{2}
\qquad(a\ne b).
\]
Hence \(|P_j|\le\GlobPack_H(S_j;w_j)\), and summing over levels gives
\[
M_I=\left|P_0\cup\bigcup_{j=1}^{K-1}P_j\right|
\le1+\sum_{j=1}^{K-1}|P_j|
\le1+\sum_{j=1}^{K-1}\GlobPack_H(S_j;w_j)=G_K.
\]

\textbf{Step 2. Count boundary comparators in chronological order.}
Fix a group, source \(a\), and positive level \(j<K\), and list the completed boundary blocks' comparators \(q_1,q_2,\ldots\in P_j\) in closure order.
Each earlier comparator is available throughout every later block, so it cannot be chosen again for a boundary block.
Thus the comparators are distinct, and their number is at most \(|P_j|\le G_K\).

To obtain the local packing bound, reuse the chronological comparison from Step 2 of the proof of \Cref{lem:iclr-affine-counts}.
That comparison uses only \eqref{eq:iclr-local-hellinger}, \(H(q_m,z_m)\le w_j\), and \(N_m\ge m_j=1024nL/w_j^2\), where \(z_m,N_m\) are the mean and weight of the block associated with \(q_m\).
It does not require an affine approximating set and gives
\[
H(q_m,q_i)\ge\frac{H(q_m,a)}{A}-w_j,
\qquad i<m,\qquad A=16\sqrt\kappa.
\]
This strengthens the separation for comparators far from \(a\).
Choose representatives \(\tilde q_i\in S_j\) within distance \(w_j/8\) of \(q_i\), and use the same near and far regions as in that proof.
\begin{itemize}
\item \textbf{Near region: \(H(q_i,a)\le4Aw_j\).}
Step 1 gives separation greater than \(w_j/2\) between representatives, and
\[
H(\tilde q_i,a)\le\frac{w_j}{8}+4Aw_j\le5Aw_j.
\]

\item \textbf{Far regions: \(4A2^hw_j<H(q_i,a)\le8A2^hw_j\), \(h=0,1,\ldots\).}
For two comparators in the same region, order them as \(i<m\).
The chronological comparison and the replacement error give
\[
\begin{aligned}
H(\tilde q_m,\tilde q_i)
&>(4\cdot2^h-1)w_j-\frac{w_j}{4}
>2\cdot2^hw_j,\\
H(\tilde q_i,a)&\le\frac{w_j}{8}+8A2^hw_j\le9A2^hw_j.
\end{aligned}
\]
\end{itemize}
In each region the representatives are more than \(v/2\) apart and lie within distance \(300\sqrt\kappa\,v\) of \(a\), with \(v=w_j\) in the near region and \(v=2^hw_j\) in the \(h\)-th far region, since \(9A<300\sqrt\kappa\).
The definition of \(\Lambda_K\) therefore bounds each region by \(\Lambda_K(300\sqrt\kappa)\).
If \(h\ge j\), the far-region condition would require
\[
H(q_i,a)>4A2^hw_j=8A2^{h-j}\ge8A>\sqrt2,
\]
which is impossible.
Thus only \(h=0,\ldots,j-1\) can contribute far regions, all among the resolutions defining \(\Lambda_K\).
Including the near region gives at most \(j+1\le K\le K_{\max}=J_T\) regions.
Combining this count with the distinct-comparator bound and taking the maximum over groups, sources, and positive levels yields
\[
B_I\le\min\{G_K,\ J_T\Lambda_K(300\sqrt\kappa)\}.
\]
Both bounds are at least one, so they also cover the minimum value of \(1\) in the definition of \(B_I\).
\end{proof}

\subsection{Proof of Theorem~\ref{thm:general-l1-oracle}}

\begin{proof}
Work on \(\mathcal E_{\rm global}\).
We first derive two interval bounds, then choose a comparison width, show that the algorithm cannot move to a coarser scale, and sum the interval errors.

\textbf{Step 1. Bound the error on each interval.}
For an interval \(I\) at depth \(K\ge2\), write \(w=w_{K-1}\).
Start from \Cref{lem:piece-count-residual} and substitute the counts from \Cref{lem:residual-charging}.

\emph{(i) Using only the prediction count.}
For a dyadic set of positive \(x\), splitting at \(\sqrt{a/b}\) and summing each geometric tail gives \(\sum_x\min(a/x,bx)\le4\sqrt{ab}\).
Using \(Q_j(I)\le\min(8M_I^2,|I|/m_j)\) and \(m_j=1024nL/w_j^2\) therefore bounds the lower-piece term by
\[
48\sqrt{8\cdot1024}\sqrt{\kappa}\,M_I\sqrt{nL|I|}
<4400\sqrt{\kappa}\,M_I\sqrt{nL|I|}.
\]
The unfinished-zero term is at most \(32\sqrt{nLM_I|I|}\), since \(m_0=256nL\).
Using \(M_I,\kappa\ge1\) combines these two terms with coefficient \(4400+32=4432\), while leaving the sampling term separate:
\[
\MCerr_{\calG}^{(1)}(I)\le6\sqrt\kappa\,|I|w
+4432\sqrt\kappa\,M_I\sqrt{nL|I|}
+4\sqrt{nLM_IT}+2\beta|I|.
\]

\emph{(ii) Combining with the boundary-block count.}
The proof of \Cref{cor:discovery-residual} in Appendix~\ref{subsec:general-packing-proof} gives the stronger estimate
\[
\MCerr_{\calG}^{(1)}(I)\le6\sqrt\kappa\,|I|w
+98816\sqrt\kappa\,\frac{nLM_IB_I}{w}
+4\sqrt{nLM_IT}+2\beta|I|.
\]
Both estimates hold on the same event for every permitted endpoint, so taking their minimum also gives the stronger two-branch form
\[
\MCerr_{\calG}^{(1)}(I)\le10^6\sqrt\kappa\left[|I|w+\min\left\{M_I\sqrt{nL|I|},
\frac{nLM_IB_I}{w}\right\}\right]+4\sqrt{nLM_IT}+2\beta|I|.
\]

For the subsequent controller analysis, the same estimates also imply the horizon-based form
\[
\MCerr_{\calG}^{(1)}(I)\le10^6\sqrt\kappa\left[Tw+\min\left\{M_I\sqrt{nLT},\
\frac{nLM_IB_I}{w}+\sqrt{nLM_IT}\right\}\right]+2\beta|I|.
\]
Indeed, \(|I|\le T\) and \(M_I,\kappa\ge1\) bound the first estimate's lower-piece and sampling terms by \(4436\sqrt\kappa\,M_I\sqrt{nLT}\).
For the second estimate, \(98816<10^6\) and \(4\le10^6\sqrt\kappa\) give the displayed second branch.

\textbf{Step 2. Choose a comparison width.}
For a fixed comparison family through depth \(K\), let \(F\) be either \(G_K\sqrt{nL T}\) or \(\sqrt{nL G_KT}+nL J_TG_K\Lambda_K(300\sqrt\kappa)/w\).
By \Cref{lem:annular-packing}, \(M_I\le G_K\) and \(B_I\le J_T\Lambda_K(300\sqrt\kappa)\) at that depth.
Substituting these counts into the horizon-based bound from Step 1 gives \(\MCerr_{\calG}^{(1)}(I)\le10^6\sqrt\kappa(Tw+F)+2\beta|I|\) for either choice of \(F\).
At every coarser depth the profile counts and inverse-width term cannot increase.
Let \(x=\max(w,F/T)\).
If \(x>1\), the deterministic bound \(2T\) proves the assertion.
Otherwise choose the smallest dyadic resolution \(v\in\{w_0,\ldots,w_{K_{\max}-1}\}\) with \(v\ge x\), so \(v<2x\).

\textbf{Step 3. The algorithm never moves to a coarser scale.}
At this depth every prefix obeys
\[
\MCerr_{\calG}^{(1)}(I)\le10^6\sqrt{\kappa}(Tv+F)+1/T\le2\cdot10^6\sqrt{\kappa}\,Tv+1/T<3\cdot10^6\sqrt{\kappa}\,Tv.
\]
The strict inequality follows from \(Tv\ge Tw_{K_{\max}-1}>1/2\).
Thus the algorithm cannot cross this depth, even though it never computes \(F\).
Only the sets at scales below this comparison depth were used.

\textbf{Step 4. Sum the interval errors.}
At an earlier failed endpoint the residual is at most its threshold plus two, since one sampled residual per group changes by norm at most two per round.
The endpoint belongs to the old interval and no new interval is created at \(T\).
Visited widths double, so their sum is less than twice the final width, which is at most \(v\).
The triangle inequality across intervals gives
\[
\MCerr_{\calG}^{(1)}(T)\le6\cdot10^6\sqrt{\kappa}\,Tv+2K_{\max}
\le24\cdot10^6\sqrt\kappa(Tw+F)+2J_T+1.
\]
The argument holds for both choices on the same event, proving the minimum.
Substituting \(J_T=K_{\max}\) and using \(2J_T+1\le2K_{\max}+5\) gives the stated form of the bound.
\end{proof}

\subsection{Proofs of the packing corollaries}

\begin{proof}[Proof of Corollaries~\ref{cor:h-global}--\ref{cor:h-polynomial}]
For the global case, \(G_K\le AJ_Tw^{-s}\).
Balance at \(w_*=[AJ_T\sqrt{nL/T}]^{1/(s+1)}\).
For the global--local case, the inverse term is at most \(ABn^{1+q/2}LJ_T^2w^{-(s+1)}\), so balance at \(w_*=[ABn^{1+q/2}LJ_T^2/T]^{1/(s+2)}\).
Its geometric mean with \(Tw\) dominates the sampling term because \(Bn^{q/2}J_T\ge1\).
For polynomial local growth, diameter \(\sqrt2\) gives \(\GlobPack_H\le H_0(\sqrt2/w_j)^s\), and \(\Lambda_K\le H_0(300\sqrt\kappa)^s\).
The inverse coefficient becomes \(H_0^2nLJ_T^2(300\sqrt{2\kappa})^s\), and the same balance applies.
In all cases \(w_*\ge T^{-1/2}\).
If \(w_*\le1\), round upward to a dyadic resolution less than \(2w_*\), which cannot enlarge an inverse-width term.
Otherwise use \(2T\le2Tw_*\).
Writing \(Z=\log(e n|\barG|(T+1)/\delta)\), the definitions give \(\kappa\le8Z\), \(L\le40Z\), and \(J_T\le6Z\).
The three logarithmic exponents are respectively \(1/2+3/[2(s+1)]\), \(1/2+3/(s+2)\), and \((s+4)/(s+2)\), all at most two.
The constant base in the third bound has exponent \(s/(s+2)<1\).
These substitutions give exactly the three displayed rates, and absorb \(2J_T+1\) with a universal constant.
They include \(s=0\) and \(T=1\).
\end{proof}

\section{Choice of universal constants}
\label{app:universal-constants}

Recall \(K_{\max}=2+\lceil\log_2T\rceil\), \(w_j=2^{1-j}\), \(q_0=(1/n,\ldots,1/n)\), and \(A_{\max}=1+|\barG|T\).
The algorithm's smoothing weight and coordinate floor are \(\beta=1/(2T^2)\) and \(\epsilon=\beta/n\).
Set
\[
\kappa=2+\log(n/\beta),\qquad
\eta=\frac{\log(n/\beta)}{8\kappa},\qquad J_T=K_{\max},
\]
and
\[
L=2+\log\!\left(\frac{8n|\barG|A_{\max}^2(T+1)^2(K_{\max}-1)}{\delta}\right),
\qquad m_j=\frac{1024nL}{w_j^2}.
\]
All these quantities are fixed before play.
For \(Z=\log(e n|\barG|(T+1)/\delta)\), their definitions give \(\kappa\le8Z\), \(L\le40Z\), and \(J_T\le6Z\).
Set \(C_{\rm test}=3\cdot10^6\), giving the controller threshold \(C_{\rm test}\sqrt{\kappa}\,T w_{K-1}\).

The coordinate floor gives \(0\le\ell(a,y')\le\log(1/\epsilon)=\log(n/\beta)\), as required by the normalized comparison updates.

\end{document}